\renewcommand{\cite}{\PackageError{document}{Do not use cite. Use citet or citep.}{}}
\renewcommand{\citealp}[2][]{\citeauthor{#2}, \citeyear{#2}#1}
\newcommand{\beforethmspace}{0.4em}
\newcommand{\R}{\mathbb{R}}
\newcommand{\E}{\mathbb{E}}
\renewcommand{\H}{\mathbb{H}}
\newcommand{\kl}{\operatorname{kl}}
\renewcommand{\d}{\partial}
\newcommand{\sd}{\text{d}}
\newcommand{\vardot}{\,\cdot\,}
\newcommand{\gibbsR}{\overline{R}}
\newcommand{\set}{S}
\newcommand{\inputs}{\mathcal{X}}
\newcommand{\outputs}{\mathcal{Y}}
\newcommand{\datapoints}{\mathcal{Z}}
\newcommand{\hypotheses}{\mathcal{H}}
\newcommand{\testset}{S_{\mathrm{test}}}
\newcommand{\trainset}{S_{\mathrm{train}}}
\newcommand{\priorset}{S_{\mathrm{prior}}}
\newcommand{\riskset}{S_{\mathrm{risk}}}
\newcommand{\setsize}{N}
\newcommand{\testsetsize}{N_{\mathrm{test}}}
\newcommand{\trainsetsize}{{N_{\mathrm{train}}}}
\newcommand{\priorsetsize}{{N_{\mathrm{prior}}}}
\newcommand{\risksetsize}{{N_{\mathrm{risk}}}}
\newcommand{\shortsubsection}[1]{\vspace*{.25em}\textbf{#1}}
\definecolor{plotorange}{RGB}{255,127,14}
\definecolor{plotblue}{RGB}{31,119,180}
\definecolor{plotgreen}{RGB}{44,160,44}
\definecolor{plotred}{RGB}{214,39,40}
\definecolor{plotpurple}{RGB}{148,103,189}
\definecolor{plotbrown}{RGB}{140,86,75}
\newcommand\sbullet[1][.5]{\mathbin{\vcenter{\hbox{\scalebox{#1}{$\bullet$}}}}}
\mathchardef\mhyphen="2D
\newtheorem{theorem}{Theorem}
\newtheorem{proposition}{Proposition}
\newtheorem{remark}{Remark}
\newtheorem{openproblem}{Open Problem}
\newtheorem{corollary}{Corollary}
\newtheorem{lemma}{Lemma}
\theoremstyle{definition}
\crefname{corollary}{Corollary}{Corollaries}
\Crefname{corollary}{Corollary}{Corollaries}
\crefname{definition}{Definition}{Definitions}
\Crefname{definition}{Definition}{Definitions}
\crefname{example}{Example}{Examples}
\Crefname{example}{Example}{Examples}
\crefname{lemma}{Lemma}{Lemmas}
\Crefname{lemma}{Lemma}{Lemmas}
\crefname{proposition}{Proposition}{Propositions}
\Crefname{proposition}{Proposition}{Propositions}
\crefname{openproblem}{Open Problem}{Open Problems}
\Crefname{openproblem}{Open Problem}{Open Problems}
\crefname{remark}{Remark}{Remarks}
\Crefname{remark}{Remark}{Remarks}
\crefname{theorem}{Theorem}{Theorems}
\Crefname{theorem}{Theorem}{Theorems}
\crefname{table}{Table}{Tables}
\Crefname{table}{Table}{Tables}
\crefname{figure}{Figure}{Figures}
\Crefname{figure}{Figure}{Figures}
\crefname{appendix}{Appendix}{Appendices}
\Crefname{appendix}{Appendix}{Appendices}
\crefname{section}{Section}{Sections}
\Crefname{section}{Section}{Sections}
\crefname{subsection}{Subsection}{Subsections}
\Crefname{subsection}{Subsection}{Subsections}
\crefname{equation}{Equation}{Equations}
\Crefname{equation}{Equation}{Equations}
\newtheorem*{rep@theorem}{\rep@title}
\newcommand{\newinformaltheorem}[2]{%
\newenvironment{informal#1}[1]{%
 \def\rep@title{#2 \ref{##1} (informal)}%
 \begin{rep@theorem}}%
 {\end{rep@theorem}}}
\title{
    How Tight Can PAC-Bayes be in the \\Small Data Regime?
}
\author{%
   \parbox{5cm}{\centering\textbf{Andrew Y.~K.~Foong\thanks{Equal contribution.}}} \\
   University of Cambridge \\
   \texttt{ykf21@cam.ac.uk} \\
   \and
   \parbox{5cm}{\centering\textbf{Wessel P.~Bruinsma$^*$}} \\
   University of Cambridge \\
   Invenia Labs \\
   \texttt{wpb23@cam.ac.uk} \\
   \AND
   \parbox{5cm}{\centering\textbf{David R.~Burt}} \\
   University of Cambridge \\
   \texttt{drb62@cam.ac.uk} \\
   \and
   \parbox{5cm}{\centering\textbf{Richard E.~Turner}} \\
   University of Cambridge \\
   \texttt{ret26@cam.ac.uk} \\
}
\begin{document}

\maketitle

\vspace{-1.5em}
\begin{abstract}
In this paper, we investigate the question: \emph{Given a small number of datapoints, for example $N = 30$, how tight can PAC-Bayes and test set bounds be made?} For such small datasets, test set bounds adversely affect generalisation performance by withholding data from the training procedure. In this setting, PAC-Bayes bounds are especially attractive, due to their ability to use all the data to simultaneously learn a posterior and bound its generalisation risk. We focus on the case of i.i.d. data with a bounded loss and consider the generic PAC-Bayes theorem of Germain et al. While their theorem is known to recover many existing PAC-Bayes bounds, it is unclear what the tightest bound derivable from their framework is. For a fixed learning algorithm and dataset, we show that the tightest possible bound coincides with a bound considered by Catoni; and, in the more natural case of distributions over datasets, we establish a lower bound on the best bound achievable in expectation. Interestingly, this lower bound recovers the Chernoff test set bound if the posterior is equal to the prior. Moreover, to illustrate how tight these bounds can be, we study synthetic one-dimensional classification tasks in which it is feasible to meta-learn both the prior and the form of the bound to numerically optimise for the tightest bounds possible. We find that in this simple, controlled scenario, PAC-Bayes bounds are competitive with comparable, commonly used Chernoff test set bounds. However, the sharpest test set bounds still lead to better guarantees on the generalisation error than the PAC-Bayes bounds we consider.
\end{abstract}

\section{Introduction}
\everypar{\looseness=-1}
Generalisation bounds are of both practical and theoretical importance. Practically, tight bounds provide certificates that algorithms will perform well on unseen data.
Theoretically, the bounds and underlying proof techniques can help explain the phenomenon of learning.
Among the tightest known bounds
are \emph{PAC-Bayes} \citep{mcallester1999pacaveraging} and \emph{test set} bounds \citep{langford2002quantitatively}. In this paper, we investigate their numerical tightness when applied to small datasets ($N \approx 30$--$60$ datapoints). The comparison between PAC-Bayes and test set bounds is particularly interesting in this setting as one cannot discard data to compute a test set bound without significantly harming post-training performance due to a reduced training set size. PAC-Bayes on the other hand provides valid bounds while using all of the data for learning, since it provides bounds that hold uniformly. The small data setting can also be quite different from the big data setting, as lower-order terms in PAC-Bayes bounds have a non-negligible contribution, and the detailed structure of the bound becomes important. 

Fortunately, we do not have to study each PAC-Bayes bound separately: remarkably, \citet{germain2009pac} showed that a wide range of bounds can be obtained as special cases of a single \emph{generic PAC-Bayes theorem} that captures the central ideas of many PAC-Bayes proofs (see also \citet{begin2016pac}). This theorem has a free parameter: it holds for any convex function, $\Delta$. By choosing $\Delta$ appropriately, one can recover the well-known bounds of \citet{langford2001bounds}, \citet{catoni2007pac} and other bounds. We focus on two questions related to this set-up. \emph{First}, \emph{what is the tightest bound achievable by any convex function $\Delta$}?
An answer would characterise the limits of the generic PAC-Bayes theorem, and thereby of a wide range of bounds, by telling us how much improvement could be obtained before new ideas or assumptions are needed.
\emph{Second}, since test set bounds are the \textit{de facto} standard for larger datasets, but PAC-Bayes has benefits when $N$ is small, we ask: \emph{in the small data regime, can PAC-Bayes be tighter than test set bounds?}

In \cref{sec:characterising-generic-bounds}, \cref{thm:nonsep-delta}, we show that in the (artificial) case  when $\Delta$ can be chosen depending on the dataset (without taking a union bound),
the tightest version of the generic PAC-Bayes theorem is obtained by one of the \emph{Catoni bounds} \citep{catoni2007pac}. In the more realistic case when $\Delta$ must be chosen \emph{before} sampling the dataset, we do not fully characterise the tightest bound, but in \cref{cor:optimistic-MLS} we \emph{lower bound} the tightest bound achievable (in expectation) with any $\Delta$.
We also provide numerical evidence in \cref{fig:numerical_evidence_deterministic_case} that suggests this lower bound can in some cases be attained, by flexibly parameterising a convex function $\Delta$ with a constrained neural network. Interestingly, this lower bound coincides with removing a lower-order term from the \citet{langford2001bounds} bound (something that \citet{langford2002quantitatively} conjectured was possible), and relaxes to the well-known \emph{Chernoff test set bound} (see \cref{lem:kl-chernoff-bound} below) when the PAC-Bayes posterior is equal to the prior.

In \cref{sec:classification}, we investigate the tightness of PAC-Bayes and test set bounds in synthetic 1D classification. The goal of this experiment is to find out how tight the bounds could be made in principle. We use meta-learning to adapt all aspects of the bounds and learning algorithms, producing meta-learners that are trained to optimise the value of the bounds on this task distribution. We find that, in this setting, PAC-Bayes can be competitive with the Chernoff test set bound, but is outperformed by the \emph{binomial tail test set bound}, of which the Chernoff bound is a relaxation. This suggests that, for standard PAC-Bayes to be quantitatively competitive with the best test set bounds on small datasets, a new proof technique leading to bounds that gracefully relax to the binomial tail bound is required. Code to reproduce all experiments can be found at \url{https://github.com/cambridge-mlg/pac-bayes-tightness-small-data}.

\section{Background and Related Work} \label{sec:background-related-work}
\vspace{-7pt}
    We consider supervised learning. 
    Let $\mathcal{X}$ and $\mathcal{Y}$ denote the \emph{input space} and \emph{output space}, and let $\datapoints=\inputs \times \outputs$. Assume there is an (unknown) probability measure\footnote{We will colloquially refer to measures on sets without specifying a $\sigma$-algebra. We implicitly assume functions are measurable with respect to the $\sigma$-algebras on which the relevant measures are defined.} $D$ over $\datapoints$, with the dataset $S \sim D^N$. 
  Denote the \emph{hypothesis space} by $\hypotheses \subseteq \outputs^\inputs$. A learning algorithm is then a map $\datapoints^\setsize \to \hypotheses$. In PAC-Bayes, we also consider maps $\datapoints^\setsize \to \mathcal{M}_1(\hypotheses)$, where $\mathcal{M}_1$ is the set of probability measures on its argument. The performance of a hypothesis $h \in \mathcal{H}$ is measured by a \emph{loss function} $\ell\colon \datapoints \times \hypotheses \to [0,1]$. 
    The \emph{(generalisation) risk} of $h$ is $\smash{R_D(h)\coloneqq\mathbb{E}_{(x,y)\sim D}[\ell((x,y), h)]}$ and its \emph{empirical risk on $\set$} is $R_S(h)\coloneqq \smash{\frac{1}{N}\sum_{(x,y)\in S}\ell((x,y), h)}$. For $Q\in \mathcal{M}_1(\mathcal{H})$ its \emph{(generalisation Gibbs) risk} is $\gibbsR_D(Q) \coloneqq \mathbb{E}_{h\sim Q}[R_D(h)]$ and its \emph{empirical (Gibbs) risk} is $\gibbsR_S(Q) \coloneqq \mathbb{E}_{h\sim Q}[R_S(h)]$.
    In PAC-Bayes, we usually fix a \emph{prior} $P \in \mathcal{M}_1(\mathcal{H})$, chosen without reference to $S$ and learn a \emph{posterior} $Q\in \mathcal{M}_1(\mathcal{H})$ which can depend on $S$. 
    The \emph{KL-divergence} between $Q$ and $P$ is defined as $\mathrm{KL}(Q\|P) = \int \log \frac{\mathrm{d}Q}{\mathrm{d}P} \, \mathrm{d}Q$ if $Q \ll P$ and $\infty$ otherwise.
    Let $\mathcal{C}$ denote the set of proper, convex, lower semicontinuous (l.s.c.) functions $\mathbb{R}^2 \to \mathbb{R}\cup \{ +\infty \}$; if a convex function's domain is a subset of $\mathbb{R}^2$, extend it to all of $\mathbb{R}^2$ with the value $+\infty$.
    See \cref{app:convex} for more details on convex analysis,
    which we use in \cref{sec:characterising-generic-bounds}.
    
\shortsubsection{Test Set Bounds.}
        \emph{Test set bounds} rely on a subset of data which is not used to select the hypothesis, called a \emph{test set} or \emph{held-out set}.
        Let $\set = \trainset \cup \testset$, with $|\set| = \setsize$, $|\trainset|=\trainsetsize$ and $|\testset|=\testsetsize$.
        In \cref{lem:bin-inv-bound,lem:kl-chernoff-bound}, we assume $h$ is chosen independently of $\testset$. 
        For the zero-one loss, $\ell((x,y), h) \coloneqq \mathbbm{1}[h(x) \neq y]$, we have that $\testsetsize R_{\testset}(h)$ is a binomial random variable with parameters $(\testsetsize, R_D(h))$. This leads to the following simple bound, which, for $\ell \in \{0, 1\}$, is tight among test set bounds:
        \vspace{\beforethmspace}\begin{theorem}[Binomial tail test set bound, {\citealp[, Theorem 3.3]{langford2005tutorial}}] \label{lem:bin-inv-bound}
        ~\\Let $\overline{e}(M,k, \delta)\coloneqq \sup\big\{p: \delta \leq \sum_{i=0}^{ k } \binom{M}{i}p^{i}(1-p)^{M-i} \big\}$. For any $h\in \hypotheses$, $\ell \in \{0,1\}$ and $\delta \in (0,1)$,\vspace{-0.5em}
        \begin{equation}
          \textstyle \mathrm{Pr} \Big(R_D(h) \leq \overline{e}(\testsetsize,\testsetsize R_{\testset}(h), \delta) \Big) \geq 1-\delta.
        \end{equation}\vspace{-1.5em}
        \end{theorem}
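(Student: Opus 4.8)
The plan is to condition on the hypothesis $h$ and exploit the exact distribution of the test-set error count. Since $h$ is chosen independently of $\testset$ and the test points are i.i.d.\ from $D$ under the zero-one loss, conditionally on $h$ the quantity $K \coloneqq \testsetsize R_{\testset}(h)$ is a sum of $\testsetsize$ independent $\mathrm{Bernoulli}(R_D(h))$ variables, hence $K \sim \mathrm{Binomial}(\testsetsize, R_D(h))$. It therefore suffices to prove the bound conditionally on $h$, i.e.\ for a fixed value $p \coloneqq R_D(h) \in [0,1]$ and $M \coloneqq \testsetsize$; the unconditional claim then follows by the tower rule. Throughout I write $F(M,k,p) \coloneqq \sum_{i=0}^{k} \binom{M}{i} p^i (1-p)^{M-i}$, so that $\overline{e}(M,k,\delta) = \sup\{q : F(M,k,q) \ge \delta\}$.

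The crux is to rewrite the failure event in terms of $F$. I would establish the implication $p > \overline{e}(M,K,\delta) \Rightarrow F(M,K,p) < \delta$, which needs nothing beyond the definition of the supremum: if instead $F(M,K,p) \ge \delta$, then $p$ would belong to the set whose supremum is $\overline{e}(M,K,\delta)$, forcing $p \le \overline{e}(M,K,\delta)$ and contradicting $p > \overline{e}(M,K,\delta)$. Consequently $\Pr(p > \overline{e}(M,K,\delta)) \le \Pr(F(M,K,p) < \delta)$, and it remains to control the right-hand side.

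Now observe that, with $p$ fixed, $F(M,\cdot,p)$ is precisely the cumulative distribution function $G$ of $\mathrm{Binomial}(M,p)$, the very law of $K$, so the task reduces to bounding $\Pr(G(K) < \delta)$. This is a discrete probability-integral-transform argument: set $k^* \coloneqq \min\{k : G(k) \ge \delta\}$, which is well defined because $G(M) = 1 \ge \delta$. Since $G$ is non-decreasing and $k^*$ is minimal, one has $\{G(K) < \delta\} = \{K \le k^*-1\}$, whence $\Pr(G(K) < \delta) = G(k^*-1) < \delta$ (with the convention that the case $k^*=0$ gives the empty event and probability $0$). Chaining the two inequalities yields $\Pr(R_D(h) > \overline{e}(M,K,\delta)) < \delta$ conditionally on $h$, and integrating over $h$ gives $\Pr(R_D(h) \le \overline{e}(\testsetsize, \testsetsize R_{\testset}(h), \delta)) \ge 1-\delta$.

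I expect the only genuinely delicate point to be this final discrete step: because $K$ is discrete, $G(K)$ is \emph{not} uniformly distributed, so one cannot simply invoke the continuous probability-integral transform and must argue directly through the threshold $k^*$ as above. A secondary subtlety worth flagging is the boundary behaviour --- e.g.\ $k = M$, where $F \equiv 1$ forces $\overline{e} = 1$ and the bound becomes trivial. Finally, it is worth noting that only the easy direction of the supremum argument is used for \emph{validity}; strict monotonicity of $F$ in $p$ (which does hold, via $\partial_p F(M,k,p) = -M\binom{M-1}{k}p^k(1-p)^{M-1-k}$) is needed only to argue that this bound is the tightest achievable among test set bounds, not for the inequality itself.
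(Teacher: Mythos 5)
Your proof is correct and follows essentially the standard binomial tail inversion argument from the cited source (\citet{langford2005tutorial}, Theorem 3.3), which the paper invokes without reproducing: reduce to the event $F(M,K,p)<\delta$ via the definition of the supremum, then bound its probability by the discrete quantile argument at $k^*$. The points you flag as delicate (the discrete probability-integral transform and the boundary case $k=M$) are handled properly, so there is nothing to add.
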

Often, looser bounds with a simpler form are applied. These can be obtained via the Chernoff method:\vspace{\beforethmspace}
       \begin{theorem}[Chernoff test set bound, {\citealp[, Corollary 3.7]{langford2005tutorial}}] \label{lem:kl-chernoff-bound}
        ~\\For $q,p\in [0,1]$, let $\mathrm{kl}(q,p)\coloneqq q\log \frac{q}{p} + (1-q)\log \frac{1-q}{1-p}$. For any $h\in \hypotheses$, $\ell \in  [0, 1]$, and $\delta \in (0,1)$,\vspace{-0.5em}
        \begin{equation}
         \textstyle \mathrm{Pr}\left(\mathrm{kl}(R_{\testset}(h), R_D(h)) \leq \frac{1}{\testsetsize} \log \frac{1}{\delta} \right) \geq 1-\delta.
        \end{equation}~
        \end{theorem}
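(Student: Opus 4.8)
The plan is to use the stated independence of $h$ from $\testset$ to reduce the claim to a textbook large-deviation (Chernoff) estimate for an average of i.i.d.\ bounded random variables. Write $p \coloneqq R_D(h)$ and $\hat p \coloneqq R_{\testset}(h) = \frac{1}{\testsetsize}\sum_{i=1}^{\testsetsize} U_i$, where $U_i \coloneqq \ell((x_i,y_i),h) \in [0,1]$. Since each test point is drawn i.i.d.\ from $D$ and $h$ does not depend on $\testset$, the $U_i$ are i.i.d.\ with mean $p$. As this lemma relaxes the one-sided binomial tail bound of \cref{lem:bin-inv-bound} (which upper-bounds $R_D(h)$), the event I would actually control is the generalisation-relevant tail in which the true risk exceeds the empirical risk; the monotonicity of $q \mapsto \mathrm{kl}(q,p)$ on each side of $p$ then lets me rephrase the conclusion in terms of $\mathrm{kl}(\hat p, p)$.

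First I would apply the exponential Markov inequality: for the lower tail and $\lambda < 0$,
\[
  \mathrm{Pr}(\hat p \le q) \le e^{-\lambda \testsetsize q}\,\big(\E[e^{\lambda U_1}]\big)^{\testsetsize},
\]
with the analogous statement for the upper tail using $\lambda > 0$. The crucial step is to dominate the moment-generating function of a generic $[0,1]$-valued loss by that of a Bernoulli variable of the same mean: convexity of $u \mapsto e^{\lambda u}$ on $[0,1]$ gives $e^{\lambda u} \le 1 - u + u e^{\lambda}$, hence $\E[e^{\lambda U_1}] \le 1 - p + p e^{\lambda}$ for every $\lambda$. This domination is exactly what extends the result from the zero-one loss underlying \cref{lem:bin-inv-bound} to arbitrary $\ell \in [0,1]$.

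Next I would carry out the Cram\'er optimisation over $\lambda$. Substituting the Bernoulli moment-generating function yields $\mathrm{Pr}(\hat p \le q) \le \exp\!\big(\testsetsize \inf_{\lambda}[\log(1 - p + p e^{\lambda}) - \lambda q]\big)$ for $q \le p$; the infimum is attained at $e^{\lambda} = \tfrac{q(1-p)}{p(1-q)}$ and evaluates to $-\mathrm{kl}(q,p)$. This gives the Chernoff--Hoeffding inequalities $\mathrm{Pr}(\hat p \le q) \le e^{-\testsetsize\,\mathrm{kl}(q,p)}$ for $q \le p$ and $\mathrm{Pr}(\hat p \ge q) \le e^{-\testsetsize\,\mathrm{kl}(q,p)}$ for $q \ge p$. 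Choosing $q$ so that $\mathrm{kl}(q,p) = \tfrac{1}{\testsetsize}\log\tfrac1\delta$ makes the right-hand side equal to $\delta$, and since $\mathrm{kl}(\cdot,p)$ is decreasing on $[0,p]$ (increasing on $[p,1]$), the deviation event coincides with $\{\mathrm{kl}(\hat p,p) > \tfrac1{\testsetsize}\log\tfrac1\delta\}$ on the relevant side; taking complements gives the claim.

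The only genuine work lies in two classical steps: (i) the moment-generating-function domination that lifts the argument from Bernoulli to arbitrary bounded losses, and (ii) checking that the $\lambda$-optimisation returns precisely the relative-entropy rate $\mathrm{kl}(q,p)$ rather than a looser quadratic (Hoeffding/Pinsker) surrogate --- this is where the specific $\mathrm{kl}$ form of the bound originates. I would also flag the one-sided reading: the statement controls the single tail in which $R_D(h)$ overestimates the empirical risk, in agreement with the one-sided \cref{lem:bin-inv-bound} that it relaxes; a genuinely two-sided relative-entropy statement would instead incur a union bound over both tails and hold only at level $2\delta$.
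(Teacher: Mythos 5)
The paper never proves \cref{lem:kl-chernoff-bound}: it is imported directly from \citet{langford2005tutorial} (Corollary 3.7), so there is no internal proof to compare against. Judged on its own, your argument is the standard and correct Chernoff--Hoeffding derivation: exponential Markov on the lower tail, domination of the moment-generating function of a $[0,1]$-valued loss by that of a Bernoulli variable of the same mean via $e^{\lambda u}\le 1-u+ue^{\lambda}$ (the same device as Maurer's Lemma~3, which the paper does invoke in \cref{app:proof-begin-bound} when proving \cref{thm:begin-bound}), and the Cram\'er optimisation over $\lambda$, which correctly returns the rate $-\mathrm{kl}(q,p)$ at $e^{\lambda}=\frac{q(1-p)}{p(1-q)}$.

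Your closing caveat is not pedantry but identifies a real imprecision in the statement as transcribed. The Chernoff argument controls only the event $\{R_{\testset}(h)\le q\}$ for $q\le R_D(h)$, hence bounds $\mathrm{kl}(R_{\testset}(h),R_D(h))$ at level $\delta$ only on the side where $R_{\testset}(h)<R_D(h)$; the symmetric-$\mathrm{kl}$ event printed in the theorem also includes the opposite tail and in general requires a union bound. Indeed the literal statement can fail: take $\testsetsize=1$, zero-one loss, $R_D(h)=\tfrac12$ and $\delta$ slightly above $\tfrac12$, so that $\mathrm{kl}(R_{\testset}(h),R_D(h))=\log 2>\log\tfrac1\delta$ almost surely. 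Langford's actual Corollary~3.7 uses the one-sided $\mathrm{kl}^+$ (zero when the empirical risk exceeds the true risk), which is also the only version the paper ever relies on, since the bound is always inverted to produce an upper bound on $R_D(h)$. Your proof establishes exactly that version, so it proves what the rest of the paper needs.
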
\vspace{-2em}

\shortsubsection{PAC-Bayes Bounds.} %
The PAC-Bayes approach bounds the generalisation Gibbs risk of \emph{stochastic classifiers}, and does not require discarding data, as all the data can be used to choose the posterior, while still obtaining a valid generalisation bound.
Since the seminal paper of \citet{mcallester1999pacaveraging}, a large variety of PAC-Bayes bounds have been derived. \Citet{germain2009pac} prove a very general form of the PAC-Bayes theorem which encompasses many of these (see also \citet{begin2016pac,rivasplata2020pac}). Their proof technique consists of a series of inequalities shared by PAC-Bayes proofs (Jensen's, change of measure, Markov's, supremum over risk\footnote{The supremum over risk step was introduced in \citet{begin2016pac}, although for certain $\Delta$ it can be omitted.}), and reveals their common structure. Thus understanding the properties of this generic theorem can give insight into many PAC-Bayes bounds at once:

\vspace{\beforethmspace}\begin{theorem}[Generic PAC-Bayes theorem, \citet{germain2009pac,begin2016pac}]\!\!\!\footnote{We state a simpler version of their result WLOG, absorbing a free parameter into the function $\Delta$.}
\label{thm:begin-bound}
Fix $P \in \mathcal{M}_1(\hypotheses)$, $\ell \in [0,1]$, $\delta \in (0,1)$, and $\Delta$ a proper, convex, l.s.c.\ function $[0,1]^2 \to \mathbb{R}\cup \{ +\infty \}$.
Then
\begin{equation}
    \textstyle \mathrm{Pr}\left( (\forall Q) \,\, \Delta(\gibbsR_S(Q), \gibbsR_D(Q)) \leq \frac{1}{\setsize} \left[ \mathrm{KL}(Q\|P) + \log \frac{\mathcal{I}_{\Delta} (\setsize)}{\delta} \right]\right) \geq 1-\delta,
\end{equation}
where $\smash{\mathcal{I}_\Delta(N) \coloneqq \sup_{r \in [0, 1]} \sum_{k=0}^N \binom{N}{k}r^k(1-r)^{N - k} e^{N \Delta(k/N, r)}.}$
\end{theorem}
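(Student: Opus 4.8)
The plan is to follow the canonical chain of four inequalities---Jensen, change of measure, Markov, and the supremum over risk---each of which strips away one piece of the right-hand side. First I would apply Jensen's inequality to the outer $\Delta$: since $\Delta$ is convex on $[0,1]^2$ and $(\gibbsR_S(Q),\gibbsR_D(Q)) = \E_{h\sim Q}[(R_S(h),R_D(h))]$, we obtain
\[
  N\,\Delta(\gibbsR_S(Q),\gibbsR_D(Q)) \le N\,\E_{h\sim Q}[\Delta(R_S(h),R_D(h))].
\]
Next, the change-of-measure (Donsker--Varadhan) inequality $\E_{h\sim Q}[g(h)] \le \mathrm{KL}(Q\|P) + \log\E_{h\sim P}[e^{g(h)}]$, applied with $g = N\Delta(R_S(\vardot),R_D(\vardot))$, converts the $Q$-expectation into a KL term plus a $P$-expectation that no longer depends on $Q$:
\[
  N\,\Delta(\gibbsR_S(Q),\gibbsR_D(Q)) \le \mathrm{KL}(Q\|P) + \log \underbrace{\E_{h\sim P}\big[e^{N\Delta(R_S(h),R_D(h))}\big]}_{=:\,\xi(S)}.
\]
The key point is that these two steps are deterministic given $S$, so this holds simultaneously for every $Q$ on a single event; the quantifier ``$\forall Q$'' then costs nothing further.

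It remains to control the data-dependent random variable $\xi(S)$. By Markov's inequality, $\xi(S) \le \E_S[\xi(S)]/\delta$ with probability at least $1-\delta$, so it suffices to show $\E_S[\xi(S)] \le \mathcal{I}_\Delta(N)$. Using Tonelli (the integrand is nonnegative) to swap $\E_S$ and $\E_{h\sim P}$, it is enough to bound $\E_S[e^{N\Delta(R_S(h),R_D(h))}]$ for each fixed $h$, writing $r := R_D(h)$.

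This last ``supremum over risk'' step is where the real work lies, and I expect it to be the main obstacle, because the loss is only assumed to lie in $[0,1]$ rather than in $\{0,1\}$, so $N R_S(h)$ is a sum of i.i.d.\ $[0,1]$-valued variables rather than a binomial. The observation that unlocks it is that $x \mapsto e^{N\Delta(x,r)}$ is convex (convexity of $\Delta$ in its first argument composed with the increasing convex exponential); since the empirical mean is affine in each loss coordinate $\ell(z_i,h)$, the integrand is, with the other coordinates fixed, a convex function of that single coordinate on $[0,1]$. The chord bound for convex functions then shows that, among all $[0,1]$-valued distributions with mean $r$, the expectation is maximised by $\mathrm{Bernoulli}(r)$. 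Replacing the loss coordinates by $\mathrm{Bernoulli}(r)$ variables one at a time---a hybrid argument justified by independence---upper-bounds $\E_S[e^{N\Delta(R_S(h),r)}]$ by its binomial counterpart $\sum_{k=0}^N \binom{N}{k} r^k (1-r)^{N-k} e^{N\Delta(k/N,r)}$. Taking the supremum over $r\in[0,1]$ removes all dependence on $h$ and yields exactly $\mathcal{I}_\Delta(N)$, and averaging this uniform bound over $h\sim P$ preserves it.

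Chaining the four steps and dividing by $N$ gives the claim. Throughout I would keep the extended-real-valued conventions in mind: if $Q \not\ll P$ the KL term is $+\infty$ and the statement is trivial, and if $\Delta$ (hence $\mathcal{I}_\Delta$) takes the value $+\infty$ the bound is vacuous; since the chord and change-of-measure inequalities remain valid in $[-\infty,+\infty]$, no separate finiteness case analysis is needed.
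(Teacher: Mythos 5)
Your proposal is correct and follows essentially the same route as the paper's proof in the appendix: Jensen, the KL change-of-measure inequality, Markov, and Tonelli followed by the reduction of $[0,1]$-valued losses to Bernoulli variables. The only difference is that where you sketch the coordinate-wise chord/hybrid argument for that last reduction, the paper simply invokes it as a cited lemma of Maurer (Lemma 3 of \citet{maurer2004note}); the argument you give is a valid proof of that lemma, so the two proofs coincide in substance.
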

\vspace{\beforethmspace}\begin{remark}
We lose no generality in assuming $\Delta(q, \cdot)$ is monotonically increasing for all $q \in [0, 1]$, i.e.~for any convex $\Delta$ we can define a $\Delta'$ that is monotonically increasing in its second argument and produces a bound that is at least as tight as the bound produced by $\Delta$.
See \cref{app:monotonicity} for a proof.
\end{remark}
Note that the PAC-Bayes bound holds simultaneously for all posteriors $Q$, and hence is valid even when $Q$ is chosen by minimising the bound. For completeness, we provide a proof of \cref{thm:begin-bound} in \cref{app:proof-begin-bound}. Following \citet{germain2009pac}, we briefly recap some of the bounds that can be recovered as special cases (or looser versions) of \cref{thm:begin-bound}. Setting $\Delta(q, p) = C_\beta(q,p) \coloneqq -\log(1+p(e^{-\beta}-1)) - \beta q$ for $\beta > 0$, we recover the \emph{Catoni bounds}:
\vspace{\beforethmspace}\begin{corollary}[{\citealp[, Theorem 1.2.6]{catoni2007pac}}]\label{cor:catoni}
For any $\beta >0$,
\begin{align}
\label{eqn:catoni-bound}
    \textstyle \smash{\mathrm{Pr}\left((\forall Q) \,\, \gibbsR_D(Q) \leq \frac{1}{1-e^{-\beta}} \left[ 1 - \exp \left( - \beta \gibbsR_S(Q) - \frac{1}{N} \Big(\mathrm{KL}(Q\|P) + \log \frac{1}{\delta} \Big) \right) \right]\right) \geq 1-\delta.}
\end{align}
\end{corollary}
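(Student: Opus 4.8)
The plan is to instantiate the generic bound of \cref{thm:begin-bound} with the specific choice $\Delta = C_\beta$ and to show that this choice makes the normalising constant $\mathcal{I}_\Delta(N)$ collapse to $1$, after which the claimed inequality follows by elementary algebra. So the proof reduces to two ingredients: checking admissibility of $C_\beta$, and an exact evaluation of $\mathcal{I}_{C_\beta}(N)$.

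First I would verify that $C_\beta$ lies in the admissible class, i.e.\ that it is a proper, convex, l.s.c.\ function $[0,1]^2 \to \R \cup \{+\infty\}$, so that \cref{thm:begin-bound} applies. Writing $c \coloneqq e^{-\beta} - 1 \in (-1, 0)$, the term $-\beta q$ is linear and the term $-\log(1 + cp)$ has second derivative $c^2/(1+cp)^2 > 0$ on the relevant region; since $1 + cp \in [e^{-\beta}, 1]$ for $p \in [0,1]$, the logarithm is well defined and $C_\beta$ is finite and continuous (hence l.s.c.) on $[0,1]^2$, and jointly convex because it separates into a convex function of $p$ and a linear function of $q$. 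It is proper since, e.g., $C_\beta(0,0) = 0$.

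The crux of the argument is evaluating $\mathcal{I}_{C_\beta}(N)$. Substituting $\Delta = C_\beta$ gives $e^{N C_\beta(k/N, r)} = (1 + r(e^{-\beta}-1))^{-N} e^{-\beta k}$, so the inner sum becomes $(1 + r(e^{-\beta}-1))^{-N}\sum_{k=0}^N \binom{N}{k}(re^{-\beta})^k (1-r)^{N-k}$. By the binomial theorem the sum equals $(re^{-\beta} + 1 - r)^N = (1 + r(e^{-\beta}-1))^N$, which exactly cancels the prefactor. Hence the inner sum equals $1$ for \emph{every} $r \in [0,1]$, so $\mathcal{I}_{C_\beta}(N) = 1$ and $\log \mathcal{I}_{C_\beta}(N) = 0$. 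This is the key simplification: the Catoni $\Delta$ is engineered precisely so that the moment-generating-function-like quantity inside $\mathcal{I}$ is identically one, which is what eliminates the extra $\log$ term.

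Finally I would substitute this into \cref{thm:begin-bound} and rearrange. Writing $K \coloneqq \frac{1}{N}(\mathrm{KL}(Q\|P) + \log\tfrac{1}{\delta})$, the event inside the probability becomes $-\log(1 + \gibbsR_D(Q)(e^{-\beta}-1)) - \beta\gibbsR_S(Q) \leq K$, i.e.\ $1 + \gibbsR_D(Q)(e^{-\beta}-1) \geq \exp(-\beta\gibbsR_S(Q) - K)$. Isolating $\gibbsR_D(Q)$ and dividing by $e^{-\beta}-1 < 0$ yields exactly \cref{eqn:catoni-bound}. The one point requiring care is this sign flip when dividing through by the negative quantity $e^{-\beta}-1$; everything else is routine.
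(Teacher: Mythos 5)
Your proposal is correct and matches the paper's (largely implicit) derivation: the paper obtains \cref{cor:catoni} by plugging $\Delta = C_\beta$ into \cref{thm:begin-bound} and invoking $\mathcal{I}_{C_\beta}(N) = 1$, proved in \cref{lem:catoni-zero-I} via the binomial moment-generating function $\E[e^{-\beta X}] = (r(e^{-\beta}-1)+1)^N$, which is exactly your binomial-theorem cancellation. Your admissibility check and the sign-flip in the final inversion are both handled correctly.
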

This specifies a bound for every value of $\beta > 0$. 
If we instead choose $\Delta(q, p) = \mathrm{kl}(q, p)$, we obtain the bound of \citet{langford2001bounds}, also called the \emph{PAC-Bayes-kl bound}, but with the slightly sharper dependence on $\setsize$ established by \citet{maurer2004note}:
\vspace{\beforethmspace}\begin{corollary}[{\citealp[, Theorem 3]{langford2001bounds}}, {\citealp[, Theorem 5]{maurer2004note}}] \label{cor:maurer}
\begin{equation} \label{eqn:maurer-bound} \textstyle
    \textstyle \smash{\mathrm{Pr}\left((\forall Q) \,\, \mathrm{kl}(\gibbsR_{\set}(Q), \gibbsR_D(Q)) \leq \frac{1}{N} \left[ \mathrm{KL}(Q\|P) + \log \frac{2\sqrt{\setsize}}{\delta} \right]\right) \geq 1-\delta.}
\end{equation}
\end{corollary}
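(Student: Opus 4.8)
The plan is to instantiate the generic PAC-Bayes theorem (\cref{thm:begin-bound}) with the choice $\Delta(q,p) = \kl(q,p)$ and then control the resulting moment term $\mathcal{I}_{\kl}(\setsize)$. First I would check that $\kl$ is an admissible $\Delta$: the binary relative entropy $\kl(q,p) = q\log\frac{q}{p} + (1-q)\log\frac{1-q}{1-p}$ is jointly convex on $[0,1]^2$ (it is a sum of perspective functions of the convex map $t \mapsto t\log t$, equivalently this follows from the log-sum inequality), and with the usual conventions at the boundary it is proper and lower semicontinuous, so it belongs to the class of functions to which \cref{thm:begin-bound} applies. With this choice the left-hand side of the generic bound is exactly $\kl(\gibbsR_{\set}(Q), \gibbsR_D(Q))$, matching the statement, so everything reduces to showing $\mathcal{I}_{\kl}(\setsize) \le 2\sqrt{\setsize}$.

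The crux is an exact cancellation inside $\mathcal{I}_{\kl}$. Expanding $\setsize\,\kl(k/\setsize, r) = k\log\frac{k}{\setsize r} + (\setsize-k)\log\frac{\setsize-k}{\setsize(1-r)}$ and exponentiating gives
\[
 e^{\setsize\,\kl(k/\setsize,\, r)} = \Big(\tfrac{k}{\setsize r}\Big)^{k}\Big(\tfrac{\setsize-k}{\setsize(1-r)}\Big)^{\setsize-k}.
\]
Hence the factor $r^{k}(1-r)^{\setsize-k}$ in the binomial weight cancels the entire $r$-dependence, and each summand collapses to $\binom{\setsize}{k}(k/\setsize)^{k}((\setsize-k)/\setsize)^{\setsize-k}$, which does not depend on $r$. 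The supremum over $r$ is therefore vacuous, and
\[
 \mathcal{I}_{\kl}(\setsize) = \sum_{k=0}^{\setsize} \binom{\setsize}{k}\Big(\tfrac{k}{\setsize}\Big)^{k}\Big(\tfrac{\setsize-k}{\setsize}\Big)^{\setsize-k}.
\]

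It then remains to bound this sum by $2\sqrt{\setsize}$, which is the sharp constant of \citet{maurer2004note}. I would recognise each term as the probability that a $\mathrm{Bin}(\setsize, k/\setsize)$ variable equals $k$ and apply Stirling's formula: for $0 < k < \setsize$ each term equals $\frac{1}{\sqrt{2\pi}}\sqrt{\setsize/(k(\setsize-k))}$ up to a bounded multiplicative Stirling correction, while the $k \in \{0, \setsize\}$ terms each equal $1$. Summing and comparing against $\int_0^{\setsize}\!\mathrm{d}x/\sqrt{x(\setsize-x)} = \pi$ produces a leading term $\sqrt{\pi\setsize/2}$; since $\sqrt{\pi/2} \approx 1.25 < 2$, the slack in the constant $2$ comfortably absorbs both the Stirling corrections and the sum-to-integral error. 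Substituting $\mathcal{I}_{\kl}(\setsize) \le 2\sqrt{\setsize}$ into \cref{thm:begin-bound} yields \cref{eqn:maurer-bound}. The main obstacle is precisely this last step: obtaining the clean, non-asymptotic constant $2\sqrt{\setsize}$ requires careful bookkeeping of the Stirling remainders rather than the asymptotic heuristic sketched here, and this is exactly the refinement of \citet{maurer2004note} over the cruder estimate $\mathcal{I}_{\kl}(\setsize) \le \setsize + 1$ one gets by bounding each of the $\setsize+1$ terms by $1$.
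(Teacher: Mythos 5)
Your proposal is correct and follows the same route as the paper: instantiate \cref{thm:begin-bound} with $\Delta = \kl$, observe that the $r$-dependence cancels so that $\mathcal{I}_{\kl}(\setsize) = \sum_{k=0}^{\setsize}\binom{\setsize}{k}(k/\setsize)^k((\setsize-k)/\setsize)^{\setsize-k}$, and invoke the bound $\mathcal{I}_{\kl}(\setsize) \le 2\sqrt{\setsize}$ of \citet{maurer2004note} via Stirling's formula. The only detail worth adding is that Maurer's argument establishes this constant only for $\setsize \ge 8$, with the cases $1 \le \setsize \le 7$ checked numerically, as noted in the paper's footnote.
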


\Cref{cor:maurer} is actually very slightly looser than \cref{thm:begin-bound} with $\Delta = \mathrm{kl}$, since \citet{maurer2004note} upper bounds 
$\mathcal{I}_{\mathrm{kl}} (N)$ by $\smash{2\sqrt{N}}$ 
using Stirling's formula.\footnote{\citet{maurer2004note} only proves this bound for $N \geq 8$, but the cases where $1 \leq N \leq 7$ can be easily verified numerically \citep[Lemma 19]{germain2015majorityvote}.}
The Catoni and PAC-Bayes-kl bounds are among the tightest PAC-Bayes bounds known and have been applied in settings where numerical tightness is key, such as obtaining generalisation bounds for stochastic neural networks \citep{dziugaite2017computing,zhou2018non}.
Many other bounds can be obtained by loosening these bounds. Applying Pinsker's inequality $\mathrm{kl}(q, p) \geq 2(q-p)^2$ to \cref{eqn:maurer-bound} yields the ``square-root'' version of the PAC-Bayes theorem \citep{mcallester1999pacaveraging,mcallester2003pac}. The ``PAC-Bayes-$\lambda$'' \citep{thiemann2017strongly} and ``PAC-Bayes-quadratic'' bounds \citep{rivasplata2019pac} can be derived as loosened versions of the PAC-Bayes-kl bound using the inequality $\mathrm{kl}(q, p) \geq (q - p)^2 / (2p)$, valid for $q < p$. The ``linear'' bound in \citet{mcallester2013pac} can be derived by loosening the Catoni bound using: $C_\beta(q, p) \leq A \implies p \leq \frac{1}{1 - \beta / 2} (q + \frac{1}{\beta}A)$, which is valid for $\beta \in (0, 2)$. 


\shortsubsection{How Tight Are PAC-Bayes Bounds?}
A fundamental question we can ask about a generalisation bound is how tight it is, and whether it can be tightened. Comparing the PAC-Bayes-kl and Chernoff test set bounds when $Q=P$ (so the PAC-Bayes bound essentially becomes a test set bound) shows they are identical except for a $\smash{\log (2\sqrt{N})/N}$ on the RHS of the PAC-Bayes-kl bound. Whether this term (or similar discrepancies between PAC-Bayes and \emph{Occam bounds} \citep[Corollary 4.6.2]{langford2002quantitatively}; see \cref{app:occam}) can be removed has been an open question since \citet[Problem 6.1.2]{langford2002quantitatively}. \Citet{maurer2004note} reduced this term to its current form, improving on work by \citet{langford2001bounds}. 
Interestingly, \citet[Proposition 2.1]{germain2009pac} shows that the expression obtained by dropping  $\log \smash{(2\sqrt{N})/N}$ from the PAC-Bayes-kl bound is identical to that obtained by \emph{illegally}\footnote{That is, optimising $\beta$ depending on the dataset $S$ without taking a union bound.} minimising the Catoni bound with respect to $\beta$;
\citet[Theorem 1.2.8]{catoni2007pac} shows that a union bound can be used to, in a legal way, approximately optimise with respect to $\beta$ at the cost of an additional lower order term.
The Chernoff test set bound is itself a looser version of the binomial tail bound, raising the question of whether a PAC-Bayes bound can be found that reduces to the binomial tail bound when $Q=P$. We provide new insights into these problems in \cref{sec:characterising-generic-bounds}.

Researchers have also compared PAC-Bayes bounds numerically on actual learning problems. \Citet{langford2005tutorial} and \citet{germain2009pac} were able to obtain reasonable guarantees on small datasets. However, \citet{langford2005tutorial} found that on datasets with $N\approx 145$, PAC-Bayes was outperformed by test set bounds. 
\Citet{langford2001not,dziugaite2017computing,perez2020tighter} provide non-vacuous bounds for neural networks using PAC-Bayes. Even so, \citet{dziugaite2021role} states that tighter bounds would be obtained using a test set instead. In \cref{sec:classification} we find that if the bounds and learning algorithms are optimised for a task distribution, PAC-Bayes can be tight enough to compete with the Chernoff test set bound, but not the binomial tail test set bound.

\begin{figure}[t]
    \centering\small
    \begin{tikzpicture}[
        line/.style = {
            thick,
            ->,
            > = {
                Triangle[length=1.5mm, width=1.5mm]
            }
        },
    ]
        \node (delta) at (0, 0) {$\Delta$};
        \node [align=center] (bound) at ($(delta) + (0, -1.2)$) {$\E[\overline{p}_\Delta]$\\\scriptsize (\cref{thm:begin-bound})};
        \draw [line] (delta) -- (bound);
        \node [plotorange] (existsdelta) at ($(delta) + (2.5, 0)$) {$\exists\,\Delta\,?$};
        \node [align=center] (optimisticmls) at ($(existsdelta) + (0, -1.2)$) {\footnotesize conjectured PAC-B.-kl \\\scriptsize (\cref{cor:optimistic-MLS})};
        \draw [line, plotorange] (existsdelta) -- (optimisticmls);
        \path (bound) -- node [pos=0.5] {$\ge$} (optimisticmls);
        \node [align=center] (chernoff) at ($(optimisticmls) + (3.5, 0)$) {Chernoff \\\scriptsize (\cref{lem:kl-chernoff-bound})};
        \node [align=center] (binom) at ($(chernoff) + (2.25, 0)$) {Binomial tail \\\scriptsize (\cref{lem:bin-inv-bound})};
        \draw [thick, dashed] (-4, -1.75) rectangle (7, 0.35);
        \path (-4, 0.35) -- node [pos=0.5, above] {Potential limits of the generic PAC-Bayes theorem \scriptsize (\cref{thm:begin-bound})} (7, 0.35);
        \draw
            [thick, dotted]
            (5, -1.85) rectangle (9.35, 0.25);
        \path (5, -1.85)
            -- node [fill=white, pos=0.31] {$\phantom{q}$} (5, 0.25)
            -- node [midway, anchor=south, fill=white] {Test set bounds} (9.35, 0.25);
        \node [align=center] (catoni) at ($(delta) + (-2.5, -0.25)$) {
            Catoni: $\Delta=C_\beta$ \\\scriptsize (\cref{cor:catoni})
        };
        \node [align=center] (mls) at ($(delta) + (-2.5, -1.2)$) {
            PAC-B.-kl: $\Delta=\kl$ \\
            {\scriptsize (\cref{cor:maurer})}
        };
        \draw [line] (catoni.east) -- (bound);
        \draw [line] (mls.east) -- (bound);
        \path (chernoff) -- node [pos=0.525, fill=white] {$\ge$} (binom);
        \draw [line] (optimisticmls) -- node [pos=0.5, anchor=south, fill=white, yshift=2pt] {$Q\!=\!P$} (chernoff);
        \node [plotred] (nodelta) at ($(existsdelta) + (5.75, 0)$) {$\nexists\, \Delta$};
        \draw [line, plotred] (nodelta) -- node [pos=0.5, anchor=west] {$Q\!=\!P$} (binom);
    \end{tikzpicture}
    \caption{\small
        \textbf{Illustration of the relationship between various PAC-Bayes and test set bounds}; see \cref{sec:characterising-generic-bounds}.
        It is unclear if there always exists a $\Delta$ that recovers the conjectured PAC-Bayes-kl bound (and hence the Chernoff bound when $Q\!=\!P$; see \cref{open:lower-bound}), but there certainly does not exist a $\Delta$ that recovers the Binomial tail bound when $Q\!=\!P$.
    }
    \label{fig:relationships}
    \vspace{-1em}
\end{figure}

\section{Characterising the Limits of the Generic PAC-Bayes Proof Technique}\label{sec:characterising-generic-bounds}
This section establishes our main theoretical contributions, which characterise the limits of the generic PAC-Bayes theorem (\cref{thm:begin-bound}).
For a convex $\Delta \in \mathcal{C}$, \cref{thm:begin-bound} gives a high-probability upper bound on $\Delta(\overline{R}_S(Q), \overline{R}_D(Q))$.
Define
$
    B[f, y] \coloneqq \sup\,\{p \in [0,1] : f(p) \le y\}
$ for $f \colon [0,1] \to \R$ and $y \in \R$, where we take $\sup \varnothing = 1$.
This upper bound (\cref{thm:begin-bound}) can be ``inverted'' to obtain a high-probability upper bound on $\overline{R}_D(Q)$:
with probability at least $1 - \delta$, for all $Q \in \mathcal{M}_1(\hypotheses)$,
\begin{equation} \label{eq:generic-gen-bound}
    \textstyle \overline{R}_D(Q) \le \overline{p}_\Delta
    \quad\text{where}\quad
    \overline{p}_\Delta \coloneqq B\big[\Delta(\overline{R}_S(Q), \vardot), \tfrac1{N}\big(\!\operatorname{KL}(Q\|P)+\log\tfrac{\mathcal{I}_\Delta(N)}{\delta}\big)\big].
\end{equation}
Since \eqref{eq:generic-gen-bound} holds for all $\Delta \in \mathcal{C}$,
a natural question is:
\emph{Which $\Delta$ minimises $\overline{p}_\Delta$?}
This would characterise how tight, numerically, PAC-Bayes theorems can be made without introducing ideas beyond those needed to prove the bounds stated in \cref{sec:background-related-work}.
Before considering the case when $\Delta$ is selected before observing $S \sim D^N$, we first characterise the optimal $\Delta$ in the simplified scenario where $\Delta$ can depend on the dataset $S$ and the posterior $Q$ (\cref{thm:nonsep-delta}).
This setting is artificial, since choosing $\Delta$ based on $S$ (without taking a union bound) does not yield a valid generalisation bound.
However, using \cref{thm:nonsep-delta} as a building block, we later derive a \emph{lower bound} on the best possible generic PAC-Bayes bound (in expectation) in the more realistic case when we cannot choose $\Delta$ based on $S$ (\cref{cor:optimistic-MLS}).
We then connect this lower bound to various existing PAC-Bayes and test set bounds. An overview is shown in \cref{fig:relationships}.
We now state our first result:


\vspace{\beforethmspace}\begin{theorem} \label{thm:nonsep-delta}
    Given any fixed dataset $S$ and any $Q, P \in \mathcal{M}_1(\mathcal{H})$,
    the tightest Catoni bound is as tight as the tightest bound possible within the generic PAC-Bayes theorem (\cref{thm:begin-bound}).
    Precisely, let $\Delta\in \mathcal{C}$ and $\delta \in (0,1)$.
    Choose some fixed values for $\overline{R}_S(Q) \eqqcolon q \in [0, 1]$ and $\operatorname{KL}(Q\| P) \eqqcolon \operatorname{KL} \in [0, \infty)$.
    If $q > 0$, then there exists a $\beta \in (0,\infty)$ such that $\overline{p}_\Delta \ge \overline{p}_{C_\beta}$, where $\overline{p}$ is defined in \cref{eq:generic-gen-bound}.
    Moreover, if $q = 0$, then $\overline{p}_\Delta \ge \lim_{\beta \to \infty} \overline{p}_{C_{\beta}}$.
\end{theorem}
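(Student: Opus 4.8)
The plan is to show that for any admissible $\Delta$ there is a single Catoni temperature $\beta$ whose inverted bound $\overline{p}_{C_\beta}$ is no larger than $\overline{p}_\Delta$. Since $C_\beta(q,\cdot)$ is continuous and strictly increasing in its second argument (its $p$-derivative is $(1-e^{-\beta})/(1+p(e^{-\beta}-1))>0$ for $\beta>0$), it suffices to exhibit $\beta$ with $C_\beta(q, \overline{p}_\Delta) \ge \tfrac1N(\mathrm{KL} + \log\tfrac1\delta)$. Two ingredients drive this. First, the exact identity $\mathcal{I}_{C_\beta}(N) = 1$ for every $\beta>0$: recognising the inner sum as $\sum_k \binom{N}{k}(re^{-\beta})^k(1-r)^{N-k} = (1+r(e^{-\beta}-1))^N$ cancels the prefactor $(1+r(e^{-\beta}-1))^{-N}$, so the Catoni bound carries \emph{no} $\mathcal{I}$ penalty and its threshold is exactly $A_0 := \tfrac1N(\mathrm{KL}+\log\tfrac1\delta)$. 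Second, the constraint defining $\overline{p}_\Delta$ is active, i.e.\ $\Delta(q, \overline{p}_\Delta) = \tfrac1N(\mathrm{KL} + \log\tfrac{\mathcal{I}_\Delta(N)}{\delta}) =: A_\Delta$: the sublevel set $\{p : \Delta(q,p) \le A_\Delta\}$ is a closed interval (by l.s.c.\ and convexity of $\Delta(q,\cdot)$), so its supremum $p^\ast := \overline{p}_\Delta$ lies in it, while continuity of the convex map $\Delta(q,\cdot)$ on the interior of its domain forbids $\Delta(q,p^\ast) < A_\Delta$ when $p^\ast < 1$.

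The heart of the argument is a lower bound on $\mathcal{I}_\Delta(N)$ that manufactures a Catoni function. First I would restrict the supremum over $r$ by evaluating the integrand at the single point $r = p^\ast$. Then, writing $\phi(x) := \Delta(x, p^\ast)$ and using convexity to bound $\phi$ below by a supporting line $\phi(k/N) \ge \phi(q) + s\,(k/N - q)$ at $x=q$ (with $s$ a subgradient), the factor $e^{N\Delta(k/N, p^\ast)}$ becomes $e^{N\phi(q) - sNq}\,e^{sk}$, and the resulting sum $\sum_k \binom{N}{k}(p^\ast e^{s})^k (1-p^\ast)^{N-k}$ collapses to $(1 + p^\ast(e^{s}-1))^N$. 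Taking $\tfrac1N\log$ and setting $\beta := -s$ converts the right-hand side exactly into $\Delta(q,p^\ast) - C_\beta(q, p^\ast)$, giving $\tfrac1N\log\mathcal{I}_\Delta(N) \ge \Delta(q,p^\ast) - C_\beta(q,p^\ast)$. Combining with the active-constraint identity $\Delta(q,p^\ast) = A_\Delta = A_0 + \tfrac1N\log\mathcal{I}_\Delta(N)$ and cancelling the $\mathcal{I}$ terms yields precisely $C_\beta(q, p^\ast) \ge A_0$, which is what we wanted.

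Finally I would check that $\beta = -s$ is a legal temperature. In the regime of interest the empirical risk lies below the bound ($q < p^\ast$) and $\phi$ is decreasing at $q$, so $s<0$ and $\beta>0$; the degenerate case $q=0$ is exactly where the supporting slope can diverge to $-\infty$, which is why the statement passes to $\lim_{\beta\to\infty}\overline{p}_{C_\beta}$, and one checks directly that this limit equals $1 - e^{-A_0}$. I expect the main obstacles to be the edge cases rather than the core computation: (a) when $p^\ast = 1$ (including $\sup\varnothing = 1$) the claim is immediate and should be dispatched first; (b) when $p^\ast$ sits on the boundary of the domain of $\Delta(q,\cdot)$, where $\Delta$ may jump to $+\infty$ and the active-constraint equality $\Delta(q,p^\ast) = A_\Delta$ can fail, which likely needs a separate right-continuity or approximation argument; and (c) guaranteeing the subgradient sign $s<0$ (equivalently $\beta>0$) for \emph{every} admissible $\Delta$ — this is where I would expect to spend the most effort, probably by arguing that whenever $s \ge 0$ the value $\overline{p}_\Delta$ is already so loose that even a crude choice of $\beta$ suffices.
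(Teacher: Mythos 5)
Your core computation is correct and is, in essence, a primal rendering of the paper's dual argument: the paper lower-bounds $\tfrac1N\log\mathcal{I}_\Delta(N)$ by writing $\Delta=\Delta^{**}$ and optimising over conjugate variables $(c_q,c_p)$, then builds a majorant $\smash{\tilde\Delta}=A+\kl\ge\Delta$; your choice of the single point $r=p^\ast$ together with the supporting line of $\Delta(\vardot,p^\ast)$ at $q$ picks out precisely the optimal pair in that supremum, and your inequality $\tfrac1N\log\mathcal{I}_\Delta(N)\ge\Delta(q,p^\ast)-C_{-s}(q,p^\ast)$ is the paper's bound evaluated at that pair. Both routes also rely on $\mathcal{I}_{C_\beta}(N)=1$ and on the constraint being active. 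Your worry (b) is easily dispatched: if $\mathcal{I}_\Delta(N)<\infty$ then $\Delta(k/N,r)<\infty$ for every $k$ and every $r\in(0,1)$, and convexity then forces $\Delta$ to be finite, hence continuous, on $[0,1]\times(0,1)$, so the constraint is active whenever $p^\ast\in(0,1)$; the endpoint cases are degenerate or vacuous.

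The genuine gap is your item (c), and it is larger than you suggest. The subgradient $s$ of $\Delta(\vardot,p^\ast)$ at $q$ really can be nonnegative for admissible $\Delta$: for instance $\Delta(x,p)=\kl(x,p)+cx$ with $c$ large satisfies $\mathcal{I}_\Delta(N)\le e^{cN}\mathcal{I}_{\kl}(N)<\infty$ yet has positive slope at $q$, so $\beta=-s$ is not a legal temperature. The natural repair is to note that your inequality $C_{-s}(q,p^\ast)\ge A_0$ is valid for any real $-s$, and since $\sup_{\beta\in\R}C_\beta(q,p)=\kl(q,p)$ (\cref{lem:sup-catoni}) you obtain $\kl(q,p^\ast)\ge A_0$ unconditionally; for $0<q<p^\ast<1$ this supremum is attained at a strictly \emph{positive} $\beta$ (\cref{lem:sup-catoni-pos}), which finishes the proof. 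But that step requires $p^\ast>q$, which your argument never establishes: $\kl(q,p^\ast)\ge A_0$ only places $p^\ast$ outside the open $\kl$-sublevel interval around $q$ and is consistent with $p^\ast$ lying on the wrong (left) branch, where no positive $\beta$ achieves $C_\beta(q,p^\ast)\ge A_0$. The paper sidesteps this because its intermediate result is the stronger pointwise statement $\overline{p}_\Delta\ge B[\kl(q,\vardot),\alpha]=\underline{p}>q$. To close your version you need one further ingredient, e.g.\ the Jensen bound $\tfrac1N\log\mathcal{I}_\Delta(N)\ge\sup_{r\in[0,1]}\Delta(r,r)$, which gives $\Delta(q,q)<A_\Delta$ and hence $p^\ast>q$ by continuity. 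Your fallback that ``the bound is already so loose that a crude $\beta$ suffices'' is not a proof as stated and is exactly the point where the work remains.
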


\begin{remark} \label{rem:nonsep-delta}
    By \cref{thm:nonsep-delta}, 
    for all $\Delta \in \mathcal{C}$, we have
    $\overline{p}_\Delta \ge \inf_{\beta > 0} \smash{\overline{p}_{C_{\beta}}}$, and, by Proposition 2.1 of \citet{germain2009pac}, $\inf_{\beta > 0} \smash{\overline{p}_{C_{\beta}}} = B[\kl(q, \vardot), \tfrac1N(\operatorname{KL} + \log\tfrac1{\delta})]$.
    Hence, for all $\Delta \in \mathcal{C}$, it holds that $\overline{p}_\Delta \ge B[\kl(q, \vardot), \tfrac1N(\operatorname{KL} + \log\tfrac1{\delta})]$, which is also shown directly in the proof of \cref{thm:nonsep-delta} (\cref{eq:illegal-maurer-proof}).
    Note that optimising $\beta$ in this way is illegal in the general case when the dataset $S$ (and hence $q$ and $\operatorname{KL}$) is stochastic, and would typically require a union bound to be valid.
\end{remark}

\begin{figure}[t]
    \centering
    \begin{subfigure}[t]{0.235\linewidth}
        \includegraphics[width=1\linewidth, trim={0 3em 0 0}]{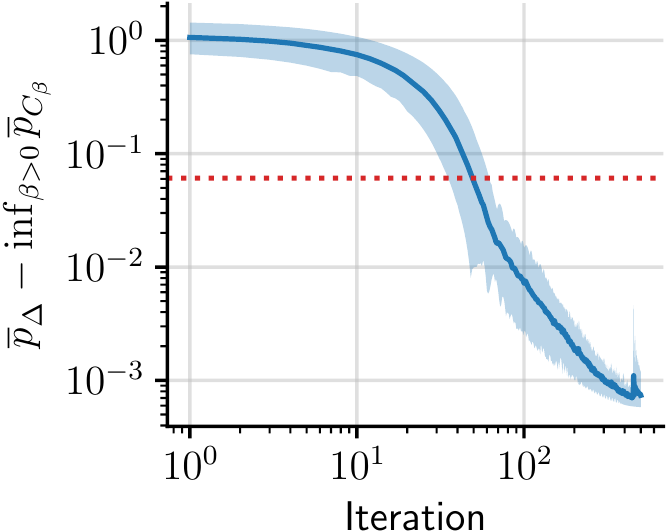}
        \caption{\hspace{8em}
        }
        \label{fig:det1-1}
    \end{subfigure}
    \hfill
    \begin{subfigure}[t]{0.235\linewidth}
        \includegraphics[width=1\linewidth, trim={0 3em 0 0}]{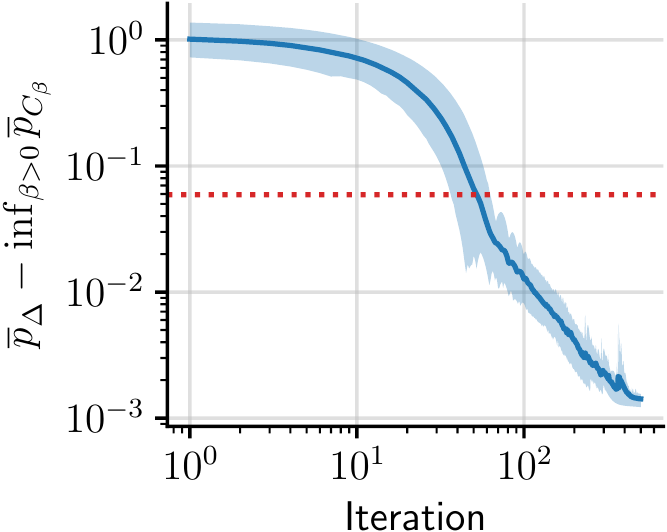}
        \caption{\hspace{8em}
        }
        \label{fig:det1-2}
    \end{subfigure}
    \hfill
    \begin{subfigure}[t]{0.235\linewidth}
        \includegraphics[width=1\linewidth, trim={0 3em 0 0}]{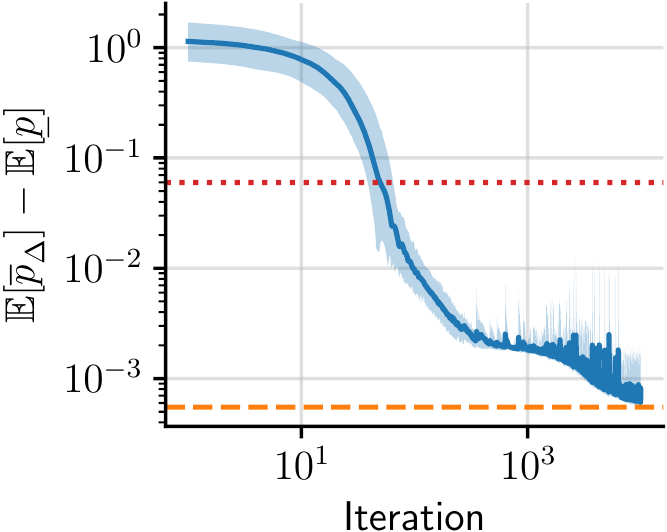}
        \caption{\hspace{8em}
        }
        \label{fig:stoch1}
    \end{subfigure}
    \hfill
    \begin{subfigure}[t]{0.235\linewidth}
        \includegraphics[width=1\linewidth, trim={0 10cm 0 0}]{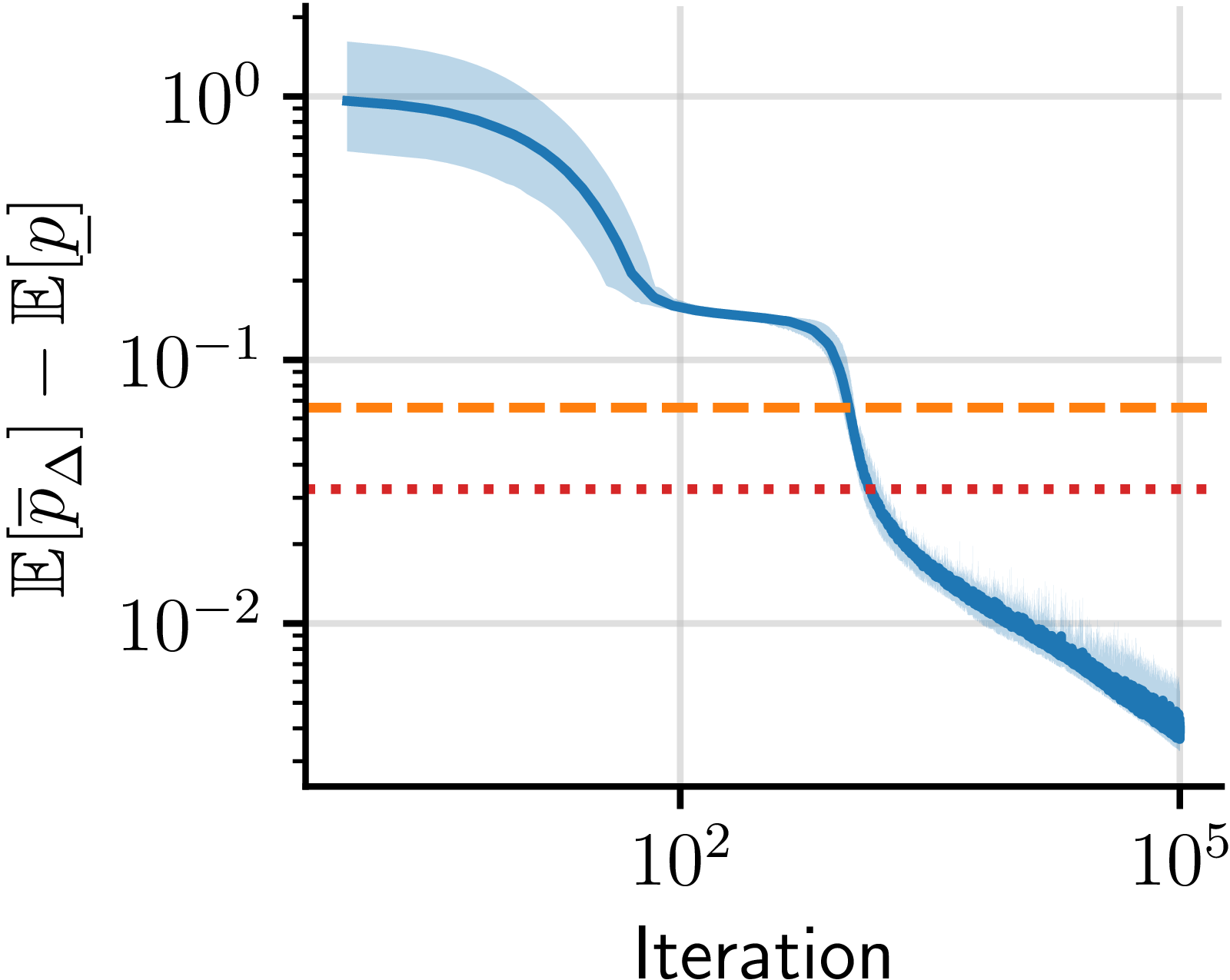}
        \caption{\hspace{8em}
        }
        \label{fig:stoch2}
    \end{subfigure}\\
    \caption{\small
        \textbf{The tightest Catoni bound is the optimal generic PAC-Bayes bound for a fixed dataset, but not the optimal expected bound for a random dataset.} We optimise a convex function with $H$ hidden units to minimise $\overline{p}_\Delta$ with $\delta=0.1, N=30$.
        \textbf{(a) and (b) consider fixed $q$ and $\operatorname{KL}$} (precise values below) and show the difference with the best Catoni bound (\cref{thm:nonsep-delta});
        \textbf{(c) and (d) consider random $q$ and $\operatorname{KL}$} and show the \emph{expected} difference with the conjectured PAC-Bayes-kl bound (\cref{cor:optimistic-MLS}).
        Shaded regions show the minimum and maximum over ten initialisations.
        All plots show the PAC-Bayes-kl bound ({\color{plotred} dotted red}) and (c) and (d) show the optimal Catoni bound with parameter $\beta^*$ ({\color{plotorange} dashed orange}).
        All runs quickly converged to non-vacuous values.
        (a): $(q, \operatorname{KL}) = (2\%, 1)$, $\beta^*\approx2.24$, $H = 256$.
        (b): $(q, \operatorname{KL}) = (5\%, 2)$, $\beta^*\approx1.84$, $H = 256$.
        (c): $(q, \operatorname{KL}) \in \{(2\%, 1),(5\%, 2)\}$ uniformly, $\beta^*\approx1.99$, $H = 512$.
        (d): $(q, \operatorname{KL}) \in \{(30\%, 1),(40\%, 50)\}$ uniformly, $\beta^*\approx2.32$, $H = 1024$.
    }
    \label{fig:numerical_evidence_deterministic_case}
    \vspace{-1em}
\end{figure}

We defer the proof of \cref{thm:nonsep-delta} to the end of this section.
We numerically verify \cref{thm:nonsep-delta} by optimising $\overline{p}_\Delta$ with respect to an arbitrary convex $\Delta$ for various settings of fixed $q$ and $\operatorname{KL}$.
To parametrise a convex $\Delta$, we use a one-hidden-layer neural network with positive weights at the output layer and softplus nonlinearities.
The inversion performed by $B$ is approximated numerically by discretising the second argument of $\Delta$ and detecting an upcrossing.
Gradients are then approximated using the inverse function theorem:
$ \textstyle
    \frac{\sd}{\sd \theta} B[f_\theta,c(\theta)]
    = {(\d_\theta c(\theta) - \d_\theta f_\theta(x))}/{\d_x f_\theta(x)}.
$
See \cref{app:learning-convex-function} for details.\footnote{Numerical inversion of $\Delta$ when $\Delta=\mathrm{kl}$ has been considered by many authors, including \citet{dziugaite2017computing} who use Newton's method and \citet{pmlr-v87-majumdar18a} who propose using convex optimisation methods. However, to our knowledge, the specific inversion algorithm we propose for general convex $\Delta$, along with the method for backpropagating through the inverse, are novel in the PAC-Bayes setting.}
\Cref{fig:det1-1,fig:det1-2} show the difference between the numerically optimised $\Delta$ and the best Catoni bound for two settings of fixed $q \in [0,1]$ and $\operatorname{KL} \in [0, \infty)$.
In both cases, $\overline{p}_{\Delta} - \inf_{\beta > 0} \overline{p}_{C_\beta}$ appears to converge to zero from above, as expected from \cref{thm:nonsep-delta}.
Interestingly, \cref{app:learning-convex-function} shows that the learned $\Delta$ can deviate substantially from $C_\beta$, suggesting that there are choices for $\Delta$ besides Catoni's which achieve $\inf_{\Delta \in \mathcal{C}} \overline{p}_\Delta$.

For any \emph{fixed} dataset $S$, \cref{thm:nonsep-delta} states that the tightest bound is one of the Catoni bounds; precisely: $\inf_{\Delta \in \mathcal{C}} \overline{p}_\Delta = \inf_{\beta > 0} \smash{\overline{p}_{C_\beta}}$. Note that the optimal value of $\beta$ may depend on the dataset $S$.
The more interesting question is whether, when $S \sim D^N$ is sampled randomly, one of the Catoni bounds can still achieve the tightest bound (in expectation) for a \emph{single} value of $\beta$ that is chosen \emph{before} sampling $S$.
The answer is no:
\cref{fig:stoch2} gives a numerical counterexample where $\textcolor{plotblue}{\inf_{\Delta \in \mathcal{C}} \E[\overline{p}_\Delta]} <  \textcolor{plotred}{\E[\smash{\overline{p}_{\kl}}]} < \textcolor{plotorange}{\inf_{\beta > 0} \E[\smash{\overline{p}_{C_\beta}}]}$.
Since the Catoni family of bounds cannot generally achieve the tightest bound in expectation, which $\Delta$ do? And how tight is $\inf_{\Delta \in \mathcal{C}} \E[\overline{p}_\Delta]$?
Whilst we do not have a full answer, we establish a simple lower bound on $\inf_{\Delta \in \mathcal{C}} \E[\overline{p}_\Delta]$.
Define the \emph{conjectured PAC-Bayes-kl bound} $\smash{\underline{p}}$ as the quantity from \cref{rem:nonsep-delta}, which equals the PAC-Bayes-kl bound \emph{without the $\smash{\tfrac1N}\log\mathcal{I}_{\kl}(N)$ term on the RHS:}
\begin{equation}
    \underline{p} \coloneqq B[\kl(\overline{R}_S(Q), \vardot), \tfrac1N(\operatorname{KL}(Q\|P) + \log\tfrac1{\delta})].
\end{equation}
The conjectured PAC-Bayes-kl bound has \emph{not} been proven to be a valid generalisation bound.
When $S \sim D^N$ is random,
\Cref{rem:nonsep-delta} tells us that $\inf_{\Delta \in \mathcal{C}} \overline{p}_\Delta = \underline{p}$ a.s. Taking expectations and interchanging the expectation and infimum yields the following corollary:

\vspace{\beforethmspace}\begin{corollary} \label{cor:optimistic-MLS}
    Consider the setting from \cref{thm:begin-bound}.
    Then the expected conjectured PAC-Bayes-kl bound $\E[\underline{p}]$ gives a lower bound on all expected generalisation bounds obtained through the generic PAC-Bayes theorem (\cref{thm:begin-bound}). That is, for any distribution over datasets, any prior, and any learning algorithm, 
    \begin{equation}\label{eq:inequality_in_expectation} \textstyle
        \inf_{\Delta\in \mathcal{C}} \E[\overline{p}_\Delta]
        \ge
        \mathbb{E}[\underline{p}]
    \end{equation}
    Moreover, there exists a distribution over datasets, a prior, and a posterior such that equality holds.
    For example, let $(x, y)$ be constant almost surely, which reduces to the setting of \cref{thm:nonsep-delta}.
    Note that in \eqref{eq:inequality_in_expectation}, $\Delta$ is chosen \emph{not} depending on $S$, which leads to a valid generalisation bound on the LHS.
\end{corollary}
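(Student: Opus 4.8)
The plan is to reduce the statement to the pointwise (conditional-on-$S$) analysis already carried out in \cref{thm:nonsep-delta}, and then lift it to expectations using only the trivial direction of the minimax inequality, $\inf \E \ge \E\,\inf$. Concretely, I would first record the almost-sure pointwise identity $\inf_{\Delta\in\mathcal{C}}\overline{p}_\Delta = \underline{p}$. For each realisation of $S$, the quantities $q \coloneqq \overline{R}_S(Q)$ and $\operatorname{KL}\coloneqq\operatorname{KL}(Q\|P)$ are fixed numbers, so \cref{thm:nonsep-delta} applies verbatim: every $\Delta\in\mathcal{C}$ satisfies $\overline{p}_\Delta \ge \overline{p}_{C_\beta}$ for some $\beta\in(0,\infty)$ (or, when $q=0$, $\overline{p}_\Delta \ge \lim_{\beta\to\infty}\overline{p}_{C_\beta}$), while each Catoni function $C_\beta$ itself lies in $\mathcal{C}$. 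Combined with the evaluation $\inf_{\beta>0}\overline{p}_{C_\beta} = B[\kl(q,\vardot),\tfrac1N(\operatorname{KL}+\log\tfrac1\delta)] = \underline{p}$ recorded in \cref{rem:nonsep-delta}, this pins the pointwise infimum over $\mathcal{C}$ to exactly $\underline{p}$, for almost every $S$.

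Second, I would establish the main inequality \eqref{eq:inequality_in_expectation}. Fix any $\Delta\in\mathcal{C}$. Pointwise and almost surely, $\overline{p}_\Delta \ge \inf_{\Delta'\in\mathcal{C}}\overline{p}_{\Delta'} = \underline{p}$, so taking expectations over $S\sim D^N$ (and over any internal randomness of the learning algorithm producing $Q$) gives $\E[\overline{p}_\Delta] \ge \E[\underline{p}]$. Since this holds for every fixed $\Delta\in\mathcal{C}$, taking the infimum over $\Delta$ on the left yields $\inf_{\Delta\in\mathcal{C}}\E[\overline{p}_\Delta] \ge \E[\underline{p}]$. I would emphasise that this is the \emph{easy} direction of the minimax inequality and requires no genuine interchange of limiting operations; the only integrability concern is that $\overline{p}_\Delta,\underline{p}\in[0,1]$ are bounded, so all expectations are finite. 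Measurability of $S\mapsto\overline{p}_\Delta$ and $S\mapsto\underline{p}$ follows from the monotonicity and continuity of $B[\,\vardot\,]$ together with the continuity of $\kl$ and of the maps $S\mapsto q,\operatorname{KL}$.

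Third, for the equality claim I would exhibit the degenerate distribution in which $(x,y)$ is constant almost surely. Then $S$ is deterministic, so $q$ and $\operatorname{KL}$ are deterministic constants and both expectations collapse: $\inf_{\Delta\in\mathcal{C}}\E[\overline{p}_\Delta] = \inf_{\Delta\in\mathcal{C}}\overline{p}_\Delta$ and $\E[\underline{p}]=\underline{p}$. The pointwise identity from the first step then gives $\inf_{\Delta\in\mathcal{C}}\overline{p}_\Delta = \underline{p}$, so equality holds in \eqref{eq:inequality_in_expectation}. This is exactly the reduction to \cref{thm:nonsep-delta} flagged in the statement.

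The substantive content is entirely inherited from \cref{thm:nonsep-delta}, so the corollary itself is light. The only genuine subtleties I anticipate are bookkeeping: ensuring the pointwise conclusion of \cref{thm:nonsep-delta} is asserted for (almost) all $S$ simultaneously rather than for a single fixed $S$, so that integrating is legitimate, and confirming measurability of the relevant maps. Rather than a technical obstacle, the point worth stressing is that $\inf_{\Delta}\E[\overline{p}_\Delta] \ge \E[\inf_\Delta \overline{p}_\Delta]$ is always valid but generally \emph{strict}: the $\Delta$ minimising the bound can vary with $S$, whereas a legal bound must commit to a single $S$-independent $\Delta$. Equality is therefore special to degenerate data, consistent with the strict gap observed numerically in \cref{fig:stoch2}.
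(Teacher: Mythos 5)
Your proposal is correct and follows essentially the same route as the paper: the pointwise almost-sure identity $\inf_{\Delta\in\mathcal{C}}\overline{p}_\Delta=\underline{p}$ from \cref{thm:nonsep-delta} and \cref{rem:nonsep-delta}, followed by the easy direction $\inf_\Delta\E[\overline{p}_\Delta]\ge\E[\inf_\Delta\overline{p}_\Delta]$, with equality witnessed by a degenerate data distribution. Your added remarks on measurability, boundedness, and the generic strictness of the interchange are sensible bookkeeping but do not change the argument.
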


\Cref{fig:relationships} shows how \cref{cor:optimistic-MLS} fits into the picture so far.
The conjectured PAC-Bayes-kl bound is at least as tight as the bound achieved by any $\Delta$, but \cref{cor:optimistic-MLS} does not establish the existence of a $\Delta$ which achieves it.
\Cref{cor:optimistic-MLS} has practical utility: the conjectured PAC-Bayes-kl bound can be used to prove optimality of a choice of $\Delta$.
Specifically, if a practitioner computes a valid bound based on the generic PAC-Bayes theorem, and finds that it is close to the conjectured PAC-Bayes-kl bound, they can be assured by \cref{cor:optimistic-MLS} that they would not have gotten a much better bound (in expectation) with any other choice of $\Delta$.
Conversely, the conjectured PAC-Bayes-kl bound can quantify potential slack in the bound due to a suboptimal choice of $\Delta$.
\Cref{app:worked-example-optimistic-MLS} considers an example of this application of \cref{cor:optimistic-MLS} in the simplified scenario where $\overline{R}_S(Q) = \tfrac12$ almost surely.
%
%

The conjectured PAC-Bayes-kl bound also recovers the Chernoff test set bound (\cref{lem:kl-chernoff-bound}) when setting $Q = P$.
Since the binomial tail bound (\cref{lem:bin-inv-bound}) is strictly tighter than the Chernoff bound, this shows there does not exist a $\Delta$ such that the generic PAC-Bayes bound (\cref{thm:begin-bound}) recovers the Binomial tail bound when $Q=P$;
this is illustrated in \cref{fig:relationships}.
What is unclear, however, is whether there always exists a $\Delta$ such that \cref{thm:begin-bound} recovers the Chernoff test set bound;
or, alternatively, such that the conjectured PAC-Bayes-kl bound is attained.
A positive answer to the latter would establish that the conjectured PAC-Bayes-kl bound is a valid generalisation bound.\footnote{
    By \cref{rem:nonsep-delta}, $\overline{p}_\Delta \ge \underline{p}$, so $\E[\overline{p}_\Delta] = \E[\underline{p}]$ implies that $\overline{p}_\Delta = \underline{p}$ a.s., meaning that $\underline{p}$ is a valid gen.\ bound.
}
As a first piece of evidence, the traces from \cref{fig:stoch1,fig:stoch2} suggest that a convex function could actually achieve $\E[\underline{p}]$;
see \cref{app:additional-numerical-results} for more traces.
We leave a full resolution of this question as an open problem; see \cref{sec:conclusion}. Interestingly, \cref{fig:stoch1} shows that a Catoni bound is sometimes \emph{nearly} optimal even in the stochastic case; we will see another example of this in \cref{fig:gen_bounds_1d_GNP_post_opt}.

We end this section with the proof of \cref{thm:nonsep-delta}.
Recall that the Catoni family of bounds follows from \cref{thm:begin-bound} by considering $
    \Delta(q,p) = C_\beta(q, p)
    \coloneqq \mathcal{F}_\beta(p) - \beta q
$ with $
    \mathcal{F}_\beta(p) \coloneqq - \log( p( e^{-\beta} - 1) + 1)
$ and $\beta > 0$.
To simplify the notation, we denote $\alpha = \frac{1}{N}(\operatorname{KL} + \log \frac{1}{\delta}) \in (0, \infty)$.

\vspace{-0.5em}\begin{proof}[Proof of \cref{thm:nonsep-delta}]
    The proof proceeds in three steps.
    In the first two steps, we lower bound $\frac1N\log\mathcal{I}_\Delta(N)$ and upper bound $\Delta$.
    In the third step, we use these bounds to lower bound $\smash{B[\Delta(q,\vardot), \alpha + \tfrac1N \log \mathcal{I}_\Delta(N)]}$ and identify the result with a particular Catoni bound.
    
    \textbf{Lower bound on $\frac1N\log\mathcal{I}_\Delta(N)$:}
    Since $\Delta\in \mathcal{C}$, 
    it is equal to its own double convex conjugate:
    $
        \Delta(q, p) = \Delta^{**}(q, p) = \sup_{c_q, c_p \in \mathbb{R}}\,(c_q q + c_p p - \Delta^*(c_q, c_p)),
    $ where ${}^*$ denotes convex conjugation.
    Let $X \sim \operatorname{Bin}(r,N)$.
    Then
    \begin{align} 
        \mathcal{I}_\Delta(N) &= \textstyle \sup_{r\in [0,1]}\E[e^{N \Delta(X/N, r)}]  
=  \sup_{r\in [0,1]}\E[e^{\sup_{c_q,c_p\in\R}(c_q X + N c_p r - N \Delta^*(c_q, c_p))}]\\
        &\geq \textstyle  \sup_{r\in [0,1]} \sup_{c_q,c_p\in\R}e^{N c_p r - N \Delta^*(c_q, c_p)}\E[e^{c_q X}] 
    \end{align}
        where $\E[e^{c_q X}] = (r( e^{c_q} - 1) + 1)^N$ is the moment-generating function of $X$.
    Consequently, taking $\log$, dividing by $N$, and noting that $\tfrac1N \log \E[e^{c_q X}] = - \mathcal{F}_{-c_q}(r)$,
    \begin{equation} \label{eq:tightened_bound} \textstyle
        \tfrac1N
        \log \mathcal{I}_\Delta(N)
        \ge A
        \quad\text{where}\quad
        A \coloneqq \sup_{c_q, c_p \in \mathbb{R}}
        [
            - \Delta^*(c_q, c_p)
            + \sup_{r \in[0,1]}(
            c_p r
            - \mathcal{F}_{-c_q}(r))
        ].
    \end{equation}
    
    \textbf{Upper bound on $\Delta$:}
    We upper bound $\Delta$ by making $\Delta^*$ as small as possible without exceeding the supremum from \eqref{eq:tightened_bound}.
    Note that $A$ is finite, because $\Delta^*$ is proper.
    Define $\smash{\tilde \Delta^*}$ as follows:
    $
        \smash{\tilde\Delta^*}(c_q, c_p)
        = -A + \sup_{r \in[0,1]}(
            c_p r
            - \mathcal{F}_{-c_q}(r)
        )
    $.
    Note that $\smash{\tilde\Delta^*}$ is proper, convex as a pointwise supremum of convex functions, and l.s.c.\ as a supremum of l.s.c.\ functions.
    In fact, $\smash{\tilde\Delta^*}$ is finite for all inputs.
    As the notation suggests, define $\smash{\tilde \Delta} \coloneqq \smash{(\tilde \Delta^*)^*}$.
    Then $\smash{\tilde\Delta^*}$ is indeed the convex conjugate of $\smash{\tilde \Delta}$, because $\smash{\tilde \Delta^*}\in \mathcal{C}$, so it is equal to its own double convex conjugate.
    Moreover,
    \begin{align}
        \smash{\tilde \Delta}(q, p)
        &=\textstyle
            A + \sup_{c_q, c_p \in \mathbb{R}}[
                c_q q + c_p p
                - \sup_{r \in[0,1]}(
                    c_p r
                    - \mathcal{F}_{-c_q}(r)
                )
            ] \\
        &=\textstyle 
            A + \sup_{c_q \in \mathbb{R}}\,[
                c_q q
                + \sup_{c_p \in \mathbb{R}}\,[
                c_p p
                - \mathcal{F}_{-c_q}^* (c_p)
                ]
            ] \\
        &=\textstyle 
            A + \sup_{c_q \in \mathbb{R}}\,[
                c_q q
                + \mathcal{F}_{-c_q}(p)
            ],
    \end{align}
    by observing that $p \mapsto \mathcal{F}_{-c_q}(p)\in \mathcal{C}$, so it is equal to its own double convex conjugate.
    Therefore,
    \begin{equation} \label{eq:tilde-Delta} \textstyle
        \smash{\tilde \Delta}(q, p)
        = A + \sup_{c_q \in \mathbb{R}}\,
            C_{-c_q}(q, p)
        \overset{\text{(i)}}{=} A + \kl(q, p)
    \end{equation}
    where (i) follows from a direct computation; see \cref{lem:sup-catoni} (\cref{app:additional-lemmas}).
    \textit{Claim:} For all $q, p \in [0, 1]$, $\tilde \Delta(q, p) \ge \Delta(q, p)$.
    This follows from the definitions and finiteness of $\smash{\tilde \Delta^*}$ and $A$:
    for all $c_q, c_p \in \mathbb{R}$,
    \begin{align} \textstyle
        \hspace*{-5pt}-\smash{\tilde\Delta^*}(c_q, c_p)
        \!+\! \sup_{r \in[0,1]}(
            c_p r
            - \mathcal{F}_{-c_q}(r)
        )
        = A
        \ge
        - \Delta^*(c_q, c_p)
        \!+\! \sup_{r \in[0,1]}(
            c_p r
            - \mathcal{F}_{-c_q}(r)
        ),
    \end{align}
    which means that $\tilde \Delta^* \le \Delta^*$, so $\tilde \Delta \ge \Delta$ by the order-reversing property of the convex conjugate.
    
    \textbf{Conclusion:}
    Assume that $\overline{p}_\Delta \!\!<\! 1$; otherwise, any $\beta \!>\! 0$ works.
    To begin with, use the previous steps:
    \begin{equation} \textstyle \label{eq:illegal-maurer-proof}
        \overline{p}_\Delta
        = B[\Delta(q,\vardot), \alpha + \tfrac1N \log \mathcal{I}_\Delta(N)]
        \smash{\overset{\text{\eqref{eq:tightened_bound}, claim}}{\ge}} B[\tilde\Delta(q,\vardot), \alpha + A]
        \overset{\text{\eqref{eq:tilde-Delta}}}{=} B[\kl(q, \vardot), \alpha]
        = \underline{p}.
    \end{equation}
    Since $\alpha > 0$, clearly $\underline{p} > q$, so $0 \le q < \underline{p} < 1$.
    Hence, if $q > 0$, then there exists a $\beta > 0$ such that $\kl(q, \underline{p}) = C_{\beta}(q, \underline{p})$ (\cref{lem:sup-catoni-pos}; \cref{app:additional-lemmas}).
    Using that $p \mapsto C_{\beta}(q,p)$ is continuous and strictly increasing for all $\beta > 0$, we have that $\underline{p} = B[C_{\beta}(q, \vardot), \alpha]$, so
    \begin{equation} \label{eq:conclusion}
        \overline{p}_\Delta
        \ge B[C_{\beta}(q, \vardot), \alpha] 
        \overset{\text{(i)}}{=} B[C_{\beta}(q, \vardot), \alpha + \tfrac1N \log\mathcal{I}_{C_{\beta}}(N)] 
        = \overline{p}_{C_\beta},
    \end{equation}
    where (i) uses that $\tfrac1N \log\mathcal{I}_{C_{\beta}}(N) \!=\! 0$ (\cref{lem:catoni-zero-I}; \cref{app:additional-lemmas}).
    If $q \!=\! 0$, then $\kl(0, \underline{p}) \!=\! \lim_{\beta \to \infty} C_{\beta}(0, \underline{p})$ (\cref{lem:sup-catoni-pos}; \cref{app:additional-lemmas}), so $\overline{p}_\Delta \!\ge\! B[\smash{\displaystyle\lim_{\beta \to \infty}}C_{\beta}(0, \cdot), \alpha]$, and conclude like in \eqref{eq:conclusion} using \cref{lem:B_limit} (\cref{app:additional-lemmas}).
\end{proof}

\vspace{-1em}
\section{Meta-Learning the Tightest Bounds for Synthetic Classification} \label{sec:classification}
\vspace{-0.5em}

We now consider, for a particular distribution over tasks, how tight each bound can be made in expectation. 
Two questions naturally arise: \emph{Which PAC-Bayes bounds are tightest?} and \emph{Can PAC-Bayes bounds be tighter than test set bounds?}
While test set bounds have traditionally been considered tighter than PAC-Bayes bounds, here we work in the small data regime where a substantial proportion of the data must be removed to form a test set, which could impact generalisation performance and hence lead to worse bounds.
Our goal is \emph{not} to compare these bounds when using standard practice, but to see how tight they can be \emph{in principle} if we use every tool in our toolbox to minimise the expected bounds.\footnote{Our goal here is to minimise \emph{high probability} PAC-Bayes and test set bounds \emph{in expectation}. See \citet[Appendix J]{dziugaite2021role} for a relevant discussion.} While these optimisations will be impractical for large models and datasets, they can provide some statistical insight.

\textbf{Learning Algorithm.\,} Certain learning algorithms may work better with test set bounds, and others with PAC-Bayes bounds. 
Instead of choosing a fixed algorithm, we \emph{meta-learn} \citep{schmidhuber:1987:srl,thrun2012learning} separate algorithms to optimise each bound in expectation: we parametrise a hypothesis space $\hypotheses_{\theta}$ and a \emph{posterior map} $Q_{\theta}\colon \datapoints^N \to \mathcal{M}_1(\hypotheses_{\theta})$ by a finite dimensional vector $\theta$, which is trained to optimise the expected bound (we will amalgamate all meta-learnable parameters into the single vector $\theta$).
This is explained in more detail below.
This way, we obtain algorithms that are optimised for each bound. 
After meta-learning, we can further refine each PAC-Bayes posterior by minimising the PAC-Bayes bound, see \cref{app:post_optimisation}. 

\textbf{Task Distribution.\,} In meta-learning, we refer to a data-generating distribution $D$ and dataset $S \sim D^N$ as a \emph{task}. We consider a \emph{distribution} over tasks, $D \sim \mathcal{T}$, where $\mathcal{T}$ is a distribution over data-generating distributions, and aim to find the best expected bounds for this distribution achievable by an optimised algorithm.\footnote{We could also  consider drawing all datasets from a \emph{single} task $D$, which would more directly match \cref{sec:characterising-generic-bounds}. We regard this case as less interesting, since we would often want a bound to perform well on a variety of tasks.} We choose especially simple learning tasks --- synthetic 1-dimensional binary classification problems, generated by thresholding Gaussian process (GP) samples --- which allows us to fully control the task distribution and easily inspect predictive distributions visually to diagnose learning. \Cref{app:data} contains full details.



\textbf{Priors.\,}
The choice of prior is crucial in PAC-Bayes, and the role of \emph{data-dependent priors} (DDPs) \citep{ambroladze2007tighter,parrado2012pac,perez2020tighter} has been gaining increased attention. This involves splitting the dataset into $N = N_{\mathrm{prior}} + N_{\mathrm{risk}}$ datapoints. The DDP is allowed to depend on the \emph{prior set} of size $N_{\mathrm{prior}}$ (standard priors use $N_{\mathrm{prior}}=0$), and the risk bound is computed on the \emph{risk set} of size $N_{\mathrm{risk}}$. Crucially, \emph{the bound is valid when the posterior depends on all $N$ datapoints}. Recently, \citet{dziugaite2021role} showed that DDPs can lead to tighter expected bounds than the optimal non-data-dependent prior, and are sometimes even \emph{required} to obtain non-vacuous bounds.
\citet{perez2020tighter} also report much tighter bounds when using DDPs.
In our experiments we meta-learn a DDP as a map from the prior set to the prior, $P_{\theta}\colon \datapoints^{\priorsetsize} \to \mathcal{M}_1(\hypotheses)$. To compare PAC-Bayes DDPs against test set bounds, we sweep the prior/train set proportion from $0$ to $0.8$ and see what the tightest value obtained is. Strictly this would require a union bound over the proportions, but here we are primarily interested in comparing the various bounds against each other on an even footing and vary the proportion for illustrative purposes.

\shortsubsection{The Meta-Learning Objective.} 
%
%
We now discuss meta-learning in more detail. During meta-training, $\theta$ is trained to optimise the expected PAC-Bayes generalisation bound over the task distribution:
\begin{equation} \label{eqn:meta-objective-full-expectation} \textstyle 
\mathbb{E}_{D \sim \mathcal{T}} \mathbb{E}_{S \sim D^N} \, B\big[\Delta_{\theta}(\gibbsR_{\riskset}(Q_{\theta}(\set)), \vardot), \tfrac1{\risksetsize}\big(\!\operatorname{KL}(Q_{\theta}(\set)\|P_{\theta} (\priorset))+\log\tfrac{\mathcal{I}_{\Delta_{\theta}}(\risksetsize)}{\delta}\big)\big],
\end{equation}
where the $\theta$ in $\Delta_{\theta}$ denotes that some bounds (Catoni and learned convex function) have meta-learnable parameters.
Alternatively, for a meta-learner that minimises a test set bound, the objective is simply $\mathbb{E}_{D \sim \mathcal{T}} \mathbb{E}_{S \sim D^N} \, \smash{\gibbsR_{S_{\mathrm{test}}}(Q_{\theta}(S_{\mathrm{train}}))}$, since all test set bounds are monotonic in the test set risk. 
We use the $0/1$ loss. As the classifiers are stochastic, the empirical risk is still differentiable with respect to $\theta$. In contrast to PAC-Bayes, the predictor that minimises the test set bound can be made deterministic after $\theta$ is learned, since it tends to eventually learn essentially deterministic classifiers; see \cref{app:deterministic_val_model}.
We sample $T = 80\,000$ tasks $D_t \sim \mathcal{T}$, with associated datasets $S_t \sim D_t^N$.
These form the \emph{meta-trainset}.
Additionally, we sample $1024$ tasks that form a \emph{meta-testset} used to estimate the average bounds over $\mathcal{T}$ after meta-training.
For the PAC-Bayes bounds, we then Monte Carlo estimate \eqref{eqn:meta-objective-full-expectation}.
Hence, the final objective for a PAC-Bayes meta-learner is (a minibatched version of):
\begin{equation} \textstyle \label{eqn:pac-bayes-batch-obj}
\frac{1}{T} \sum_{t=1}^T \, B\big[\Delta_{\theta}(\gibbsR_{S_{t, \mathrm{risk}}}(Q_{\theta}(\set_t)), \vardot), \tfrac1{\risksetsize}\big(\!\operatorname{KL}(Q_{\theta}(\set_t)\|P_{\theta} (S_{t, \mathrm{prior}}))+\log\tfrac{\mathcal{I}_{\Delta_{\theta}}(\risksetsize)}{\delta}\big)\big].
\end{equation}
Similarly, the objective for the test set bound meta-learner is $\frac{1}{T} \sum_{t=1}^T \smash{\gibbsR_{S_{t, \mathrm{test}}}(Q_{\theta}(S_{t, \mathrm{train}}))}$.
The bounds we compute on datasets in the meta-testset, after meta-training is complete and $\theta$ is frozen, are valid even though $\theta$ was optimised on the meta-trainset. This highlights a contrast between our procedure and the PAC-Bayes meta-learning in \citet{amit2018meta,rothfuss2020pacoh,liu2021pac} and \citet{farid2021pac}. While those works use PAC-Bayes to analyse generalisation of a meta-learner on new tasks, we use PAC-Bayes to analyse generalisation \emph{within} individual tasks.



\shortsubsection{Parametrising the Meta-Learner and Hypothesis Space.} 
We now describe how to parametrise the hypothesis space $\hypotheses_{\theta}$ and the maps $Q_{\theta}, P_{\theta}$. 
We meta-learn a feature map $\phi_{\theta}\colon \R \to \R^K$ and choose\footnote{The dimensionality $K$ is fixed a priori.}
$
\hypotheses_\theta = \{h_w: h_w(x)=\mathrm{sign} \langle w, \phi_{\theta}(x)\rangle,\, w\in \R^K \}
$. 
For $Q_\theta$ and $P_\theta$ Gaussian, this hypothesis space allows us to compute the empirical Gibbs risk without Monte Carlo integration; see \cref{app:computing_emp_risk} for details.
For the form of $Q_{\theta}$, we take inspiration from Neural Processes (NPs) \citep{garnelo2018conditional,garnelo2018neural,kim2019attentive}. NPs use neural networks to flexibly parametrise a map from datasets to predictive distributions that respects the permutation invariance of datasets \citep{zaheer2017deep}. They are regularly benchmarked on 1D meta-learning tasks, making them ideally suited. We make a straightforward modification to NPs to make them output Gaussian measures over weight vectors $w \in \mathbb{R}^K$. Hence, they act as parametrisable maps from $\mathcal{Z}^{\setsize}$ to the set of Gaussian measures on $\mathbb{R}^K$.

We considered two kinds of NP, one based on multilayer perceptrons (MLP-NP) and another based on convolutional neural networks (CNN-NP) (detailed in \cref{app:MLP-NP,app:GNP}) Although the MLP-NP is very flexible, the state-of-the-art in NPs on 1D tasks is given by CNN-based NPs \citep{gordon2019convolutional,foong2020meta,bruinsma2021}. 
We use an architecture closely based on the \emph{Gaussian Neural Process} \citep{bruinsma2021}, which outputs full-covariance Gaussians. 
As expected, we found the CNN-NP to produce tighter (or comparable) average bounds to the MLP-NP, while using far fewer parameters, and training much more reliably and quickly.
This is because the CNN-NP is translation equivariant, and hence exploits a key symmetry of the problem.
Hence, we focus on the CNN-NP, but report some results for the MLP-NP in \cref{app:additional_plots_mlp}. Hyperparameter details are given in \cref{app:hyperparameters}.

\shortsubsection{Results.}
\begin{figure}
\centering
\begin{subfigure}{.49\textwidth}
  \centering
  \includegraphics[width=.49\linewidth]{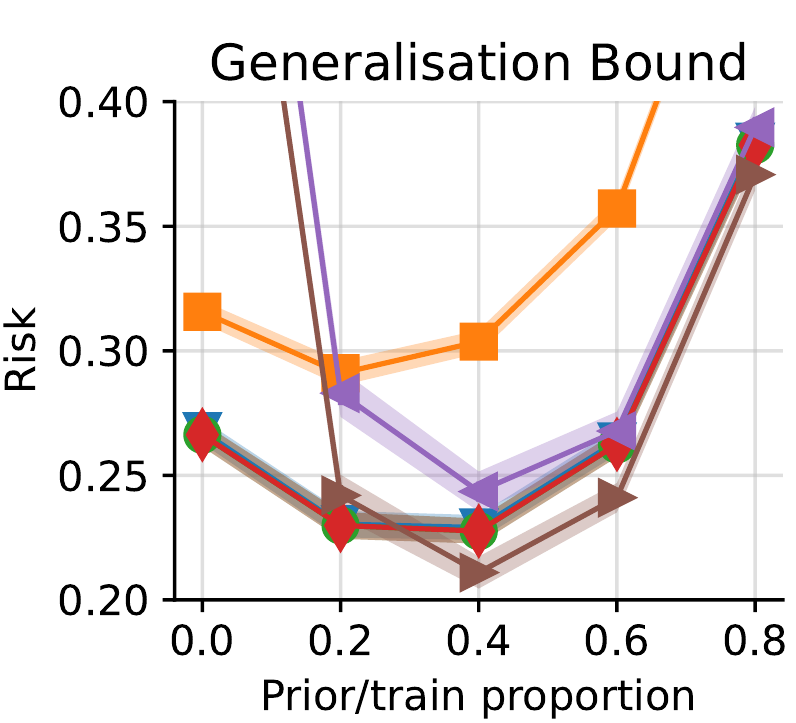}
  \includegraphics[width=.49\linewidth]{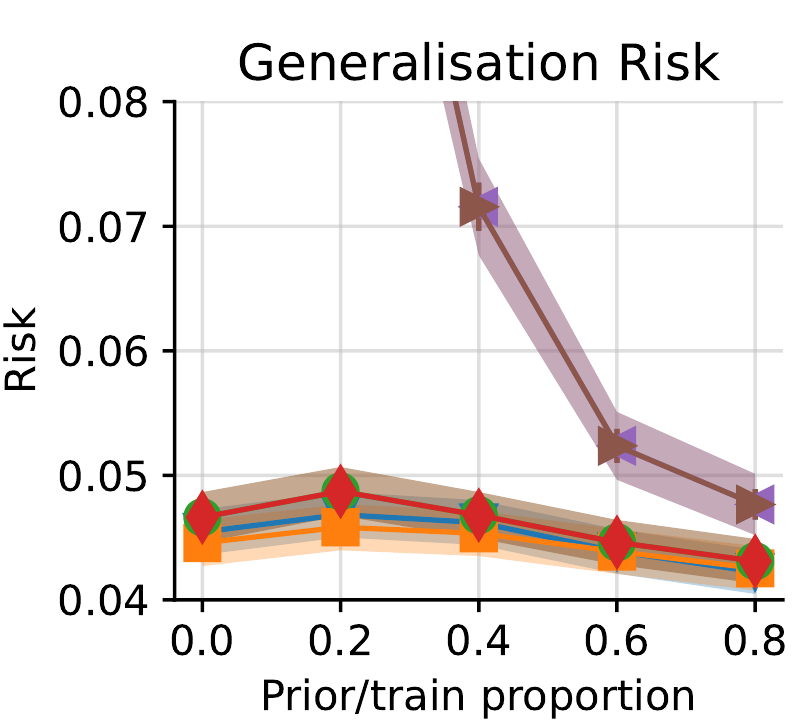}
  \caption{$N=30$ datapoints.}
\end{subfigure}%
\begin{subfigure}{.49\textwidth}
  \centering
  \includegraphics[width=.49\linewidth]{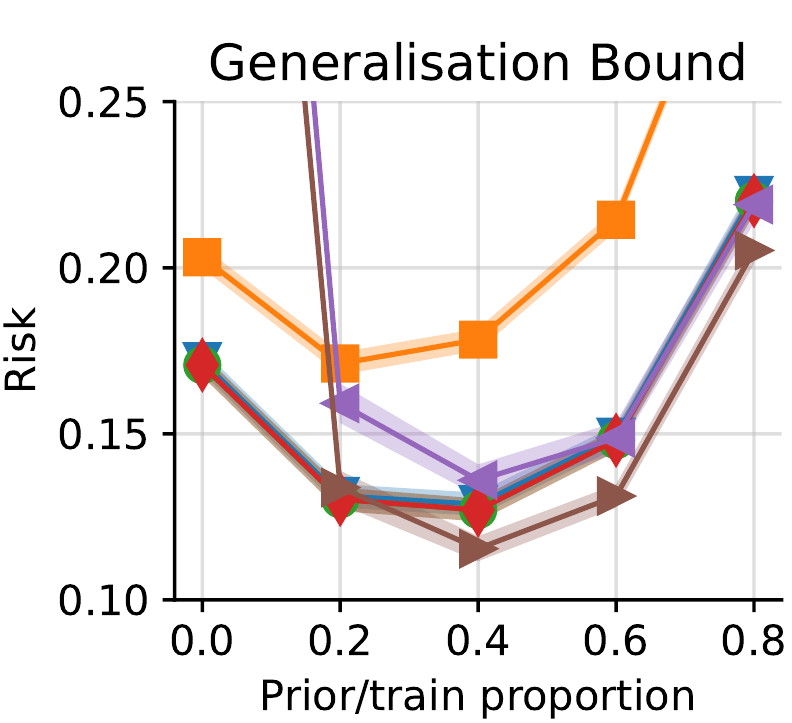}
  \includegraphics[width=.49\linewidth]{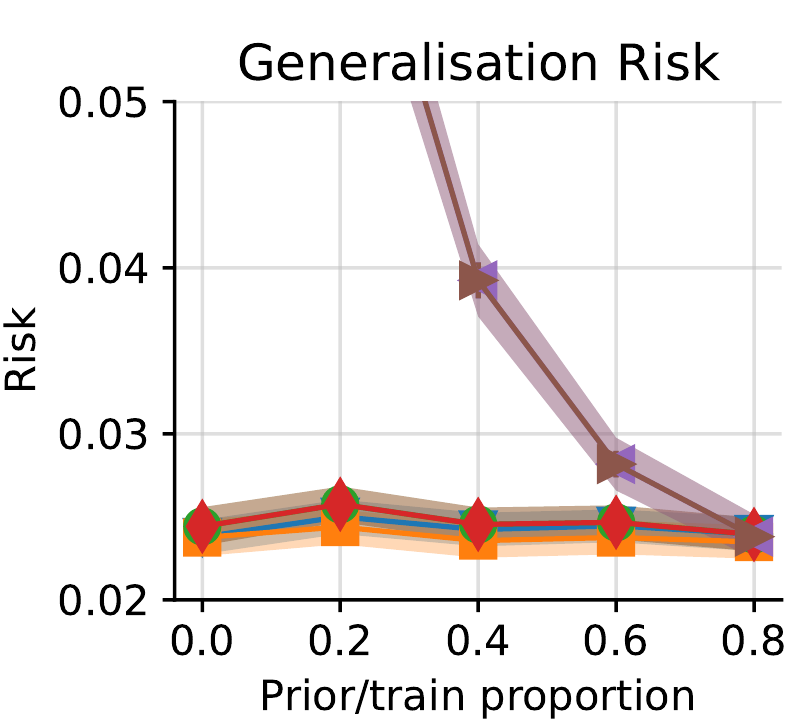}
  \caption{$N=60$ datapoints.}
\end{subfigure} 
\caption{\small
\textbf{Average generalisation bound and actual generalisation risk}
($\pm$ two standard errors) for CNN-NP meta-learners trained to optimise \textcolor{plotblue}{Catoni} ($\textcolor{plotblue}{\blacktriangledown}$), \textcolor{plotorange}{PAC-Bayes-kl} ($\textcolor{plotorange}{\blacksquare}$), \textcolor{plotgreen}{conjectured PAC-Bayes-kl} ($\textcolor{plotgreen}{\sbullet[1.5]}$), \textcolor{plotred}{learned convex} ($\textcolor{plotred}{\blacklozenge}$), \textcolor{plotpurple}{Chernoff test set} ($\textcolor{plotpurple}{\blacktriangleleft}$), 
and \textcolor{plotbrown}{binomial tail test set} ($\textcolor{plotbrown}{\blacktriangleright}$) bounds.
Catoni, conjectured PAC-Bayes-kl, and learned convex overlap. The generalisation risks for Chernoff and binomial tail test set bounds are identical as they share the same meta-learner; only the bound computation differs. The bounds are valid with failure probability $\delta = 0.1$ except for conjectured PAC-Bayes-kl, which should be a lower bound on the best bound achievable with \cref{thm:begin-bound}. Corresponding plots for the MLP-NP are in \cref{app:additional_plots_mlp}.
} \label{fig:gen_bounds_1d_GNP_post_opt}
\vspace{-0.5em}
\end{figure}
\begin{figure}\small
\centering
\begin{subfigure}{.49\textwidth}
  \centering
  \includegraphics[width=.99\linewidth]{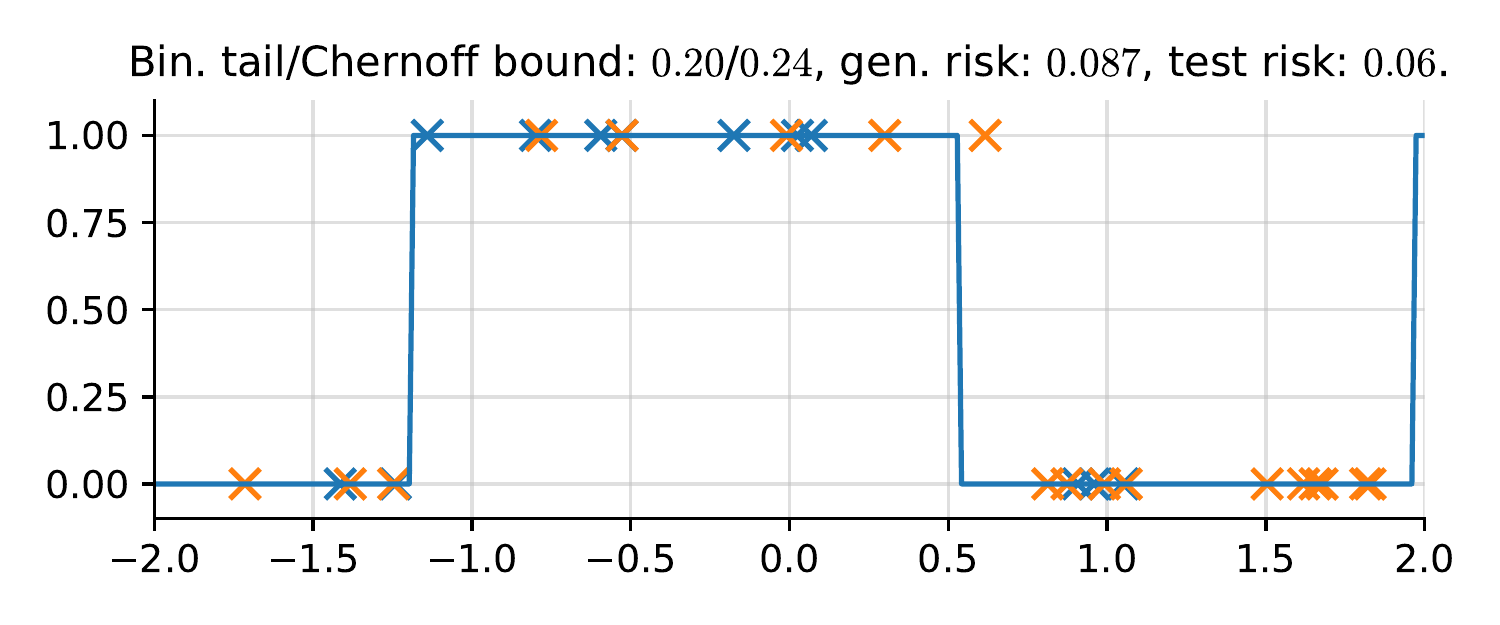}
  \caption{\textbf{Binomial tail/Chernoff test set} bounds, showing the learned hypothesis (\textcolor{plotblue}{---}), the train set (\textcolor{plotblue}{{\tiny \XSolid}}) of size 12 and the test set (\textcolor{plotorange}{{\tiny \XSolid}}) of size 18.}
\end{subfigure}\hfill%
\begin{subfigure}{.49\textwidth}
  \centering
  \includegraphics[width=.99\linewidth]{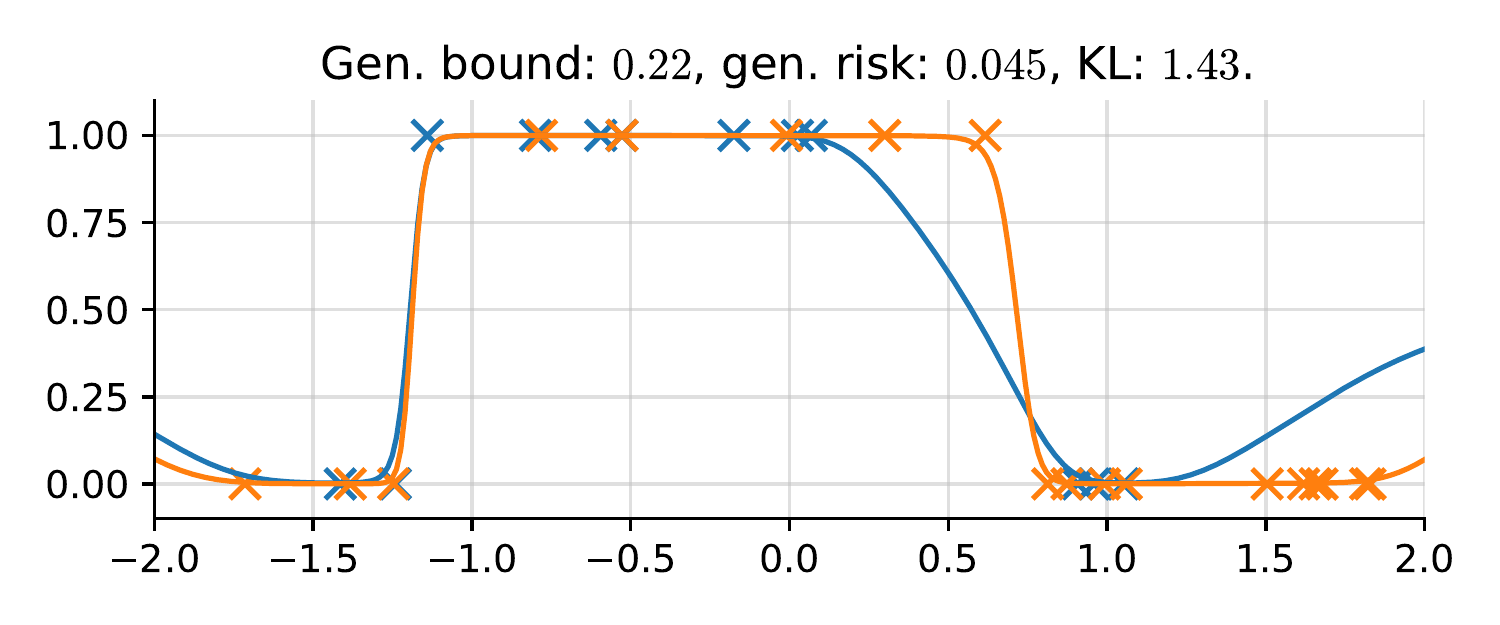}
  \caption{\textbf{Learned convex} bound with data-dependent prior, showing the prior (\textcolor{plotblue}{---}) and posterior (\textcolor{plotorange}{---}) predictive, prior set (\textcolor{plotblue}{{\tiny \XSolid}}) of size 12 and risk set (\textcolor{plotorange}{{\tiny \XSolid}}) of size 18.}
\end{subfigure}
\caption{\small\textbf{Predictions on one of the 1D datasets in the meta-test set} with $N=30$ and prior/train proportion $0.4$. For each method, we report the generalisation bound and actual generalisation risk. For the test set model, we also show the risk on the test set, and for the PAC-Bayes model we show the KL-divergence. The learned convex bound meta-learner has learned a DDP that provides a ``first guess'' given the prior set, which is then refined by the posterior. Figures for other PAC-Bayes bounds and datasets are provided in \cref{app:additional_plots_predictive}.}
\label{fig:example_classification_kl-val_catoni}
\vspace{-1em}
\end{figure}
We show example classification tasks and average bounds on the meta-test set in \cref{fig:example_classification_kl-val_catoni,fig:gen_bounds_1d_GNP_post_opt}. Note that the test set classifier became deterministic and makes hard predictions whereas the PAC-Bayes classifier shows uncertainty; see \cref{app:deterministic_val_model} for a discussion. The PAC-Bayes-kl bound is loosest, which is unsurprising as it has no optimisable parameters to adapt to $\mathcal{T}$.\footnote{This is in contrast with usual applications of PAC-Bayes, where one does not have a meta-dataset with which to optimise parameters of the bound. In that setting, it can be an advantage to not have tunable parameters.}  
Surprisingly, the results for Catoni, conjectured PAC-Bayes-kl, and learned convex are nearly identical. 
As long as optimisation has succeeded reasonably, this suggests empirically that, in light of \cref{cor:optimistic-MLS}, one of the Catoni bounds may be very nearly optimal among all convex functions for this task distribution --- there is not much ``slack'' from choosing suboptimal $\Delta$ here.
We also see that the Catoni and learned convex bounds with prior proportion $0.4$ are tighter than any Chernoff test set bound. Hence, \emph{PAC-Bayes can provide slightly tighter (or comparable) generalisation bounds to a Chernoff test set bound}. However, we see that \emph{the binomial tail test set bound with $40\%$ of the data used for the selecting the predictor and the remaining $60\%$ used for evaluating the bound leads to the tightest generalisation bounds overall}. \Cref{cor:optimistic-MLS} sheds light on this behaviour: the optimal generic PAC-Bayes bound reduces, at best, to the Chernoff test set bound when the posterior equals the prior. However, the Chernoff bound is itself looser than the binomial tail bound.
Of course, the posterior does not equal the prior here, but \cref{cor:optimistic-MLS} indicates there is an extra source of looseness that PAC-Bayes has to overcome relative to the binomial tail bound.
Finally, although the test set meta-learner leads to the tightest generalisation bounds, its generalisation risk is roughly double that of the PAC-Bayes meta-learner when the prior/train set proportion is $0.4$. 

\vspace*{-0.5em}
\section{Conclusions, Open Problems, and Limitations}
\label{sec:conclusion}
\vspace*{-0.5em}

PAC-Bayes presents a potentially attractive framework for obtaining tight generalisation bounds in the small-data regime. We have investigated the tightness of PAC-Bayes and test set bounds in this regime both theoretically and experimentally. Theoretically, we showed that the generic PAC-Bayes theorem of \citet{germain2009pac} and \citet{begin2016pac} which encompasses a wide range of PAC-Bayes bounds, cannot produce tighter bounds in expectation than the expression obtained by discarding the $\log (2\smash{\sqrt{N}})/N$ term in the \citet{langford2001bounds} bound (\textit{i.e.}, the \emph{conjectured PAC-Bayes-kl bound}; \cref{cor:optimistic-MLS}).
Although we did not prove that the conjectured PAC-Bayes-kl bound is a valid generalisation bound, numerical evidence suggests (\cref{fig:stoch1,fig:stoch2}) that there may exist a convex function $\Delta$ which achieves it, at least for the distributions over empirical risk and KL-divergence we considered. This suggests the following open problem:
\vspace{\beforethmspace}\begin{openproblem} \label{open:lower-bound}
    For an arbitrary distribution over datasets, does there exist a choice of $\Delta$ such that the expected conjectured PAC-Bayes-kl bound is attained (\cref{cor:optimistic-MLS})?
    If not, how close can one get to the expected conjectured PAC-Bayes-kl bound?
\end{openproblem}
If such a $\Delta$ exists, then that would imply the conjectured PAC-Bayes-kl bound is a valid generalisation bound (see \cref{sec:characterising-generic-bounds}) and resolve Problem 6.1.2 of \citet{langford2002quantitatively} in the affirmative.

We then considered, in a controlled experimental setting where meta-learning all parameters of the bounds and learning algorithms was feasible, whether PAC-Bayes bounds could be tighter than test set bounds. Although we found PAC-Bayes competitive with Chernoff bounds, both were outperformed by the binomial tail test set bound. This motivates a second open problem:
\vspace{\beforethmspace}\begin{openproblem} \label{open:relax-to-binom}
    Can a PAC-Bayes bound be found that relaxes gracefully to the binomial tail test set bound (\cref{lem:bin-inv-bound}) when the posterior is equal to the prior?
\end{openproblem}
Resolving these problems could have a significant impact on the tightness of PAC-Bayes applied to small-data, and clarify our understanding of the relationship between PAC-Bayes and test set bounds.

\shortsubsection{Limitations.}
In this paper, we  concern ourselves with understanding the tightness of bounds in what might be called the \emph{standard PAC-Bayes setting} of supervised learning: bounded losses, i.i.d.~data, and Gibbs risk. We also focus on bounds that are first order in the sense that they rely only on the empirical Gibbs risk, though extending the analysis to consider other PAC-Bayes theorems (\textit{e.g.}~\citet{tolstikhin2013pac,rivasplata2020pac}) would be of interest, especially with regards to \cref{open:lower-bound,open:relax-to-binom}. For many practical applications in which performance guarantees are needed (\textit{e.g.}~health care), the i.i.d.~assumption should be considered carefully, as it is likely an unrealistic simplification. Furthermore, Gibbs classifiers are less commonly used than deterministic classifiers in practice. To address these and other concerns, PAC-Bayes has been generalised in many directions beyond the scope of the standard setting we consider. Examples include bounds for non-i.i.d.~data \citep{6257492,alquier2018simpler,rivasplata2020pac}, unbounded losses \citep{germain2016pac}, derandomised classifiers \citep{blanchard2007occam,viallard2021general}, and Bayes risk \citep{germain2015majorityvote,masegosa2020secondorder}. Bounds based on other divergences besides the KL have also been proposed \citep{begin2016pac,alquier2018simpler}. As our proof relies primarily on tools from convex analysis, and Jensen's inequality is ubiquitous in PAC-Bayes bounds, it would be interesting to see if our arguments can be extended beyond the limited setting we focus on. 

Finally, our meta-learning experiments only considered 1D classification, and the results might not necessarily be representative of more realistic datasets. We also only consider Gaussian prior and posterior distributions in our experiments for the sake of tractability. Scaling up the experiments and considering more flexible distributions is an important, but potentially challenging, avenue for future work.
\clearpage

\section*{Acknowledgements and Funding Transparency Statement}
We would like to thank Pierre Alquier and Yann Dubois for insightful discussions, John Langford for clarifying a remark on test set bounds, David Janz, Will Tebbutt, and Austin Tripp for providing helpful comments on a draft version of this paper, and Omar Rivasplata for helpful comments on an earlier version of this manuscript. Andrew Y.~K.~Foong gratefully acknowledges funding from a Trinity Hall Research Studentship and the George and Lilian Schiff Foundation.
Wessel P.\ Bruinsma was supported by the Engineering and Physical Research Council (studentship number 10436152).
David R.~Burt acknowledges funding from the Qualcomm Innovation Fellowship and the Williams College Herchel Smith Fellowship. Richard E.~Turner is supported by Google, Amazon, ARM, Improbable, Microsoft, EPSRC grant EP/T005637/1, and the UKRI Centre for Doctoral Training in the Application of Artificial Intelligence to the study of Environmental Risks (AI4ER).
\printbibliography


\clearpage

\appendix

\section{Relationship Between PAC-Bayes and Occam Bound} \label{app:occam}

The well-known \emph{Occam} bounds can be derived by a simple application of the union bound to a countable hypothesis class $\mathcal{H}$. In particular, we can consider a ``prior'' distribution $P$, which functions similarly to the PAC-Bayes prior. Then by applying the union bound and weighting each hypothesis $h$ with a failure probability of $P( \{ h \}) \delta$, we can convert any test set bound into a corresponding train set bound. Applying this to \cref{lem:bin-inv-bound}, we obtain:
\begin{theorem}[Binomial tail Occam bound, \citet{langford2002quantitatively}, Theorem 4.6.1] \label{thm:binomial-occam}
Let $\mathcal{H}$ be countable, and fix $P \in \mathcal{M}_1(\mathcal{H})$, $\ell \in \{ 0 , 1\}$ and $\delta \in (0,1)$. Then
\begin{align}
    \mathrm{Pr} \Big( (\forall h) \,\, R_D(h) \leq \overline{e} \big(\setsize, \setsize R_{\set} (h), P( \{ h \}) \delta \big) \Big) \geq 1 - \delta,
\end{align}
where $\overline{e}$ is defined in \cref{lem:bin-inv-bound}.
\end{theorem}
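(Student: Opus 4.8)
The plan is to derive this train-set bound from the per-hypothesis binomial tail test set bound (\cref{lem:bin-inv-bound}) by a union bound over the countable class $\hypotheses$, exactly as the surrounding text sketches. The key observation is that for any \emph{fixed} $h \in \hypotheses$, chosen without reference to the data, the full sample $\set \sim D^{\setsize}$ plays the role of a held-out set for $h$: under the zero-one loss $\setsize R_{\set}(h) \sim \operatorname{Bin}(\setsize, R_D(h))$, so \cref{lem:bin-inv-bound} applies verbatim with $\set$ in place of $\testset$ and $\setsize$ in place of $\testsetsize$. Instantiating it at the hypothesis-dependent failure probability $\delta_h \coloneqq P(\{h\})\delta$ gives, for each $h$,
\[
    \textstyle \mathrm{Pr}\big(R_D(h) > \overline{e}(\setsize, \setsize R_{\set}(h), P(\{h\})\delta)\big) \le P(\{h\})\delta.
\]

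First I would dispose of the degenerate case $P(\{h\}) = 0$, where $\delta_h = 0$. Since $p = 0$ always lies in the set defining $\overline{e}$ and the constraint $\delta' = 0$ imposes nothing, one has $\overline{e}(\setsize, \vardot, 0) = 1 \ge R_D(h)$ (as $\ell \in \{0,1\}$ forces $R_D(h) \in [0,1]$); hence the per-hypothesis bound holds surely for such $h$ and contributes nothing to the total failure probability. For every $h$ with $P(\{h\}) > 0$, the displayed inequality is precisely the content of \cref{lem:bin-inv-bound}.

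The core step is then the union bound over the countable class. Summing the per-hypothesis failure probabilities and using that $P$ is a probability measure on $\hypotheses$,
\[
    \textstyle \mathrm{Pr}\big(\exists h \in \hypotheses : R_D(h) > \overline{e}(\setsize, \setsize R_{\set}(h), P(\{h\})\delta)\big) \le \sum_{h \in \hypotheses} P(\{h\})\delta = \delta \sum_{h \in \hypotheses} P(\{h\}) = \delta.
\]
Taking complements yields the stated guarantee.

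The argument is essentially routine, so I do not expect a substantial obstacle; the only genuine subtleties are (i) justifying the per-hypothesis application of \cref{lem:bin-inv-bound}, which is legitimate precisely because each $h$ is fixed in advance and therefore independent of $\set$, and (ii) handling hypotheses of zero prior mass together with the boundary behaviour of $\overline{e}$ at zero failure probability, both addressed above. Countability of $\hypotheses$ is exactly what makes the union bound applicable, and no further difficulty arises.
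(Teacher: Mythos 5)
Your proposal is correct and follows exactly the route the paper sketches for \cref{thm:binomial-occam}: apply the binomial tail test set bound (\cref{lem:bin-inv-bound}) to each fixed $h$ with per-hypothesis failure probability $P(\{h\})\delta$, then union bound over the countable class using $\sum_{h} P(\{h\}) = 1$. The paper gives only this sketch without spelling out the details; your handling of the zero-mass hypotheses and the boundary behaviour of $\overline{e}$ at zero failure probability is a sound filling-in of those details.
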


Alternatively, applying this procedure to \cref{lem:kl-chernoff-bound} yields a looser bound:
\begin{theorem}[Chernoff Occam bound, \citet{langford2002quantitatively}, Corollary 4.6.2] \label{thm:chernoff-occam}
Let $\mathcal{H}$ be countable, and fix $P \in \mathcal{M}_1(\mathcal{H})$, $\ell \in [0 , 1]$ and $\delta \in (0,1)$. Then
\begin{align}
    \mathrm{Pr} \left( (\forall h) \,\, \mathrm{kl}(R_S(h), R_D(h)) \leq \frac{1}{N} \left[ \log \frac{1}{P(\{ h \})} + \log \frac{1}{\delta} \right] \right) \geq 1 - \delta.
\end{align}
\end{theorem}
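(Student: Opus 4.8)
The plan is to obtain the uniform-over-$\mathcal{H}$ statement by applying the Chernoff test set bound (\cref{lem:kl-chernoff-bound}) to each hypothesis separately and then combining the resulting guarantees with a union bound whose failure budget is allocated nonuniformly according to the prior $P$. The key structural observation is that, since $P$ is fixed without reference to the data, every $h$ in the support of $P$ is independent of $S$; we may therefore invoke \cref{lem:kl-chernoff-bound} with the full dataset $S$ playing the role of the test set (i.e.\ with $\testsetsize = \setsize$). This is exactly the recipe used in \cref{app:occam} to derive the binomial-tail Occam bound (\cref{thm:binomial-occam}) from \cref{lem:bin-inv-bound}, now applied to the Chernoff bound instead.

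First I would set up the allocation of failure probability. For each $h \in \mathcal{H}$ with $P(\{h\}) > 0$, define $\delta_h \coloneqq P(\{h\})\,\delta$, which lies in $(0,1)$ because $P(\{h\}) \le 1$ and $\delta < 1$. Applying \cref{lem:kl-chernoff-bound} with failure probability $\delta_h$, and using $\log\tfrac{1}{\delta_h} = \log\tfrac{1}{P(\{h\})} + \log\tfrac{1}{\delta}$, the ``bad'' event
\[
    E_h \coloneqq \Big\{ \mathrm{kl}(R_S(h), R_D(h)) > \tfrac{1}{N}\big[\log\tfrac{1}{P(\{h\})} + \log\tfrac{1}{\delta}\big] \Big\}
\]
satisfies $\mathrm{Pr}(E_h) \le \delta_h = P(\{h\})\,\delta$. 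For the remaining atoms, those $h$ with $P(\{h\}) = 0$, the right-hand side equals $+\infty$, so $E_h$ is empty and contributes nothing; these need to be excluded from the application of \cref{lem:kl-chernoff-bound} precisely because $\delta_h = 0 \notin (0,1)$.

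Next I would take the union bound over the countable index set $\mathcal{H}$ and use that the atoms of a probability measure sum to one:
\[
    \mathrm{Pr}\Big( \bigcup_{h \in \mathcal{H}} E_h \Big)
    \le \sum_{h \in \mathcal{H}} \mathrm{Pr}(E_h)
    \le \delta \sum_{h \in \mathcal{H}} P(\{h\})
    = \delta.
\]
Complementing, the event $\bigcap_{h} E_h^{\,c}$ is exactly the event that the claimed inequality holds for all $h$ simultaneously, and it has probability at least $1 - \delta$, which is the assertion of \cref{thm:chernoff-occam}.

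I do not expect a genuine obstacle here: the argument is a routine nonuniform (weighted) union bound, and the only points requiring care are bookkeeping rather than conceptual. Specifically, one must (i) verify $\delta_h \in (0,1)$ so that \cref{lem:kl-chernoff-bound} is actually applicable, (ii) handle atoms with $P(\{h\}) = 0$ separately, where the bound holds vacuously, and (iii) invoke the countability of $\mathcal{H}$ so that both the union bound and the normalisation $\sum_h P(\{h\}) = 1$ are valid. The genuinely substantive input, namely the per-hypothesis concentration inequality, is entirely supplied by \cref{lem:kl-chernoff-bound}.
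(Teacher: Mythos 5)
Your proof is correct and follows exactly the route the paper indicates for this result: apply the Chernoff test set bound (\cref{lem:kl-chernoff-bound}) to each fixed $h$ with failure probability $P(\{h\})\,\delta$, then take a weighted union bound over the countable hypothesis class. The paper only sketches this derivation in \cref{app:occam}; your write-up fills in the same argument with the appropriate bookkeeping for zero-mass atoms.
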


Following \citet[Sec 5.1]{langford2005tutorial}, it is instructive to compare the Chernoff Occam bound with the PAC-Bayes-kl bound (\cref{cor:maurer}) when $\mathcal{H}$ is countable and $Q$ is constrained to be a point mass, \textit{i.e.}~$Q = Q_h \coloneqq \delta_h$, where $\delta_h$ denotes the Dirac measure at $h$. In that case, $\mathrm{KL}(Q_h \| P)$ reduces to $\log(1/P(\{ h \}))$, and the Gibbs risks $\smash{\gibbsR}_S(Q_h), \smash{\gibbsR}_D(Q_h)$ simply reduce to the risks $R_S(h), R_D(h)$. Then the PAC-Bayes-kl bound states that:
\begin{align} \label{eqn:MLS-vs-occam}
    \mathrm{Pr} \left( (\forall h) \,\, \mathrm{kl}(R_S(h), R_D(h)) \leq \frac{1}{N} \left[ \log \frac{1}{P(\{ h \})} + \log \frac{2\sqrt{N}}{\delta} \right] \right) \geq 1 - \delta.
\end{align}
Comparing \cref{eqn:MLS-vs-occam} with \cref{thm:chernoff-occam}, we see that the PAC-Bayes-kl bound leads to a bound on $\mathrm{kl}(R_S(h), R_D(h))$ which is looser by an additive constant of $\smash{\log (2 \sqrt{N}) / N}$ compared to the Chernoff Occam bound. Hence the PAC-Bayes bound does not relax gracefully to the Occam bound in this case, which motivates Open Problem 6.1.2 in \citet{langford2002quantitatively}. In fact, by \cref{rem:nonsep-delta,cor:optimistic-MLS}, we know that \emph{if} we could find a convex $\Delta$ that allowed us to remove this $\smash{\log (2 \sqrt{N}) / N}$ term (\textit{i.e.}, the conjectured PAC-Bayes-kl bound), this would be the tightest possible bound obtainable from the generic PAC-Bayes theorem (\cref{thm:begin-bound}). This motivates \cref{open:lower-bound}. Finally, noting that \cref{thm:chernoff-occam} is itself a looser version of \cref{thm:binomial-occam}, we see that a PAC-Bayes bound that relaxes gracefully to \cref{thm:binomial-occam} is not obtainable from \cref{thm:begin-bound}, motivating \cref{open:relax-to-binom}.

\section{Proof of Generic PAC-Bayes Theorem (Theorem \ref{thm:begin-bound})} \label{app:proof-begin-bound}

We provide a proof of \cref{thm:begin-bound} here for convenience, which closely follows the proof given in \citet{begin2016pac}. We first require a well-known lemma:
\begin{lemma}[Kullback-Leibler change of measure inequality, {\citealp[, Corollary 4.15]{boucheron2013concentration}}] \label{lem:change-of-measure}
    For any set $\mathcal{H}$, probability measures $P, Q \in \mathcal{M}_1(\mathcal{H})$, and measurable function $\phi: \mathcal{H} \to \mathbb{R}$,
    \begin{align}
        \mathbb{E}_{h \sim Q} \phi(h) \leq \mathrm{KL} (Q \| P) + \log \left( \mathbb{E}_{h \sim P} e^{\phi(h)} \right).
    \end{align}
\end{lemma}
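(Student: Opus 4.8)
The plan is to prove this classical change-of-measure inequality (the Donsker--Varadhan bound) by reducing to the nontrivial case and then exploiting the nonnegativity of a KL-divergence against an auxiliary Gibbs measure. First I would dispose of the degenerate cases. If $Q \not\ll P$ then $\mathrm{KL}(Q\|P) = \infty$ and, since $e^{\phi(h)} > 0$ everywhere, $\mathbb{E}_{h\sim P} e^{\phi(h)} > 0$, so the right-hand side is $+\infty$ and the inequality holds trivially; similarly if $Z \coloneqq \mathbb{E}_{h\sim P} e^{\phi(h)} = +\infty$. It therefore suffices to treat the case $Q \ll P$ with $0 < Z < \infty$.

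In that case I would introduce the tilted (Gibbs) measure $g$ defined by $\frac{\mathrm{d}g}{\mathrm{d}P}(h) = e^{\phi(h)}/Z$, which is a valid probability measure by the choice of normaliser $Z$. Since $Q \ll P$ and $g$ is equivalent to $P$, we have $Q \ll g$, and the chain rule for Radon--Nikodym derivatives gives $\frac{\mathrm{d}Q}{\mathrm{d}g} = \frac{\mathrm{d}Q}{\mathrm{d}P} \cdot \frac{Z}{e^{\phi}}$ $Q$-almost everywhere. Taking logarithms and integrating against $Q$ yields the identity $\mathrm{KL}(Q\|g) = \mathrm{KL}(Q\|P) + \log Z - \mathbb{E}_{h\sim Q}\phi(h)$. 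The key input is then Gibbs' inequality $\mathrm{KL}(Q\|g) \ge 0$ (itself a one-line consequence of Jensen's inequality applied to the convex function $-\log$), and rearranging this gives exactly $\mathbb{E}_{h\sim Q}\phi(h) \le \mathrm{KL}(Q\|P) + \log Z$, as required.

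Alternatively, and perhaps more directly, I could avoid constructing $g$ and argue by Jensen's inequality in a single step: in the case $Q \ll P$, since $e^{\phi} \ge 0$ I can restrict the integral $\mathbb{E}_{h\sim P} e^{\phi}$ to the support of $Q$ and change measure to obtain $\mathbb{E}_{h\sim P} e^{\phi(h)} \ge \mathbb{E}_{h\sim Q}\big[e^{\phi(h)}\,\tfrac{\mathrm{d}P}{\mathrm{d}Q}(h)\big]$; applying the concavity of $\log$ via Jensen then gives $\log \mathbb{E}_{h\sim P} e^{\phi(h)} \ge \mathbb{E}_{h\sim Q}\phi(h) - \mathrm{KL}(Q\|P)$, which rearranges to the claim.

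The mathematics here is elementary, so the main obstacle is purely measure-theoretic bookkeeping rather than any genuine difficulty: I need to ensure the Radon--Nikodym manipulations are justified $Q$-almost everywhere, and to make sense of the inequality when $\mathbb{E}_{h\sim Q}\phi(h)$ fails to be finite (it is automatically bounded above by the finite right-hand side, while the case where it is $-\infty$ causes no problem). For the purposes of this paper these integrability caveats are standard and can be handled exactly as in the cited reference of \citet{boucheron2013concentration}.
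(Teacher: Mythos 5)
Your proof is correct. The paper itself does not prove this lemma --- it is stated with a citation to \citet{boucheron2013concentration} and used as a black box in the proof of the generic PAC-Bayes theorem --- and your argument via the tilted Gibbs measure $g$ with $\tfrac{\mathrm{d}g}{\mathrm{d}P} = e^{\phi}/Z$ and the nonnegativity of $\mathrm{KL}(Q\|g)$ is precisely the standard Donsker--Varadhan argument underlying the cited result; the alternative one-step Jensen route you sketch is an equally valid rearrangement of the same computation. Your handling of the degenerate cases ($Q \not\ll P$ or $Z = \infty$) and your acknowledgement of the integrability caveat for $\mathbb{E}_{h\sim Q}\phi(h)$ are appropriate and at least as careful as what the paper requires.
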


In order to deal with general bounded losses $\ell \in [0,1]$, we also use a lemma proven in \citet{maurer2004note}:
\begin{lemma}[\citet{maurer2004note}, Lemma 3] \label{lem:maurer-convex}
    For any $[0, 1]$-valued random variable $z$, let $z'$ denote the unique Bernoulli random variable with $\mathbb{E}[z'] = \mathbb{E}[z]$. Let $S=(z_1, \hdots, z_N)$ and $S' = (z_1', \hdots, z_N')$ denote tuples of $N$ such random variables. Then for any convex function $f: [0, 1]^N \to \mathbb{R}$,
\begin{align}
    \mathbb{E}[f(S)] \leq \mathbb{E}[f(S')].
\end{align}
\end{lemma}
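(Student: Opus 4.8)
The plan is to prove the inequality by coupling $S'$ to $S$ and then applying the conditional form of Jensen's inequality. The driving observation is that every $[0,1]$-valued random variable $z$ arises as the conditional mean of a Bernoulli variable: on a suitably enlarged probability space, introduce an auxiliary variable $\tilde z$ whose conditional law given $z$ is $\mathrm{Bernoulli}(z)$. Then $\mathbb{E}[\tilde z \mid z] = z$ and $\mathbb{E}[\tilde z] = \mathbb{E}[z]$, so $\tilde z$ is Bernoulli with the same mean as $z$ and therefore has the same distribution as $z'$.

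First I would apply this construction coordinate-wise. Given $S = (z_1, \dots, z_N)$ with independent components, introduce variables $\tilde z_1, \dots, \tilde z_N$ that are conditionally independent given $(z_1, \dots, z_N)$, with $\tilde z_i \mid (z_1, \dots, z_N) \sim \mathrm{Bernoulli}(z_i)$. Since the $z_i$ are independent and each $\tilde z_i$ depends only on $z_i$, the $\tilde z_i$ are independent Bernoulli variables with $\mathbb{E}[\tilde z_i] = \mathbb{E}[z_i]$; hence the vector $\tilde S \coloneqq (\tilde z_1, \dots, \tilde z_N)$ has exactly the same joint law as $S' = (z_1', \dots, z_N')$. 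This is the step where independence of the components is used. Writing $\mathcal{F} = \sigma(z_1, \dots, z_N)$, we have $\mathbb{E}[\tilde S \mid \mathcal{F}] = S$ because $\mathbb{E}[\tilde z_i \mid \mathcal{F}] = z_i$. Applying conditional Jensen to the convex $f$ and then taking expectations gives
\begin{align*}
    \mathbb{E}[f(S)] = \mathbb{E}\big[f(\mathbb{E}[\tilde S \mid \mathcal{F}])\big] \le \mathbb{E}\big[\mathbb{E}[f(\tilde S) \mid \mathcal{F}]\big] = \mathbb{E}[f(\tilde S)] = \mathbb{E}[f(S')],
\end{align*}
where the final equality is the distributional identity just established. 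This is the claimed inequality.

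An elementary alternative avoids coupling altogether: using the pointwise bound $f(\dots, t, \dots) \le (1-t)f(\dots, 0, \dots) + t\, f(\dots, 1, \dots)$ (convexity of $f$ along the $i$-th coordinate), one replaces the coordinates of $S$ by their Bernoulli counterparts one at a time, conditioning on the remaining coordinates and using independence to freeze the relevant conditional mean at $\mathbb{E}[z_i]$. This telescopes through the hybrid vectors $(z_1', \dots, z_k', z_{k+1}, \dots, z_N)$ to the same conclusion. I do not anticipate a genuine obstacle: the lemma is elementary, and the only points needing care are the joint-distribution identity for $\tilde S$ (which relies on independence) and the applicability of conditional Jensen, which is automatic since a finite convex function on the compact set $[0,1]^N$ is bounded (hence integrable against any probability measure) and measurable.
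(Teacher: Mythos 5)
Your proof is correct. Note that the paper does not actually prove \cref{lem:maurer-convex}: it is imported verbatim from \citet{maurer2004note} (Lemma 3) and used as a black box in the proof of \cref{thm:begin-bound}, so there is no in-paper argument to compare against. Your ``elementary alternative'' is essentially Maurer's original proof: replace the coordinates one at a time, using convexity of $f$ in the $i$-th coordinate (equivalently, Jensen's inequality for the Bernoulli measure with mean $z_i$) while conditioning on, and exploiting independence of, the remaining coordinates. Your primary coupling argument is a clean global repackaging of the same idea --- conditional Jensen applied once to the randomisation $\tilde z_i \mid z_i \sim \mathrm{Bernoulli}(z_i)$ --- and both the distributional identity $\tilde S \overset{d}{=} S'$ and the integrability needed for conditional Jensen are justified correctly (the latter is immediate since $f(\tilde S)$ is supported on the finite set $\{0,1\}^N$). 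One point you handle well that the statement glosses over: as transcribed, the lemma omits the hypothesis that the $z_1, \dots, z_N$ (and hence the $z_1', \dots, z_N'$) are independent, which is needed both for $S'$ to have a well-defined joint law and for the coupling step; in the paper's application the $z_i$ are the losses $\ell((x_n, y_n), h)$ of i.i.d.\ data points, so independence holds, and you are right to flag it as the place where the hypothesis enters.
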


We can now prove \cref{thm:begin-bound}:
\begin{proof}[Proof of \cref{thm:begin-bound}.]
    Applying Jensen's inequality followed by \cref{lem:change-of-measure}, we have, for all $Q \in \mathcal{M}_1(\mathcal{H})$:
    \begin{align}
        N \Delta(\gibbsR_S(Q), \gibbsR_D(Q)) &= N \Delta(\mathbb{E}_{h \sim Q} R_S(h), \mathbb{E}_{h \sim Q} R_D(h)) \\
        &\leq \mathbb{E}_{h \sim Q} N \Delta (R_S(h), R_D(h)) \\
        &\leq \mathrm{KL}(Q \| P) + \log \left( \mathbb{E}_{h \sim P} e^{N\Delta (R_S(h), R_D(h))} \right). \label{eqn:KL-log-MGF-bound}
    \end{align}
    Applying Markov's inequality to the random variable $\mathbb{E}_{h \sim P} e^{N\Delta (R_S(h), R_D(h))}$ (which is random through $S \sim D^N$), we obtain, for any $\delta \in (0,1)$:
    \begin{align}
        \mathrm{Pr} \left( \mathbb{E}_{h \sim P} e^{N\Delta (R_S(h), R_D(h))} \leq \frac{\mathbb{E}_{S \sim D^N} \mathbb{E}_{h \sim P} e^{N\Delta (R_S(h), R_D(h))}}{\delta}  \right) \geq 1 - \delta.
    \end{align}
    Combining this with \cref{eqn:KL-log-MGF-bound} yields, with probability at least $1 - \delta$, for all $Q$ simultaneously:
    \begin{align} \label{eqn:markov}
       \Delta(\gibbsR_S(Q), \gibbsR_D(Q)) \leq \frac{1}{N} \left[ \mathrm{KL}(Q \| P) + \log \frac{\mathbb{E}_{S \sim D^N} \mathbb{E}_{h \sim P} e^{N\Delta (R_S(h), R_D(h))}}{\delta}  \right].
    \end{align}
    Finally, we upper bound $\mathbb{E}_{S \sim D^N} \mathbb{E}_{h \sim P} e^{N\Delta (R_S(h), R_D(h))} $ by a quantity than can be computed without knowing the true distribution $D$. By Tonelli's theorem:
    \begin{align} \label{eqn:fubini}
        \mathbb{E}_{S \sim D^N}  \mathbb{E}_{h \sim P} e^{N\Delta (R_S(h), R_D(h))}  &= \mathbb{E}_{h \sim P} \mathbb{E}_{S \sim D^N}  e^{N\Delta (R_S(h), R_D(h))}.
    \end{align}
    We will now upper bound the inner expectation by a quantity that is independent of $h$. Denote the datapoints in $S$ as $S= ((x_1, y_1), \hdots, (x_N, y_N))$. Recall that $R_S(h) \coloneqq \frac{1}{N} \sum_{n=1}^N \ell((x_n,y_n), h)$, and note that $\ell((x_n,y_n), h)$ is a $[0, 1]$-valued random variable. Let $L$ denote the $N$-tuple of these random variables for each datapoint in $S$, \textit{i.e.}~$L=(\ell((x_1,y_1), h), \hdots, \ell((x_N,y_N), h))$, and let $M(L) \coloneqq \frac{1}{N} \sum_{n=1}^N L_n$ be the arithmetic mean of $L$. Then the function $f: L \mapsto e^{N\Delta (M(L), R_D(h))}$ is convex since $M(L)$ is linear in $L$, $\Delta$ is convex and the exponential function is convex and nondecreasing. Hence defining $L'$ as the $N$-tuple of Bernoulli random variables such that $\mathbb{E}[L'_n] = \mathbb{E}[L_n] = R_D(h)$ for $1 \leq n \leq N$ and applying \cref{lem:maurer-convex},
    \begin{align} \textstyle
         \mathbb{E}_{S \sim D^N}  e^{N\Delta (R_S(h), R_D(h))} &= \mathbb{E}\,  e^{N\Delta (M(L), R_D(h))} \\
        &= \textstyle \mathbb{E}\, f(L)\\
        &\leq \mathbb{E}\, f(L') \\
        &= \textstyle \sum_{k=0}^N \mathrm{Pr} \big( M(L') = k/N \big) e^{N \Delta(k / N, R_D(h))} \\
        &= \textstyle \sum_{k=0}^N \binom{N}{k} R_D(h)^k (1 - R_D(h))^{N-k} e^{N \Delta(k / N, R_D(h))} \\
        &\leq \textstyle \sup_{r \in [0, 1]} \sum_{k=0}^N \binom{N}{k} r^k (1 - r)^{N-k} e^{N \Delta(k / N, r)}\\
        &= \textstyle \mathcal{I}_{\Delta}(N).
    \end{align}
    Substituting this into \cref{eqn:fubini} and then \cref{eqn:markov} completes the proof.
\end{proof}

\section{Basic Facts from Convex Analysis}
\label{app:convex}
A function $f \colon \mathbb{R}^n \to \mathbb{R} \cup \{+\infty\}$ is called \emph{proper} if it not everywhere $\infty$ and nowhere $-\infty$.
The convex conjugate $f^* \colon \mathbb{R}^n \to \mathbb{R} \cup \{+\infty\}$ of $f$ is defined as
$
    f^*(c) = \sup_{x \in \mathbb{R}^n}\,(\langle c, x \rangle - f(x))
$.
If $f$ is proper, convex, and l.s.c., then
$f^*$ is also proper, convex, and l.s.c.
Moreover, if $f$ is proper, convex, and l.s.c., then $f$ is equal to its double convex conjugate:
$
    f(x) = \sup_{c \in \mathbb{R}^n}\,(\langle c, x \rangle - f^*(c))
$.
A pointwise supremum of convex functions is again convex;
and a pointwise supremum of l.s.c. functions remains l.s.c.
Convex functions $f \colon A \to \mathbb{R}$ defined on only a convex subset $A \subseteq \R^n$ are extended to the whole of $\R^n$ by setting $f|_{\R^n\setminus A}= +\infty$. See, for example, \citet{rockafellarvariational} for proofs of these results and an introduction to the topic.

\section{Monotonicity of \texorpdfstring{$\Delta$}{Delta}} \label{app:monotonicity}
\begin{proposition}
     For $\Delta \colon [0, 1]^2 \to \mathbb{R} \cup \{ +\infty \}$ a proper, convex, and lower semi-continuous function, $q \in [0, 1]$, $\delta \in (0,1]$, and $\operatorname{KL} \geq 0$,
    define
    \begin{equation}
        \overline{p}_{\Delta}
        = \sup\,\left\{
            p \in [0, 1]
            :
            \Delta(q, p) \le \frac{1}{N} \left(
                \operatorname{KL} + \log \frac{\mathcal{I}_\Delta(N)}{\delta}
            \right)
        \right\}
    \end{equation}
    Then there exists a proper, convex, lower semi-continuous $\Delta' \colon [0, 1]^2 \to \mathbb{R} \cup \{ +\infty \}$ such that $\overline{p}_{\Delta'} \leq \overline{p}_{\Delta}$, and for every $q\in [0,1]$, $\Delta'(q,\cdot)$ is monotonically increasing. 
\end{proposition}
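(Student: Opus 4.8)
The plan is to monotonise $\Delta$ in its second argument by taking the \emph{right-infimal envelope}
\[
  \Delta'(q,p) \coloneqq \inf_{t \in [p,1]} \Delta(q,t),
\]
equivalently $\Delta'(q,p) = \inf_{s\ge 0}\Delta(q,p+s)$ using the convention $\Delta(q,\cdot)=+\infty$ off $[0,1]$. Two properties are immediate. First, $\Delta'(q,\cdot)$ is monotonically increasing: as $p$ grows the feasible set $[p,1]$ shrinks, so the infimum can only increase. Second, $\Delta' \le \Delta$ pointwise, by taking $t = p$ (i.e.\ $s=0$). Intuitively, $\Delta'$ flattens the decreasing left branch of the convex slice $\Delta(q,\cdot)$ down to its minimum while leaving the increasing right branch untouched.

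Next I would check that $\Delta'$ is a legitimate choice for the generic theorem, i.e.\ proper, convex, and l.s.c. The key tool is that \emph{partial minimisation preserves convexity}: writing $\Delta'(q,p) = \inf_{t} F\big((q,p),t\big)$ with $F\big((q,p),t\big) = \Delta(q,t) + \iota_{\{t \ge p\}\cap\{t\in[0,1]\}}$, the function $F$ is jointly convex (since $\Delta(q,t)$ is convex in $(q,t)$ and the constraint set is convex), and the marginal of a jointly convex function is convex, so $\Delta'$ is convex. As the inner infimum is over the compact set $[0,1]$ and $F$ is jointly l.s.c.\ (the constraint set is closed), $\Delta'$ is l.s.c.\ and the infimum is attained. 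Properness follows from $-\infty < \inf \Delta \le \Delta' \le \Delta$ together with $\Delta$ being proper.

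The heart of the argument is to show $\overline{p}_{\Delta'} \le \overline{p}_{\Delta}$. Since $\Delta' \le \Delta$, we get $e^{N\Delta'(k/N,r)} \le e^{N\Delta(k/N,r)}$, hence $\mathcal{I}_{\Delta'}(N) \le \mathcal{I}_\Delta(N)$, so the thresholds $y_{\Delta'} \coloneqq \tfrac1N(\operatorname{KL}+\log\tfrac{\mathcal{I}_{\Delta'}(N)}{\delta})$ and $y_\Delta$ satisfy $y_{\Delta'} \le y_\Delta$. Because $B[f,\cdot]$ is non-decreasing in its second argument, it then suffices to prove $B[\Delta'(q,\cdot),\, y_\Delta] \le B[\Delta(q,\cdot),\, y_\Delta] = \overline{p}_\Delta$. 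This is the one step that is not automatic, since $\Delta'\le\Delta$ \emph{enlarges} sublevel sets. I would argue directly: set $\overline{p} = \overline{p}_\Delta$ and take any $p' > \overline{p}$; the attained minimiser $t^* \in [p',1]$ of $\Delta'(q,p')$ satisfies $t^* > \overline{p} = \sup\{t : \Delta(q,t) \le y_\Delta\}$, and since $\Delta(q,\cdot)$ is l.s.c.\ its sublevel set is closed, forcing $\Delta(q,t^*) > y_\Delta$ and hence $\Delta'(q,p') = \Delta(q,t^*) > y_\Delta$. Thus $\{p : \Delta'(q,p)\le y_\Delta\} \subseteq [0,\overline{p}]$, giving $B[\Delta'(q,\cdot),y_\Delta] \le \overline{p}$; the degenerate case $\overline{p} = 1$ is trivial since $B \le 1$.

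I expect the main obstacle to be precisely this last step: reconciling $\Delta'\le\Delta$, which helps through $\mathcal{I}$ but appears to hurt through the enlarged sublevel sets. The resolution is the observation that $\overline{p}_\Delta$ is governed by the increasing right branch of $\Delta(q,\cdot)$, which $\Delta'$ leaves unchanged, so the upcrossing point does not move while $\mathcal{I}$ strictly shrinks. The convexity-under-partial-minimisation step is standard but worth stating carefully, since a pointwise infimum of convex functions is not convex in general; compactness of $[0,1]$ is what additionally secures lower semicontinuity and attainment.
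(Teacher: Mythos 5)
Your proposal is correct and follows essentially the same route as the paper's proof: the same monotonised envelope $\Delta'(q,p)=\inf_{p'\ge p}\Delta(q,p')$, convexity via infimal projection of a jointly convex function, and the same key observation that $\Delta'\le\Delta$ shrinks $\mathcal{I}_\Delta$ while leaving the upcrossing point of the increasing branch unchanged. The only differences are cosmetic (you verify lower semicontinuity via compactness and attainment rather than via openness of strict sublevel sets, and you compare sublevel-set suprema at the single threshold $y_\Delta$ rather than for every threshold).
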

\begin{proof}
Define $\Delta'(q,p) = \inf_{p' \geq p} \Delta(q,p')$. We will prove that $\Delta'$ has the desired properties. First, $\Delta'$ is not infinity everywhere, as $\Delta'(q,p) \leq \Delta(q,p)$ and $\Delta$ is proper. Second, since $\Delta$ is l.s.c.~and proper, it obtains a minimum on the compact set $[0,1]^2$, hence $\Delta'$ does not take the value $-\infty$. Therefore, $\Delta'$ is proper.

Since $\Delta$ is l.s.c., the strict sublevel sets of $\Delta$ are open; that is, for all $y\in \mathbb{R}$, $\{(q,p):\Delta(q,p) < y\}$ is open. Then,
\begin{align}
    \{(q,p):\Delta'(q,p) < y\} &= \{(q,p): \inf_{p'\geq p} \Delta(q,p') < y\}  \\
    &= \bigcup_{p' \in [p, 1]} \{(q,p'): \Delta(q,p') < y\}. 
\end{align}
The equality follows from noting that the infimum on the closed set $[p,1]$ must be achieved as $\Delta$ is l.s.c.\footnote{The equality holds if $\Delta$ is not l.s.c~by the definition of the infimum as well} As we have written $\{(q,p):\Delta'(q,p) < y\}$ as a union of open sets, it is open. Hence, the sublevel sets of $\Delta'$ are open, implying $\Delta'$ is l.s.c.

We next show that $\Delta'$ is convex. Define the function $D: \mathbb{R}^3 \to \mathbb{R} \cup \{+\infty\}$ by,
\[
D(q,p', p) = \begin{cases}
    \Delta(q,p) & p' \geq p \\
    +\infty & \text{otherwise}.
\end{cases}
\]
$D(q,p,p')$ is convex since $\Delta$ is convex and $p' \geq p$ is a convex set. Also, $\inf_{p \in \R} D(q,p', p) = \inf_{p' \geq p} \Delta(q,p') = \Delta'(q,p)$. As the infimum projection of a convex function is convex \citep[Proposition 2.22]{rockafellarvariational}, $\Delta'$ is convex.
Also, $\Delta'(q, p)$ is monotonically increasing in $p$ as the infimum is taken over a smaller set for larger $p$. 

It remains to show that $\overline{p}_{\Delta'} \leq \overline{p}_{\Delta}$. For all pairs $(q,p)$, $\Delta'(q,p) = \inf_{p' \geq p} \Delta(q,p') \leq \Delta(q,p)$. From this it follows that 
\begin{equation}\label{eqn:monotone-bound}
\frac{1}{N} \left(
                \operatorname{KL} + \log \frac{\mathcal{I}_{\Delta'}(N)}{\delta}\right) \leq \frac{1}{N} \left(
                \operatorname{KL} + \log \frac{\mathcal{I}_\Delta(N)}{\delta}\right).
\end{equation}
Finally, for any $\beta \in \mathbb{R} \cup \{ + \infty \}$, 
\begin{align}\label{eqn:monotone-sup}
p'_\beta \coloneqq \sup\,\left\{
            p \in [0, 1]
            :
            \Delta'(q, p) \leq \beta \right\} = \sup\,\left\{
            p \in [0, 1]
            :
             \Delta(q, p) \leq \beta \right\}=: p_\beta. 
\end{align}
One inequality follows from $\Delta' \leq \Delta$. For the other, for any $p' \geq p \ge p_\beta$, we have $\Delta(q,p') > \beta$. Taking an infimum over such $p'$, noting that $\Delta$ is lower semi-continuous and therefore obtains a minimum on the closed interval $[p,1]$, $\Delta'(q,p) > \beta$. Hence $p_\beta' \leq p_\beta$. The result follows from combining \cref{eqn:monotone-bound} and \cref{eqn:monotone-sup}. 
\end{proof}

\section{Lemmas for Theorem \ref{thm:nonsep-delta}}
\label{app:additional-lemmas}

Let $C_\beta(p, q) \coloneqq -\log(p(e^{-\beta} - 1) + 1) - \beta q$ for $\beta > 0$ and
\begin{equation}
    \mathcal{I}_\Delta(N)
    = \sup_{r\in [0,1]} \mathbb{E}_{X \sim \operatorname{Bin}(r, N)}[e^{N\Delta(X/N,r)}]
\end{equation}

\begin{lemma} \label{lem:sup-catoni}
    Consider $q, p \in [0, 1]$.
    Then
    \begin{equation}
        \sup_{\beta \in \R} C_{\beta}(q, p) = \kl(q, p).
    \end{equation}
\end{lemma}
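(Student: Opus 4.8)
The plan is to regard $C_\beta(q,p) = -\log\!\big(p(e^{-\beta}-1)+1\big) - \beta q = -\log(pe^{-\beta}+1-p) - \beta q$ as a function of the single real variable $\beta$ and to locate its supremum via first-order conditions, after first establishing concavity in $\beta$. The term $-\beta q$ is linear, and the term $-\log(pe^{-\beta}+1-p)$ is concave in $\beta$: its derivative is $h(\beta) \coloneqq pe^{-\beta}/(pe^{-\beta}+1-p) = 1/\big(1+\tfrac{1-p}{p}e^{\beta}\big)$, which is strictly decreasing for $p \in (0,1)$. Hence $C_\beta(q,p)$ is concave in $\beta$, so any stationary point is a global maximiser, and $\sup_{\beta\in\R}C_\beta(q,p)$ is either attained at such a point or approached as $\beta \to \pm\infty$.

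For the interior case $q,p \in (0,1)$, I would solve $\partial_\beta C_\beta = h(\beta) - q = 0$. Substituting $u = pe^{-\beta}$ reduces this to $u/(u+1-p) = q$, yielding the unique stationary point $pe^{-\beta^\star} = q(1-p)/(1-q)$, i.e.\ $e^{-\beta^\star} = q(1-p)/\big(p(1-q)\big)$. The one clean computation is that at this point $pe^{-\beta^\star}+1-p = (1-p)/(1-q)$. Substituting $\beta^\star$ back and expanding the logarithms then collapses, after routine algebra, to $q\log\tfrac{q}{p} + (1-q)\log\tfrac{1-q}{1-p} = \kl(q,p)$, which by concavity is the global maximum.

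It then remains to treat the boundary values of $q$ and $p$ separately, and this is where the only real care is needed, since there the supremum is a limit at $\beta = \pm\infty$ rather than an interior maximum. When $q \in \{0,1\}$, the derivative $h(\beta)-q$ no longer changes sign, so $C_\beta$ is monotone and $\sup_\beta C_\beta(0,p) = \lim_{\beta\to\infty}C_\beta(0,p) = -\log(1-p) = \kl(0,p)$, while $\sup_\beta C_\beta(1,p) = \lim_{\beta\to-\infty}C_\beta(1,p) = -\log p = \kl(1,p)$. When $p \in \{0,1\}$, $C_\beta$ degenerates to a linear function of $\beta$ (namely $-\beta q$ for $p=0$ and $\beta(1-q)$ for $p=1$), whose supremum is $+\infty$ exactly when the corresponding $\kl$ value is $+\infty$ and is $0$ in the coincident degenerate cases ($q=p\in\{0,1\}$). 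I expect the main obstacle to be purely bookkeeping: verifying that each of these limiting suprema agrees with the extended-real value of $\kl(q,p)$ under the standard convention $0\log 0 = 0$, rather than anything analytically subtle, since concavity already disposes of the interior argument.
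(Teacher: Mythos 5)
Your proposal is correct and follows essentially the same route as the paper's proof: locate the stationary point of $\beta \mapsto C_\beta(q,p)$ via the first-order condition, solve for $e^{-\beta^\star}$, and substitute back to recover $\kl(q,p)$, with the corner cases handled by inspection. Your explicit concavity argument and your systematic treatment of the mixed boundary cases (e.g.\ $q=0$ with $p\in(0,1)$, or $p\in\{0,1\}$ with $q\in(0,1)$) are in fact slightly more complete than the paper's, which only spells out the four corner cases before passing to the interior.
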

\begin{proof}
    If $q = p = 0$, then $C_{\beta}(q, p) = 0 = \kl(q, p)$; and, if $q = p = 0$, then also $C_{\beta}(q, p) = 0 = \kl(q, p)$.
    If, on the other hand, $q = 0$ but $p = 1$, then clearly $\sup_{\beta \in \R} C_{\beta}(q, p) = \infty = \kl(q, p)$;
    and if $q = 1$ but $p = 0$, then also clearly $\sup_{\beta \in \R} C_{\beta}(q, p) = \infty = \kl(q, p)$.
    It remains to deal with the case that $q, p \in (0, 1)$.
    In that case, to compute the supremum, set the derivative to zero:
    \begin{equation}
        q = \frac{p e^{-\beta}}{p( e^{-\beta} - 1) + 1}
    \end{equation}
    and verify that we indeed have a maximum.
    This gives
    \begin{equation}
        -\beta = \log \frac{1 - p}{1 - q} + \log \frac{q}{p},
    \end{equation}
    so
    \begin{equation}
        -\log(p( e^{-\beta} - 1) + 1)
        = \beta + \log \frac{q}{p}
        = \log \frac{1 - q}{1 - p}.
    \end{equation}
    Therefore,
    \begin{equation}
        \sup_{\beta \in \R} C_\beta(q, p)
        = 
            - \log \frac{1 - p}{1 - q}
            + q\log \frac{1 - p}{1 - q} + q\log \frac{q}{p}
        = \operatorname{kl}(q, p).
    \end{equation}
\end{proof}

The following lemma is essentially Prop 2.1 from \citet{germain2009pac}, but stated in a slightly more careful form:

\begin{lemma}[\citet{germain2009pac}] \label{lem:sup-catoni-pos}
    Consider $0 \le q < p < 1$.
    If $q > 0$, then there exists a unique $\beta > 0$ such that
    \begin{equation}
        C_{\beta}(q, p) = \kl(q, p).
    \end{equation}
    On the other hand, if $q = 0$, then
    \begin{equation}
        \lim_{\beta \to \infty} C_{\beta}(0, p) = \kl(0, p).
    \end{equation}
\end{lemma}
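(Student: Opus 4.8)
The plan is to analyse the single-variable function $\beta \mapsto C_\beta(q,p)$ directly and combine elementary calculus with \cref{lem:sup-catoni}, which already identifies its supremum over $\beta \in \R$ as $\kl(q,p)$. First I would rewrite $C_\beta(q,p) = -\log(pe^{-\beta}+(1-p)) - \beta q$ and differentiate, obtaining $\partial_\beta C_\beta(q,p) = \frac{pe^{-\beta}}{pe^{-\beta}+(1-p)} - q$. A short computation gives $\partial_\beta^2 C_\beta(q,p) = -\frac{pe^{-\beta}(1-p)}{(pe^{-\beta}+(1-p))^2} < 0$ for $p \in (0,1)$, so $\beta \mapsto C_\beta(q,p)$ is strictly concave on $\R$; this alone yields the uniqueness part of the claim, since a strictly concave function has at most one maximiser.

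Next I would locate that maximiser. Setting the first derivative to zero gives $e^{-\beta^*} = \frac{q(1-p)}{p(1-q)}$, i.e.\ $\beta^* = \log\frac{p(1-q)}{q(1-p)}$, which is exactly the stationarity condition appearing in the proof of \cref{lem:sup-catoni}. Since the supremum over $\beta$ equals $\kl(q,p)$ by that lemma, and by strict concavity is attained precisely at $\beta^*$, it follows that $C_{\beta^*}(q,p) = \kl(q,p)$ and that $\beta^*$ is the unique point of $\R$ where $C_\beta(q,p)$ meets its supremum.

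For the case $q > 0$ it then remains only to check the sign of $\beta^*$. Because $0 < q < p < 1$ forces both $p > q$ and $1-q > 1-p$, the argument $\frac{p(1-q)}{q(1-p)}$ exceeds $1$, so $\beta^* > 0$; as $\beta^*$ is the unique point at which $C_\beta(q,p) = \kl(q,p)$, it is \emph{a fortiori} the unique $\beta > 0$ with this property, as required. For the case $q = 0$, the derivative reduces to $\frac{pe^{-\beta}}{pe^{-\beta}+(1-p)}$, which is strictly positive for every finite $\beta$, so $C_\beta(0,p)$ is strictly increasing in $\beta$ with no finite stationary point; its supremum $\kl(0,p)$ is therefore approached only as $\beta \to \infty$. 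Equivalently, one can take $e^{-\beta} \to 0$ directly in $C_\beta(0,p) = -\log(pe^{-\beta}+(1-p))$ to get $\lim_{\beta\to\infty} C_\beta(0,p) = -\log(1-p) = \kl(0,p)$, which even bypasses the need to invoke \cref{lem:sup-catoni} in this case.

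The computations here are all routine, so there is no genuine obstacle; the only care needed is bookkeeping at the degenerate endpoints. One must keep $p \in (0,1)$ and $q < p$ so that $\beta^*$ is finite and the logarithms are well defined, and handle $q = 0$ separately precisely because the stationary point escapes to $+\infty$ there. Everything else follows from strict concavity together with the explicit value of the supremum already supplied by \cref{lem:sup-catoni}.
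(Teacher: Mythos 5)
Your proposal is correct and follows essentially the same route as the paper: both identify the stationary point $\beta^* = \log\tfrac{p}{q} + \log\tfrac{1-q}{1-p}$ from the computation underlying \cref{lem:sup-catoni}, observe it is positive when $0 < q < p < 1$, and handle $q=0$ by taking $e^{-\beta}\to 0$ directly. The only difference is that you make explicit the strict concavity of $\beta \mapsto C_\beta(q,p)$ (via the second derivative) to justify uniqueness, a detail the paper leaves implicit by deferring to the proof of \cref{lem:sup-catoni}.
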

\begin{proof}
    If $0 < q < p < 1$, then the unique $\beta$ follows from the proof of \cref{lem:sup-catoni}:
    \begin{equation}
        \beta = \log \frac{1-q}{1-p} + \log \frac{p}{q} \in (0, \infty). 
    \end{equation}
    On the other hand, if $q = 0 < p < 1$, then
    \begin{equation}
        \kl(0, p) = -\log(1 - p) = \lim_{\beta \to \infty} C_{\beta}(0, p). 
    \end{equation}
\end{proof}

\begin{lemma}[\citet{catoni2007pac,germain2009pac}] \label{lem:catoni-zero-I}
    For every $\beta > 0$, it holds that $\mathcal{I}_{C_\beta}(N) = 1$.
\end{lemma}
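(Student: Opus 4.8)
The plan is to compute $\mathbb{E}_{X \sim \operatorname{Bin}(r,N)}[e^{N C_\beta(X/N, r)}]$ explicitly for an arbitrary fixed $r \in [0,1]$ and show it equals $1$, so that the supremum over $r$ is trivially $1$ as well. The key structural feature of $C_\beta$ that makes this work is that it is \emph{affine} in its first argument: writing out the definition, $N C_\beta(X/N, r) = -N \log(r(e^{-\beta} - 1) + 1) - \beta X$. Crucially, the logarithmic term does not depend on $X$, so it pulls out of the expectation as a constant multiplicative factor, leaving only the $e^{-\beta X}$ term inside.

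Concretely, I would first factor
\begin{equation}
    \mathbb{E}_{X \sim \operatorname{Bin}(r,N)}\big[e^{N C_\beta(X/N, r)}\big]
    = \big(r(e^{-\beta} - 1) + 1\big)^{-N}\,\mathbb{E}_{X \sim \operatorname{Bin}(r,N)}\big[e^{-\beta X}\big].
\end{equation}
Next I would identify the remaining expectation as the moment-generating function of the binomial distribution evaluated at $-\beta$. Since $X \sim \operatorname{Bin}(r, N)$ is a sum of $N$ independent Bernoulli$(r)$ variables, its MGF factorises, giving
\begin{equation}
    \mathbb{E}_{X \sim \operatorname{Bin}(r,N)}\big[e^{-\beta X}\big]
    = \big(1 - r + r e^{-\beta}\big)^{N}
    = \big(r(e^{-\beta} - 1) + 1\big)^{N},
\end{equation}
using $1 - r + r e^{-\beta} = 1 + r(e^{-\beta} - 1)$. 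Substituting this into the factored expression, the two powers of $r(e^{-\beta}-1)+1$ cancel exactly, yielding $1$.

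Since this value is $1$ for every $r \in [0,1]$, taking the supremum over $r$ gives $\mathcal{I}_{C_\beta}(N) = 1$, completing the proof. There is no real obstacle here beyond the bookkeeping: the definition of $C_\beta$ is engineered precisely so that the normalising constant $(r(e^{-\beta}-1)+1)^{-N}$ is the reciprocal of the binomial MGF at $-\beta$, forcing the cancellation. The only points requiring a moment's care are keeping the sign on $\beta$ straight when invoking the MGF and confirming that $r(e^{-\beta}-1)+1 > 0$ for all $r \in [0,1]$ and $\beta > 0$ (which holds since $e^{-\beta} \in (0,1)$, so the base lies in $(e^{-\beta}, 1]$), ensuring all the powers and logarithms are well defined.
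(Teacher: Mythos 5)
Your proposal is correct and follows essentially the same route as the paper's proof: factor the $X$-independent term $-N\log(r(e^{-\beta}-1)+1)$ out of the expectation, recognise the remaining factor as the binomial moment-generating function at $-\beta$, and observe the exact cancellation for every $r$. The additional check that $r(e^{-\beta}-1)+1>0$ is a nice touch but not a departure from the paper's argument.
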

\begin{proof}
    Let $r \in [0, 1]$ and $X \sim \operatorname{Bin}(r, N)$.
    Note that
    \begin{equation}
        \E[e^{-\beta X}] = (r( e^{-\beta} - 1) + 1)^N.
    \end{equation}
    Therefore,
    \begin{equation}
        \mathbb{E}[e^{N C_\beta(X/N,r)}]
        = \mathbb{E}[e^{- N \log(r(e^{-\beta} - 1) + 1) - \beta X}] \\
        = \frac{\E[e^{-\beta X}]}{(r( e^{-\beta} - 1) + 1)^N} \\
        = 1.
    \end{equation}
\end{proof}

\begin{lemma} \label{lem:B_limit}
    Let $y \ge 0$.
    Then
    $
        B[\lim_{\beta \to \infty} C_{\beta}(0, \vardot), y] 
        = \lim_{\beta \to \infty} B[C_{\beta}(0, \vardot), y].
    $
\end{lemma}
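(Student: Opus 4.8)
The plan is to evaluate both sides of the claimed identity explicitly, exploiting that $C_\beta(0, \vardot)$ is a continuous, strictly increasing function of $p$ whose sublevel sets are therefore closed intervals anchored at $0$, so that $B$ simply returns the right endpoint (capped at $1$).

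First I would record that $C_\beta(0, p) = -\log(1 + p(e^{-\beta} - 1))$. Since $e^{-\beta} - 1 < 0$ for $\beta > 0$, the argument of the logarithm is strictly decreasing in $p$, hence $p \mapsto C_\beta(0, p)$ is continuous and strictly increasing on $[0, 1]$. Consequently $\{p \in [0,1] : C_\beta(0, p) \le y\}$ is a closed interval of the form $[0, p_\beta]$ (or all of $[0,1]$), and $B[C_\beta(0, \vardot), y]$ equals its right endpoint. Solving $C_\beta(0, p) = y$ gives $p_\beta = \frac{1 - e^{-y}}{1 - e^{-\beta}}$, so that $B[C_\beta(0, \vardot), y] = \min\{1, p_\beta\}$. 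Letting $\beta \to \infty$ sends $1 - e^{-\beta} \to 1$, whence $p_\beta \to 1 - e^{-y}$; and since $y \ge 0$ forces $1 - e^{-y} \le 1$, we obtain $\lim_{\beta\to\infty} B[C_\beta(0, \vardot), y] = 1 - e^{-y}$.

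Next I would evaluate the left-hand side. By \cref{lem:sup-catoni-pos} (or directly, since $e^{-\beta}\to 0$), the pointwise limit is $g(p) \coloneqq \lim_{\beta\to\infty} C_\beta(0, p) = \kl(0, p) = -\log(1 - p)$, which is again continuous and strictly increasing in $p$. Solving $g(p) = y$ yields $p = 1 - e^{-y}$, so $B[g, y] = 1 - e^{-y}$. Comparing the two computations gives the identity.

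The only genuine subtlety --- the point where a careless argument would fail --- is that interchanging $\lim_{\beta\to\infty}$ with the generalised inverse $B$ is \emph{not} valid for arbitrary functions; here it works precisely because each $C_\beta(0, \vardot)$ is invertible with an explicit inverse, and because $C_\beta(0, p)$ is itself monotonically increasing in $\beta$ (as $e^{-\beta} - 1$ increases toward $0$). This monotonicity guarantees that the sublevel sets nest and that $B[C_\beta(0, \vardot), y]$ is monotone decreasing in $\beta$, so the limit coincides with $\inf_\beta B[C_\beta(0, \vardot), y]$. Hence one could alternatively argue abstractly, avoiding the closed-form inverse: since $g \ge C_\beta(0, \vardot)$ pointwise we get $B[g, y] \le \inf_\beta B[C_\beta(0, \vardot), y]$; conversely, if $p^* < \inf_\beta B[C_\beta(0, \vardot), y]$ then monotonicity in $p$ gives $C_\beta(0, p^*) \le y$ for every $\beta$, so $g(p^*) \le y$ and thus $p^* \le B[g, y]$, yielding the reverse inequality. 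I expect the explicit computation to be the cleanest route, with the cap at $p = 1$ and the degenerate case $y = 0$ being the only bookkeeping required.
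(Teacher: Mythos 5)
Your proof is correct and follows essentially the same route as the paper's: compute $C_\beta(0,p)=-\log(1+p(e^{-\beta}-1))$ and its pointwise limit $-\log(1-p)$ explicitly, invert both to get $B[C_\beta(0,\vardot),y]=\min\bigl(\tfrac{1-e^{-y}}{1-e^{-\beta}},1\bigr)$ and $B[\lim_\beta C_\beta(0,\vardot),y]=1-e^{-y}$, and let $\beta\to\infty$. The additional remarks on monotonicity in $\beta$ and the alternative abstract argument are sound but not needed; the paper relies on the same closed-form computation.
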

\begin{proof}
    Note that
    \begin{equation}
        \lim_{\beta \to \infty} C_{\beta}(0, p)
        = -\log(1 - p),
        \qquad
        C_{\beta}(0, p)
        = -\log(p(e^{-\beta} - 1) + 1).
    \end{equation}
    Therefore,
    \begin{equation} 
        B[{\textstyle\lim_{\beta \to \infty}} C_{\beta}(0, \vardot), y] = 1 - e^{-y},
        \qquad
        B[C_{\beta}(0, \vardot), y] = \min\left(\frac{1 - e^{-y}}{1 - e^{-\beta}}, 1 \right)
    \end{equation}
    The result then follows from the observation that
    \begin{equation}
        \lim_{\beta \to \infty} \frac{1 - e^{-y}}{1 - e^{-\beta}}
        = 1 - e^{-y}.
    \end{equation}
\end{proof}

\section{Learning a Convex Function}
\label{app:learning-convex-function}

In \cref{sec:characterising-generic-bounds}, we optimised an objective with respect to a function $\Delta\colon [0, 1]^2 \to \mathbb{R} \cup \{+\infty\}$ that was proper, l.s.c., and convex.
In this appendix, we describe how a function $\Delta\colon [0, 1]^2 \to \mathbb{R}$ that is differentiable and convex can be generally parametrised. 
We also discuss two challenges encountered during the optimisation:
(1) computing and differentiating through a supremum and (2) computing and optimising a partial inverse.

\subsection{Parametrising a Convex Function}

To generally parametrise a differentiable and convex $\Delta\colon [0, 1]^2 \to \mathbb{R}$, we use the sum of an affine function and a one-hidden-layer neural network with softplus activation functions and positive weights at the output layer.
The combination of positive weights and softplus activation functions ensures that the neural network is a convex function.
The number of hidden units used is varied between $128$ and $1024$; the precise numbers are specified in the descriptions of the experiments.

\subsection{Computing and Differentiating Through a Supremum} \label{sec:diff_through_supremum}

The generic PAC-Bayes theorem (\cref{thm:begin-bound}) involves $\mathcal{I}_{\Delta}(N)$, which in turn involves a supremum of a function over $r \in [0, 1]$.
When optimising with respect to $\Delta$, we therefore need to compute and  differentiate through a supremum.
To compute the supremum, we finely discretise $[0, 1]$ and compute the maximum over this grid.
Technically, by approximating the supremum in this way, the bound is approximate, which means that it might not be a valid generalisation bound.
However, by making the discretisation very fine, using an inter-point spacing of $10^{-5}$, the error on the generalisation bound is negligible. 
To differentiate the supremum, we simply run automatic differentiation on the approximation.
In the remainder of this subsection, we give a plausible explanation for why this procedure also approximates the gradients correctly.
The following discussion is based on \url{https://math.stackexchange.com/questions/3753495/derivative-of-argmin-in-a-constrained-problem}.

Consider $f \colon [0, 1] \times \mathbb{R} \to \mathbb{R}$ continuously differentiable in its interior.
We aim to compute
\begin{equation}
    \frac{\sd}{\sd \theta} \sup_{x \in [0, 1]} f(x, \theta)
    = \frac{\sd}{\sd \theta} \max_{x \in [0, 1]} f(x, \theta)
\end{equation}
where the supremum turns into a maximum by compactness of $[0, 1]$ and continuity of $f$.
We assume that the maximum is uniquely obtained and write
\begin{equation}
    z(\theta) = \operatorname{arg\,max}_{x \in [0, 1]} f(x, \theta).
\end{equation}
Then
\begin{equation}
    \sup_{x \in [0, 1]} f(x, \theta) = f(z(\theta), \theta),
\end{equation}
so we can compute the derivative with respect to $\theta$ with the chain rule if we can compute $z'(\theta)$.

\textsc{Case 1:}
The constraint $x \in [0, 1]$ is not binding.
In that case, the stationarity condition is satisfied in a neighbourhood of $\theta$:
\begin{equation}
    \d_x f(z(\theta), \theta) = 0.
\end{equation}
Therefore,
\begin{equation}
    \frac{\sd}{\sd \theta} \sup_{x \in [0, 1]} f(x, \theta)
    = \d f(z(\theta), \theta) z'(\theta) + \d_\theta f(z(\theta), \theta)
    = \d_\theta f(z(\theta), \theta).
\end{equation}

\textsc{Case 2:}
The constraint $x \in [0, 1]$ is binding.
In that case, $\d_x f(z(\theta), \theta) \neq 0$, so $\d_x f(z(\theta), \theta) < 0$ and you can argue that the optimum will remain to be attained at the constraint in a neighbourhood of $\theta$.
Therefore, $z'(\theta) = 0$, which means that again
\begin{equation}
    \frac{\sd}{\sd \theta} \sup_{x \in [0, 1]} f(x, \theta)
    = \d f(z(\theta), \theta) z'(\theta) + \d_\theta f(z(\theta), \theta)
    = \d_\theta f(z(\theta), \theta).
\end{equation}

In either case,
\begin{equation}
    \frac{\sd}{\sd \theta} \sup_{x \in [0, 1]} f(x, \theta) = \d_\theta f(z(\theta), \theta),
\end{equation}
which shows that the derivative with respect to the maximiser can be ignored. Assuming that $z(\theta)$ can be well approximated by computing the maximiser over the fine discretisation, and that in turn leads to a good approximation of $\d_\theta f(z(\theta), \theta)$, this provides justification for our approach of simply running automatic differentiation on the approximation to the supremum.

Finally, we note that, although computing the derivative accurately is useful for the optimisation to succeed, the bounds we compute are valid regardless of how accurate the derivative is (subject to the computation of the supremum itself being sufficiently accurate). In practice, we observe that the learned convex bound decreases steadily during optimisation (\cref{fig:numerical_evidence_deterministic_case}), and that it approaches, but never goes below, the conjectured PAC-Bayes-kl bound, as per \cref{cor:optimistic-MLS}, which provides evidence that the implementation is sufficiently accurate for our purposes.

\subsection{Computing and Optimising a Partial Inverse}
\label{sec:diff_through_partial_inv}

The objective that we optimise with respect to $\Delta$ is $\overline{p}_\Delta$. Recall from \cref{eq:generic-gen-bound} that 
\begin{equation} 
    \overline{p}_\Delta \coloneqq B\big[\Delta(\overline{R}_S(Q), \vardot), \tfrac1{N}\big(\!\operatorname{KL}(Q\|P)+\log\tfrac{\mathcal{I}_\Delta(N)}{\delta}\big)\big].
\end{equation}
We now abbreviate $f \coloneqq \Delta(\overline{R}_S(Q), \vardot)$ and $c \coloneqq \tfrac1{N}\big(\!\operatorname{KL}(Q\|P)+\log\tfrac{\mathcal{I}_\Delta(N)}{\delta}\big)$, so that the objective is 
$
    B[f, c] = \sup\,\{p \in [0,1] : f(p) \le c\}
$ for $f \colon [0,1] \to \R$ convex and $c \in \R$.
Assuming that $f = f_\theta$ and $c = c(\theta)$ depend on some parameters $\theta$ (\textit{i.e.}, the parameters of the neural network defining $\Delta$), our goal is to compute $B[f_\theta, c(\theta)]$ and optimise it with respect to $\theta$.

A possible issue that can be run into during optimisation is that, if $f(p) \leq c$ for all $p \in [0, 1]$, then $B[f, c] = 1$ and the gradient with respect to $\theta$ may be zero, which means that the optimisation may fail to make progress.
A similar issue is discussed by \citet{dziugaite2017computing} when trying to optimise the PAC-Bayes-kl bound: the derivative of the inverse Bernoulli KL can be zero if $c$ is large enough. In \citet[Sec 2.2]{dziugaite2017computing} this is handled by upper bounding the inverse Bernoulli KL using Pinsker's inequality. This can lead to upper bounds which are greater than $1$ (whereas the exact computation of $B[f,c]$ never allows this to happen), but has the advantage of always providing a useful gradient signal. 

Similarly, we can define $\overline{B}[f, c] \coloneqq \sup\,\{p \in \R_{\ge0} : f(p) \le c\}$ for $f \colon \R_{\ge0} \to \R$ and $c \in \R$, which ignores the constraint that $p \le 1$.
Note that in our case, since $f_\theta$ is defined by a neural network, it is trivial to extend its domain from $[0,1]$ to $\mathbb{R}_{\ge0}$.
This will allow us to obtain a useful derivative even when the bound is vacuous.
Moreover, in our case $f_\theta$ will be convex, which means that $\overline{B}[f_\theta, c(\theta)] = B[f_\theta, c(\theta)]$ if $B[f_\theta, c(\theta)] < 1$:
$B[f_\theta, c(\theta)]$ is characterised by an upcrossing\footnote{
    We say that a function $f \colon \R \to \R$ upcrosses $y \in \R$ at $x \in \R$ if there exists some $\varepsilon > 0$ such that $f(x') < y$ for all $x' \in (x - \varepsilon, x)$ and $f(x') > y$ for all $x' \in (x, x + \varepsilon)$.
} of $c(\theta)$ by $f_\theta$, and, by convexity, $f_\theta$ can have at most one such upcrossing.

We now describe how $\overline{B}[f_\theta, c(\theta)]$ can be (approximately) computed and differentiated with respect to $\theta$.
To compute $\overline{B}[f_\theta, c(\theta)]$, we evaluate $f_\theta$ on a discretisation of the interval $[0, u]$ using an inter-point spacing of $10^{-4}$ and attempt to detect an upcrossing of $c(\theta)$.
Assume that this procedure finds an upcrossing;
otherwise, either increase $u$ (\textit{e.g.}, by doubling) and try again or return $u$ and set the derivative to zero (failure).
Denote $x = \overline{B}[f_\theta, c(\theta)]$ and note that $f_\theta$ is continuously differentiable in its interior, because it is a neural network with softplus activations.
Assume that $x > 0$, which in practice turns out to nearly always be the case.
Using continuity of $f_\theta$, it holds that $f_\theta(x) = c(\theta)$.
It also holds that $\d_x f_\theta(x) > 0$ ($f_\theta$ upcrosses $c(\theta)$ at $x$).
Restricting $f_\theta$ to an appropriate neighbourhood,
the derivative of $\overline{B}[f_\theta, c(\theta)]$ with respect to $\theta$ comes down to computing the derivative of $f^{-1}_\theta(c(\theta))$ with respect to $\theta$.
The latter derivative can be computed as follows:
\begin{equation}
    \d_\theta c(\theta)
    =
        \frac{\sd}{\sd \theta} f_\theta(f^{-1}_\theta(c(\theta)))
    =
        \d_x f_\theta(x)
        \frac{\sd}{\sd \theta} f^{-1}_\theta(c(\theta))
        +
        \d_\theta f_\theta(x),
\end{equation}
which implies that
\begin{equation}
    \frac{\sd}{\sd \theta} f^{-1}_\theta(c(\theta))
    = \frac{\d_\theta c(\theta) - \d_\theta f_\theta(x)}{\d_x f_\theta(x)},
\end{equation}
recalling that $\d_x f_\theta(x) > 0$.

Similarly to \cref{sec:diff_through_supremum}, although computing the gradient through the partial inverse accurately is useful for optimising the convex function, the bound itself will be valid as long as the value of the partial inverse itself is computed sufficiently accurately.

\section{Additional Results for Numerical Verification of Theory}
\label{app:additional-numerical-results}

\Cref{fig:deltas} complements \cref{fig:numerical_evidence_deterministic_case} by comparing the optimal Catoni $C_{\beta}$ to examples of the learned convex functions, which demonstrates that there are other choices than $C_{\beta}$ which achieve bounds that are close to $\inf_{\Delta \in \mathcal{C}} \overline{p}_\Delta$ within a small tolerance.
\Cref{fig:extra_numerical_evidence} complements \cref{fig:numerical_evidence_deterministic_case} by considering three slightly more complicated cases of a random dataset.
Note that, in \cref{fig:1M-1,fig:1M-2}, during iterations $10^4$--$10^6$, the optimiser struggles: the trace jumps around.
The are various reasons for why this might have happened: the neural network parametrising $\Delta$ has too few hidden units, the learning rate of the optimiser is too large, the various approximations that involve $\Delta$ (\cref{sec:diff_through_supremum,sec:diff_through_partial_inv}) introduce too much error, or \cref{open:lower-bound} might be false.

\begin{figure}[t] \small
    \centering
    \begin{subfigure}[t]{0.46\linewidth}
        \includegraphics[width=1\linewidth]{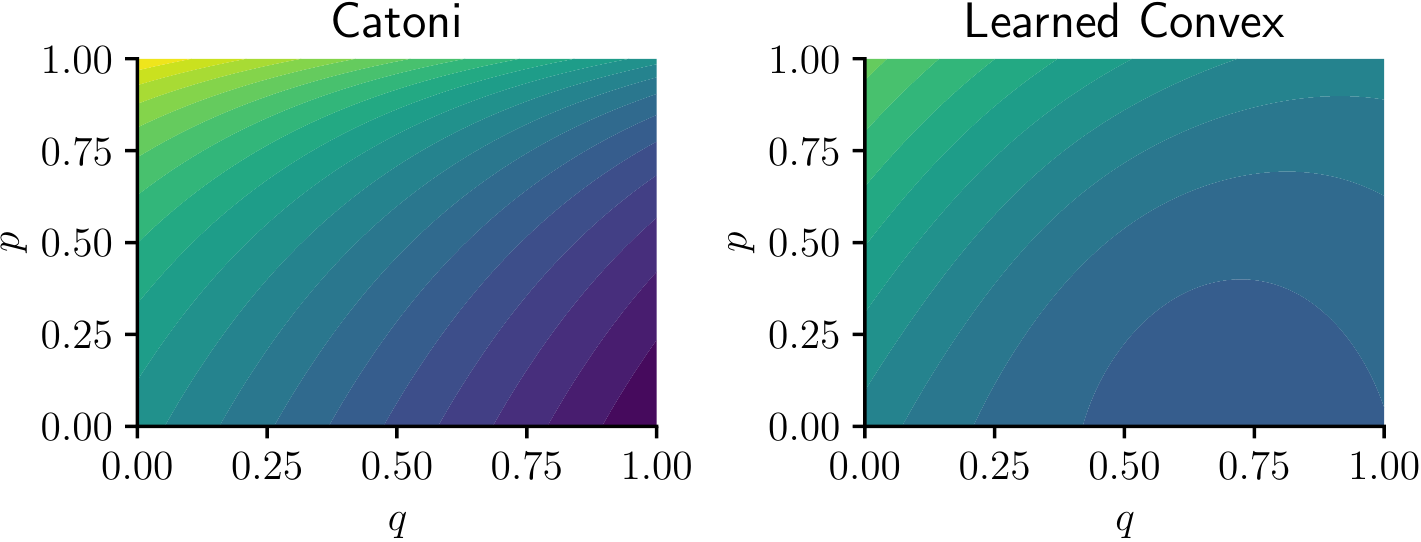}
        \caption{
            $(q, \operatorname{KL}) = (2\%, 1)$, $\beta^*\approx2.24$, $H = 256$.
        }
    \end{subfigure}
    \hfill
    \begin{subfigure}[t]{0.46\linewidth}
        \includegraphics[width=1\linewidth]{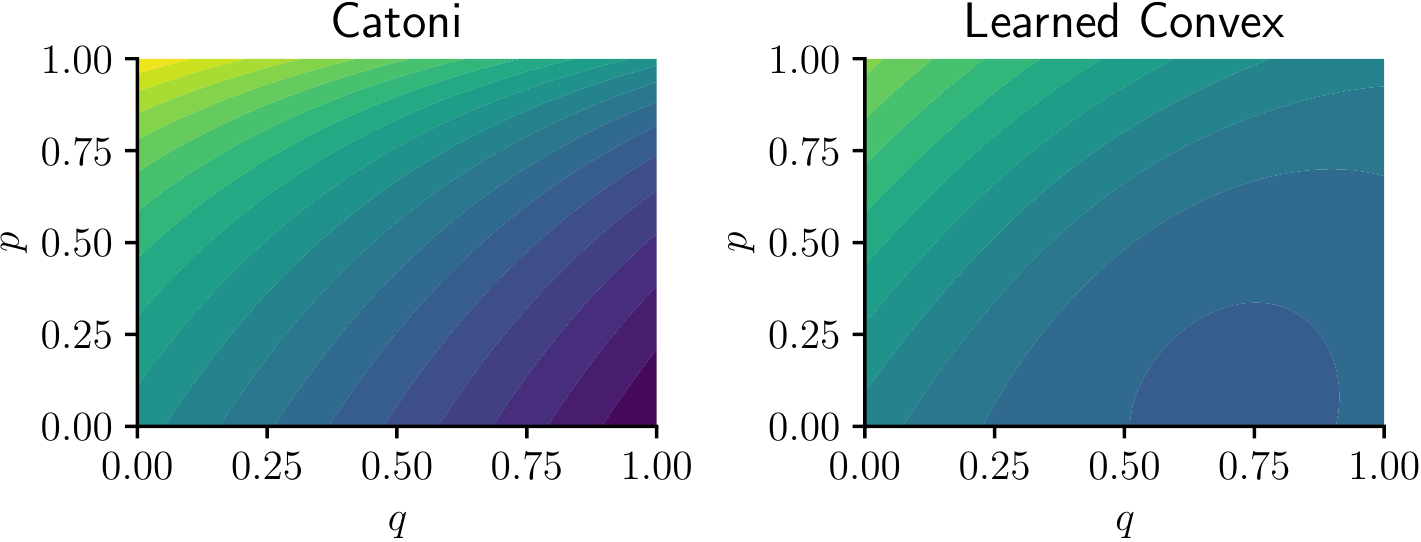}
        \caption{
            $(q, \operatorname{KL}) = (5\%, 2)$, $\beta^*\approx1.84$, $H = 256$.
        }
    \end{subfigure}\\[5pt]
    \begin{subfigure}[t]{0.46\linewidth}
        \includegraphics[width=1\linewidth]{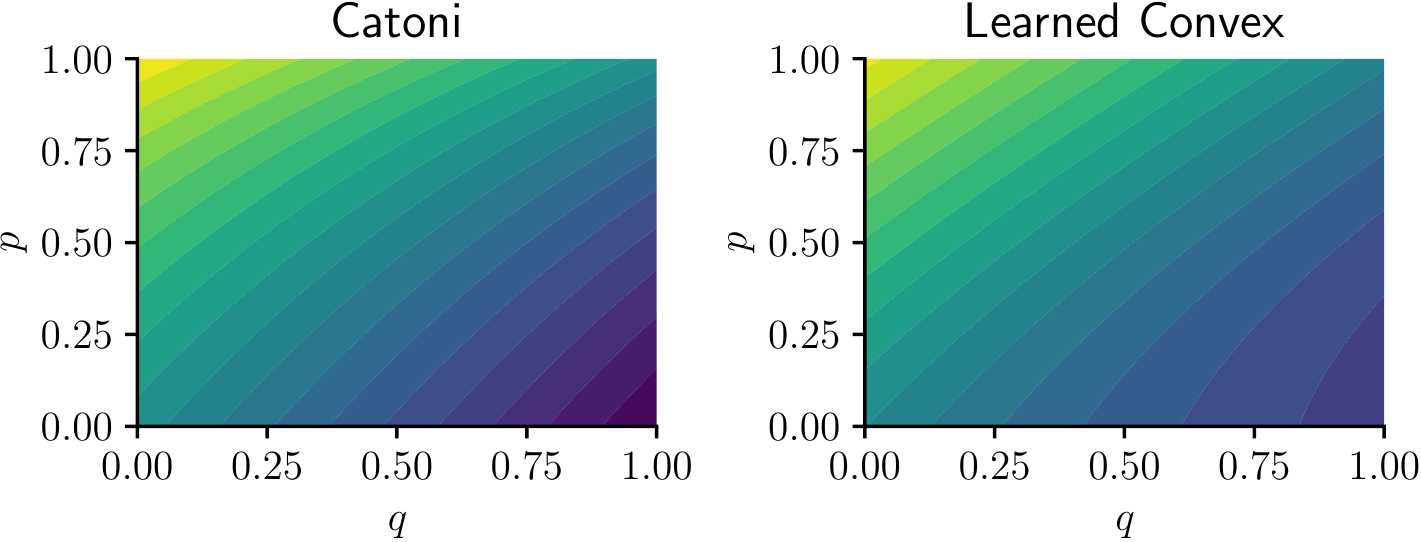}
        \caption{
            $(q, \operatorname{KL}) = (30\%, 1)$, $\beta^*\approx0.976$, $H = 256$.
        }
    \end{subfigure}
    \hfill
    \begin{subfigure}[t]{0.46\linewidth}
        \includegraphics[width=1\linewidth]{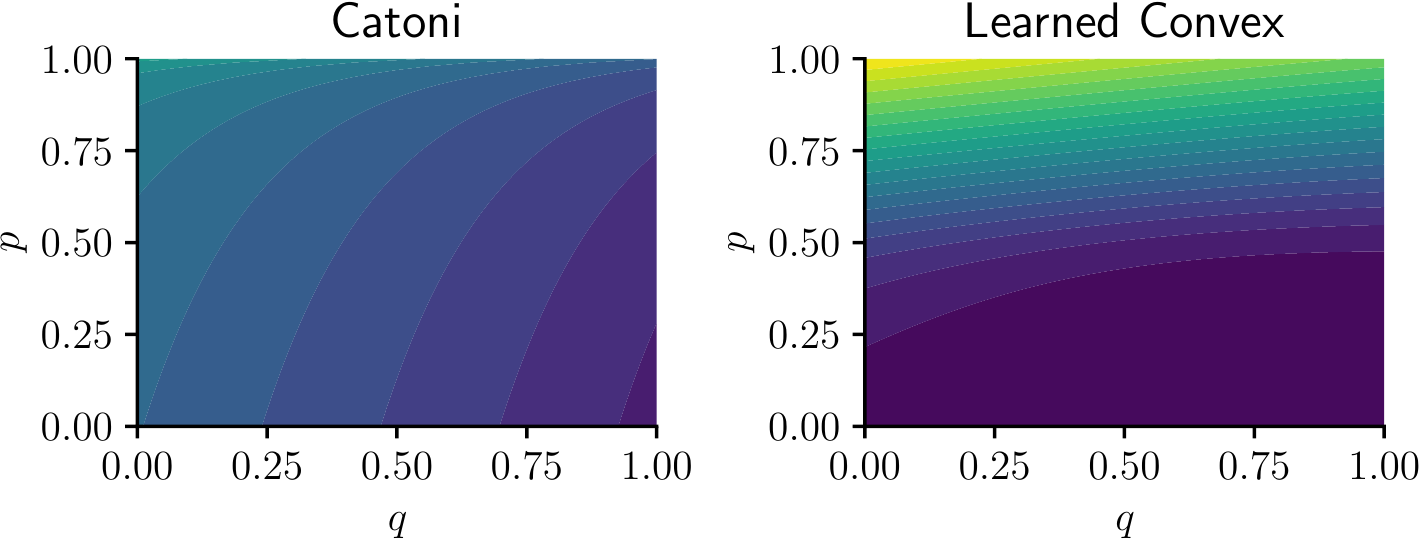}
        \caption{
            $(q, \operatorname{KL}) = (40\%, 50)$, $\beta^*\approx4.40$, $H = 256$.
        }
    \end{subfigure}
    \caption{\small
        \textbf{
            Although the tightest Catoni bound is the optimal generic PAC-Bayes bound for a fixed dataset, evidence suggests that there are other choices for $\Delta$ which also achieve the tightest bound.
        }
        For various settings of fixed $q$ and $\operatorname{KL}$, we optimise a convex function with $H$ hidden units to minimise $\overline{p}_{\Delta}$ with $\delta=0.1$ and $N=30$.
        We compare the optimal Catoni $C_{\beta^*}$ (left sides) with the learned convex function (right sides).
        For all runs, $\overline{p}_\Delta$ converged to $\smash{\inf_{\beta > 0} \overline{p}_{C_\beta}}$ within a small tolerance.
    }
    \label{fig:deltas}
\end{figure}

\begin{figure}[t] \small
    \centering
    ~\hfill
    \begin{subfigure}[t]{0.235\linewidth}
        \includegraphics[width=1\linewidth]{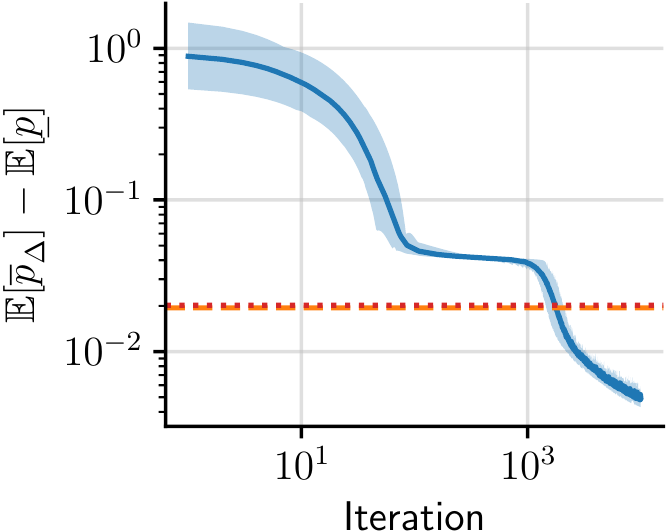}
        \caption{
        }
        \label{fig:stoch3}
    \end{subfigure}
    \hfill
    \begin{subfigure}[t]{0.235\linewidth}
        \includegraphics[width=1\linewidth]{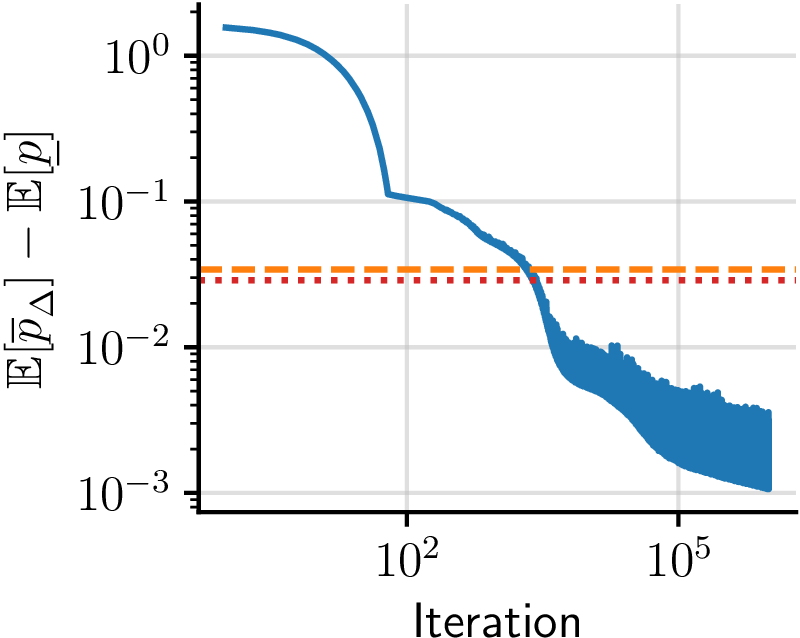}
        \caption{
        }
        \label{fig:1M-1}
    \end{subfigure}
    \hfill
    \begin{subfigure}[t]{0.235\linewidth}
        \includegraphics[width=1\linewidth]{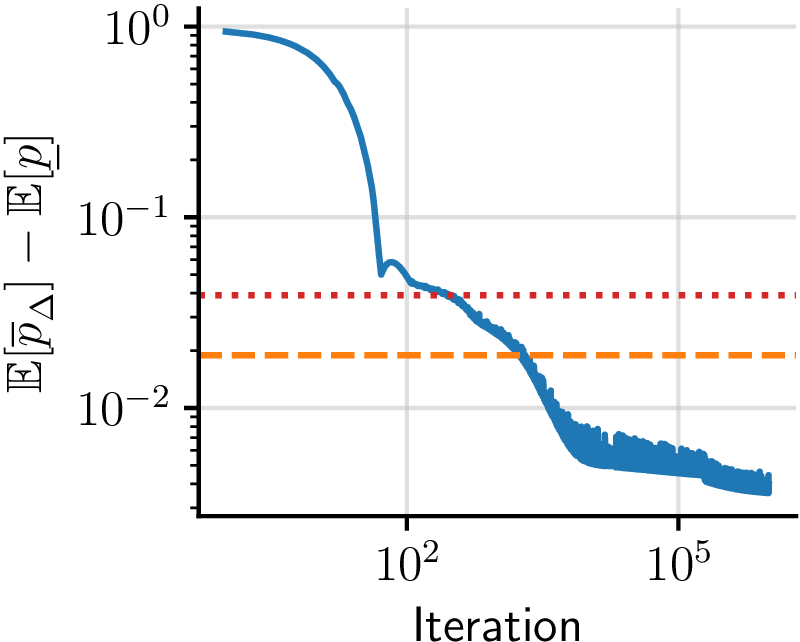}
        \caption{
        }
        \label{fig:1M-2}
    \end{subfigure}
    \hfill~
    \\
    \caption{\small
        \textbf{Extra results indicating that the tightest expected Catoni bound is not the optimal generic PAC-Bayes bound for a random dataset.} We optimise a convex function with $H$ hidden units to minimise $\overline{p}_\Delta$ with $\delta=0.1$ and $N=30$.
        All plots consider random $q$ and $\operatorname{KL}$ and show the \emph{expected} difference with the conjectured PAC-Bayes-kl bound (\cref{cor:optimistic-MLS}).
        In (a), the shaded regions show the minimum and maximum over ten initialisations.
        Due to computational considerations, (b) and (c) only show
        one repetition, since they are run for much longer than (a) (roughly 25 hours for each run).
        All plots show the PAC-Bayes-kl bound ({\color{plotred} dotted red}) and (c) and (d) show the optimal Catoni bound with parameter $\beta^*$ ({\color{plotorange} dashed orange}).
        All runs quickly converged to non-vacuous values.
        (a): $(q, \operatorname{KL}) \in \{(35\%, 5), (45\%, 30), (40\%, 7), (43\%, 25)\}$ uniformly, $\beta^*\approx2.23$, $H = 1024$.
        (b): $(q, \operatorname{KL}) \in \{(3.0\%, 46.0), (9.6\%, 0.52), (14.2\%, 48.9)\}$ (rounded values) uniformly, $\beta^*\approx3.21$, $H = 1024$.
        (c): $(q, \operatorname{KL}) \in \{(18.1\%, 2.56), (8.0\%, 5.83), (16.8\%, 30.0)\}$ (approximate values) uniformly, $\beta^*\approx2.20$, $H = 1024$.
    }
    \label{fig:extra_numerical_evidence}
\end{figure}

\section{Worked Example for Corollary \ref{cor:optimistic-MLS}}
\label{app:worked-example-optimistic-MLS}

In this section, we illustrate an application of \cref{cor:optimistic-MLS} to determine when, in the simplified scenario where $R_S(Q) = \smash{\tfrac12}$ almost surely, the expected PAC-Bayes-kl bound is tighter than the tightest expected Catoni bound.
This verifies with an analytic example, that, as we claim in \cref{sec:characterising-generic-bounds}, although the Catoni bound is optimal for a fixed dataset and learning algorithm, it is not optimal in expectation in the general case of a random dataset.
To make the example more concrete, we also compute the bounds for $\operatorname{KL} = 1$ with probability $\tfrac12$ and $\operatorname{KL} = 100$ otherwise.
This choice for $\operatorname{KL}$ is motivated with the conclusion at the end the section.

Denote $\alpha = \smash{\tfrac1N(\operatorname{KL}(Q\| P) + \log \tfrac{1}{\delta})}$.
We can solve for the optimal expected Catoni bound:
\begin{equation}
    \inf_{\beta > 0}\frac{1}{1 - e^{-\beta}}\left(1 - e^{-\tfrac12 \beta} \mathbb{E}[e^{-\alpha}] \right).
\end{equation}
Denote $u = \mathbb{E}[e^{-\alpha}]$ and set the derivative with respect to $\beta$ to zero:
\begin{equation}
    (1 - e^{-\beta})\tfrac12 u e^{-\tfrac12 \beta}
    - e^{-\beta}(1 - u e^{-\tfrac12 \beta}) = 0.
\end{equation}
Letting $x = e^{-\tfrac12 \beta}$,
this equation becomes a quadratic equation:
\begin{equation}
    x^2 - \frac{2}{u} x + 1 = 0
    \implies
    x = \frac1{u} \pm \sqrt{\frac{1}{u^2} - 1}.
\end{equation}
Therefore,
\begin{equation}
    \beta
    = 2 \log \frac{u}{1 \pm \sqrt{1 - u^2}}
    = 2 \log \frac{\mathbb{E}[e^{-\alpha}]}{1 \pm \sqrt{1 - \mathbb{E}[e^{-\alpha}]^2}},
\end{equation}
so the positive solution for $\beta$ is given by
\begin{equation}
    \beta = 2 \log \frac{\mathbb{E}[e^{-\alpha}]}{1 - \sqrt{1 - \mathbb{E}[e^{-\alpha}]^2}}
    \approx 2.803.
\end{equation}
Plugging this back into the Catoni bound gives that
\begin{equation}
    \inf_{\beta > 0} \mathbb{E}[\overline{p}_{C_\beta}]
    =
    \frac{\sqrt{1 - u^2}}{
        1 - \Big(
            \frac{1 - \sqrt{1 - u^2}}{u}
        \Big)^2
    }
    = \tfrac12 \left(
    1 + \sqrt{1 - \E[e^{-\alpha}]^2}
    \right)
    \approx 0.943.
\end{equation}
We compare this with the choice $\Delta = \operatorname{kl}$, which corresponds to the PAC-Bayes-kl bound.
In that case, the expected bound is given by
\begin{equation}
    \mathbb{E}[\overline{p}_{\kl}] = \mathbb{E}\,B[\operatorname{kl}(\tfrac12, \vardot), \alpha + \gamma]
\end{equation}
where
\begin{equation}
    \gamma
    = \frac1N \log \mathcal{I}_{\kl}(N)
    = \frac1N \log \sum_{n=0}^N \binom{N}{n}\left(\frac Nn\right)^n \left(1 - \frac Nn\right)^{N - n}.
\end{equation}
To compute $B[\operatorname{kl}(\tfrac12, \vardot), \alpha + \gamma]$, note that
\begin{equation}
    \tfrac12 \log \frac{\tfrac12}{p} + \tfrac12 \log \frac{\tfrac12}{1 - p} = y
    \implies
    y_\pm = \tfrac12 \left(
        1 \pm \sqrt{1 - e^{-2y}}
    \right).
\end{equation}
Therefore,
\begin{equation}
     \mathbb{E}[\overline{p}_{\kl}]
    = \tfrac12 \left(
        1 + \E\sqrt{1 - e^{-2(\alpha + \gamma)}}
    \right)
    \approx 0.836.
\end{equation}
This is better than the Catoni bound by more than $10\%$.
Finally, by omitting $\gamma$, we find the conjectured PAC-Bayes-kl bound:
\begin{equation}
    \mathbb{E}[\underline{p}] = \tfrac12 \left(
        1 + \E\sqrt{1 - e^{-2\alpha}}
    \right)
\end{equation}
Note how similar the computed bounds are:
\begin{align}
    \inf_{\beta > 0} \mathbb{E}[\overline{p}_{C_\beta}]
    &= \tfrac12 \left(
    1 + \sqrt{1 - \E[e^{-\alpha}]^2}
    \right), &&\text{(optimal Catoni)} \\
    \mathbb{E}[\overline{p}_{\kl}]
    &= \tfrac12 \left(
        1 + \E\sqrt{1 - e^{-2(\alpha + \gamma)}}
    \right), &&\text{(PAC-Bayes-kl)} \\
    \mathbb{E}[\underline{p}] &= \tfrac12 \left(
        1 + \E\sqrt{1 - e^{-2\alpha}}
    \right). &&\text{(conjectured PAC-Bayes-kl)}
\end{align}
Define
\begin{equation}
    \phi(x) = 1-\tfrac12\sqrt{1 - x^2},
\end{equation}
which is convex.
Define the $\phi$-entropy of a random variable $X$ by
\begin{equation}
    \H_\phi(X) = \E[\phi(X)] - \phi(\E[X]).
\end{equation}
Observe that $\H_\phi(X)$ quantifies the slack in Jensen's inequality, which, in particular, means that $\H_\phi(X) \ge 0$.
We then find that
\begin{align}
    \inf_{\beta > 0} \mathbb{E}[\overline{p}_{C_\beta}]
    - \mathbb{E}[\underline{p}]
    &= \H_\phi(e^{-\alpha}), \\
    \mathbb{E}[\overline{p}_{\kl}]
    - \mathbb{E}[\underline{p}]
    &= \E[\phi(e^{-\alpha}) - \phi(e^{-\gamma} e^{-\alpha})].
\end{align}
Therefore, the PAC-Bayes-kl bound is tighter if and only if
\begin{equation}
    \E[\phi(e^{-\alpha})]
    - \E[\phi(e^{-\gamma} e^{-\alpha})]
    \le
   \H_\phi(e^{-\alpha}).
\end{equation}
In words, the expected PAC-Bayes-kl bound is tighter than the tighest expected Catoni bound if the slack in Jensen's inequality is more than the slack introduced by scaling by $e^{-\gamma}$, which, for example, will be the case if $\operatorname{KL}(Q\|P)$ attains both small and large values.



\section{Additional Details for Synthetic Classification}

\subsection{Data Generation Details} \label{app:data}

We now provide details of the task generation for the 1D classification experiments. For each task, we sample a 1D function $f$ from a Gaussian process (GP) with an exponentiated quadratic kernel with lengthscale $0.7$ and variance $1$. This is then turned into a classification problem by thresholding: $S = \big( (x_n, \mathrm{sign}(f(x_n))) \big)_{n=1}^N $, where $x_n \sim \mathcal{U}[-2, 2]$. Finally, we only select tasks that are approximately balanced, so that the risk of a trivial predictor is $\approx 0.5$. This is done in a way that preserves the i.i.d.~assumptions. In more detail, when sampling from the GP, in addition to sampling the $N$ points that make up the dataset, we also sample an additional $300$ points that make up an extra held-out set which is unseen by any of the meta-learners, and whose sole purpose is for us to be able to estimate the actual generalisation risk of each posterior, which is what we report under ``Generalisation Risk'' in, \textit{e.g.}, \cref{fig:gen_bounds_1d_GNP_post_opt}. Furthermore, jointly with the $N + 300$ datapoints already sampled, we sample an \emph{additional} $300$ datapoints which form a ``balance set''. The sole purpose of the balance set is for us to check if the dataset is roughly balanced between positive and negative examples. If the prevalence of each class in the balance set is not $\approx 0.5$, then we discard the entire GP sample. Since the balance set is disjoint from the original dataset $N$ (and also the $300$ datapoints forming the extra held-out set), doing this does not jeopardise the i.i.d.~property within each dataset.\footnote{The tasks themselves are also still i.i.d.~from the same task distribution, although this does not affect the validitiy of our bounds, which only requires the i.i.d.~assumption to hold \emph{within} each dataset.} Approximately balancing the datasets in this way is convenient because it allows us to interpret results more easily, since the risk of the trivial classifier that always predicts the majority class in the observed dataset is $\approx 0.5$. 
We generate two disjoint meta-train sets (along with their corresponding meta-test sets) this way: one with $N=30$ and another with $N=60$. The meta-learners are either meta-trained and meta-tested exclusively with $N=30$ or exclusively with $N=60$. 

\subsection{Deterministic Classification for Test Set Model} \label{app:deterministic_val_model}

PAC-Bayes bounds naturally lead to stochastic classifiers (also known as Gibbs classifiers), whereby a fresh sample $h \sim Q$ is drawn whenever the classifier is presented with an input. However, this does not need to be the case for \emph{test set} bounds. In fact, it may be easier to bound the risk of a deterministic classifier with a test set bound than a stochastic one, since for a deterministic classifier, each term in the sum defining the empirical risk is a Bernoulli random variable, and hence it is trivial to apply \cref{lem:bin-inv-bound}, which leads to significantly tighter bounds than \cref{lem:kl-chernoff-bound} in the small data regime. 
Additionally, we observed that when optimising $\frac{1}{T} \sum_{t=1}^T \smash{\gibbsR_{S_{t, \mathrm{test}}}(Q_{\theta}(S_{t, \mathrm{train}}))}$ as in \cref{sec:classification}, the learned posterior map $Q_{\theta}$ eventually became essentially deterministic once meta-training was complete.

Another way to use the binomial tail test set bound in the case when $\ell \in [0,1]$ (as it effectively is for Gibbs classifiers, once the zero-one loss is integrated over $Q$ to form $\mathbb{E}_{h \sim Q} [\ell_{0/ 1} ((x, y), h)] \in [0,1]$), is to randomise the computation of the empirical loss. In particular, for each $z\in \testset$, one could sample a Bernoulli random variable with parameter $\ell(z, h)$ and set the empirical risk in \cref{lem:bin-inv-bound} to be the average of these Bernoulli random variables. For test set bounds, these are i.i.d.~hence the sum is binomially distributed and \cref{lem:bin-inv-bound} can be applied directly. We do not pursue this here, as it does not make a significant difference when the classifier is nearly deterministic, as we found.

For these reasons, at meta-\emph{test} time we convert the test set bound meta-learners into \emph{deterministic} classifiers by using a Bayes classifier instead of a Gibbs classifier. That is, the final predictor for a test set bound meta-learner with posterior $Q$ is given by 
\begin{equation}
    \hat{y}(x) \coloneqq \mathrm{sign} \left( \mathbb{E}_{w \sim Q} [ w^{\mathsf{T}} \phi_{\theta}(x) ] \right)
\end{equation}
The risk of this predictor on a dataset $S$, which is the quantity we report and upper bound for the test set bound meta-learners in \cref{sec:classification}, is then simply the usual (non-Gibbs) risk: $R_S(\hat{y}) = \frac{1}{N} \sum_{(x,y) \in S} \ell_{0/1} ((x, y), \hat{y})$. We emphasise that this change primarily serves to simplify the test set bound computation (and allow the use of the tighter \cref{lem:bin-inv-bound} instead of just \cref{lem:kl-chernoff-bound}), and essentially does not affect the performance of the test set classifiers --- the Gibbs and Bayes risks are nearly identical because the Gibbs classifier learned by the test set meta-learners was already nearly deterministic.

\subsection{Computing the Empirical Risk} \label{app:computing_emp_risk}

In this section we provide additional details on how to compute the empirical risk term for the meta-learners. This applies for the PAC-Bayes meta-learners at both meta-train time and meta-test time, but only applies to the test set bound meta-learners during meta-train time --- at meta-test time we use a Bayes classifier for the test set bound meta-learners instead of a Gibbs one; see \cref{app:deterministic_val_model} for a discussion. Recall that we consider hypotheses of the form $h_w(x) \coloneqq \mathrm{sign}(w^{\mathsf{T}} \phi_\theta(x))$. Then the loss function is:
\begin{align}
    \ell_{0/1}((x, y), h_w) &= \mathbbm{1} [ y \neq \mathrm{sign}(w^{\mathsf{T}} \phi_\theta(x)) ]
\end{align}

We can then compute the empirical Gibbs risk as
\begin{align} 
    \gibbsR_S(Q) &= \frac{1}{|S|} \sum_{(x, y) \in S} \mathbb{E}_{w \sim Q} [\mathbbm{1} [ y \neq \mathrm{sign}(w^{\mathsf{T}} \phi_\theta(x)) ]] \\
    &= \frac{1}{|S|} \sum_{(x, y) \in S} \mathrm{Pr}  \left( y w^{\mathsf{T}} \phi_\theta(x) < 0 \right) \label{eqn:empirical_risk}
\end{align}
We now specialise to the case of Gaussian $Q \coloneqq \mathcal{N}(\mu, \Sigma)$. In this case, we can compute the empirical Gibbs risk in \cref{eqn:empirical_risk} in closed form (up to the error function, which has a standard implementation in PyTorch \citep{paszke2017automatic}): 
\begin{align}
    y w^{\mathsf{T}} \phi_\theta(x) &\sim \mathcal{N}(y \mu^{\mathsf{T}}\phi_\theta (x), \, \phi_{\theta}(x)^{\mathsf{T}} \Sigma \phi_{\theta}(x) ), \\
    \mathrm{Pr}  \left( y w^{\mathsf{T}} \phi_\theta(x) < 0 \right) &= \Phi \left(\frac{-y \mu^{\mathsf{T}}\phi_\theta (x)}{\sqrt{\phi_{\theta}(x)^{\mathsf{T}} \Sigma \phi_{\theta}(x)}} \right)
\end{align}
where $\Phi$ is the standard normal cumulative distribution function and where we have used the fact that $y \in \{ -1, +1 \}$ so $y^2 = 1$. Now recall that $\Phi$ is related to the error function $\mathrm{erf}(x)$ (as defined in PyTorch) by $\Phi(x) = \frac{1}{2} [1 + \mathrm{erf}(\frac{x}{\sqrt{2}})]$, which gives:
\begin{align}
    \gibbsR_S(Q) &= \frac{1}{|S|} \sum_{(x, y) \in S} \frac{1}{2} \left[ 1 + \mathrm{erf}\left( \frac{-y \mu^{\mathsf{T}}\phi_\theta (x)}{\sqrt{2\phi_{\theta}(x)^{\mathsf{T}} \Sigma \phi_{\theta}(x)}} \right) \right].
\end{align}
Hence we can backpropagate through the empirical Gibbs risk without the need for Monte Carlo integration over $Q$.

\subsection{Post-Hoc Optimisation of Posteriors} \label{app:post_optimisation}

It is well-known that when performing amortised variational inference (VI) \citep{kingma2013auto}, there is an \emph{amortisation gap} \citep{cremer2018inference}, which is the gap between the performance of the amortised inference network, and the performance obtained when optimising each variational problem separately. The meta-learners we consider in \cref{sec:classification} have similarities with amortised VI, except that PAC-Bayes bound minimisation is being amortised, rather than VI. Similarly, there is an amortisation gap for our meta-learners, which is the gap between the bound obtained by the meta-learner when the posterior that was outputted by the posterior map is directly used, versus the bound we obtain when optimising, for each dataset, the posterior using gradient-based methods (in our case, ADAM \citep{kingma2014adam}).
Optimising the posterior for each dataset individually is costly, but since we are concerned with obtaining the tightest bounds possible, we perform this \emph{post-hoc optimisation} for all of our meta-learners (including the results reported in \cref{sec:classification}). Fortunately, each optimisation does not take too long, since we can initialise the posterior at the distribution output by the meta-learner. 

So far, we have discussed post-hoc optimisation of the PAC-Bayes bound. However, we can also run post-hoc optimisation for the test set bound, \emph{as long as the optimised posterior does not depend on the test set}. We consider post-hoc optimising the \emph{train risk} for each dataset. In principle, this could possibly lead to overfitting the train set. In practice, we observe that this \emph{improves} performance slightly for the MLP-NP (indicating that the MLP-NP test set meta-learner was underfitting the train set somewhat), and leaves performance essentially completely unaffected for the CNN-NP, because the train set risk is \emph{already} essentially zero for the CNN-NP test set meta-learner before post-optimisation. Note that post-hoc optimisation is completely legal as a means of obtaining bounds --- it does not affect the validity of the bounds we consider, but merely closes the amortisation gap.

As an ablation study, we can compare the performance of the meta-learners with and without post-hoc optimisation. \Cref{fig:gen_bounds_1d_GNP_no_post_opt,fig:gen_bounds_MLP_1d_no_post_opt} show the performance of the CNN-NP and MLP-NP meta-learners \emph{without} post-hoc optimisation, which should be compared to \cref{fig:gen_bounds_1d_GNP_post_opt,fig:gen_bounds_1d_MLP_post_opt}, which show their performance \emph{with post-optimisation}. Comparing \cref{fig:gen_bounds_1d_GNP_no_post_opt} with \cref{fig:gen_bounds_1d_GNP_post_opt} we see that post-hoc optimisation improves the performance of the PAC-Bayes meta-learners slightly but leaves the test set meta-learners essentially unaffected for the CNN-NP. Comparing \cref{fig:gen_bounds_MLP_1d_no_post_opt} with \cref{fig:gen_bounds_1d_MLP_post_opt}, we see that post-hoc optimisation tightens the generalisation bounds for all meta-learners slightly. In conclusion, post-hoc optimisation sometimes leads to a small benefit, so we perform it for all meta-learners.

\begin{figure}[h]
\centering
\begin{subfigure}{.49\textwidth}
  \centering
  \includegraphics[width=.49\linewidth]{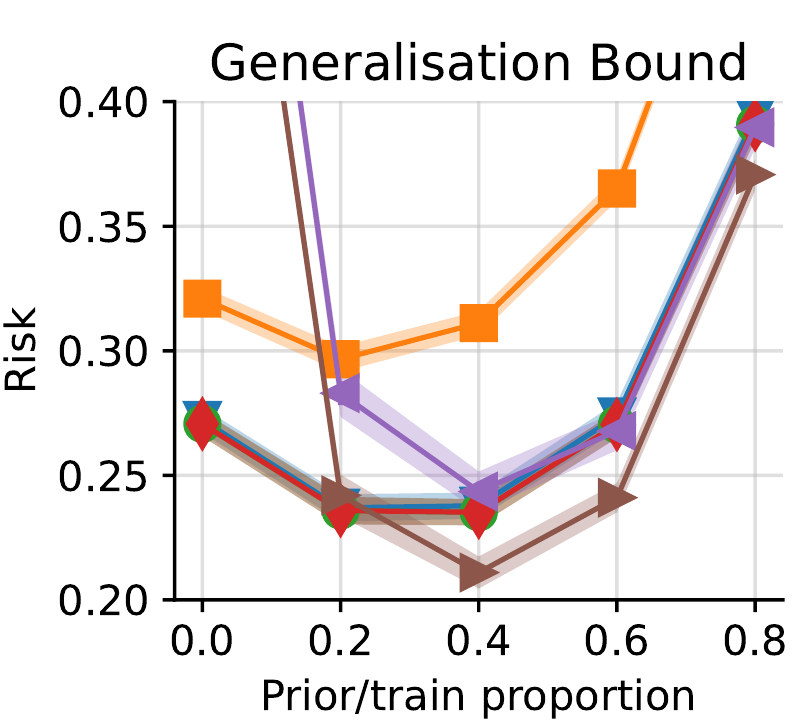}
  \includegraphics[width=.49\linewidth]{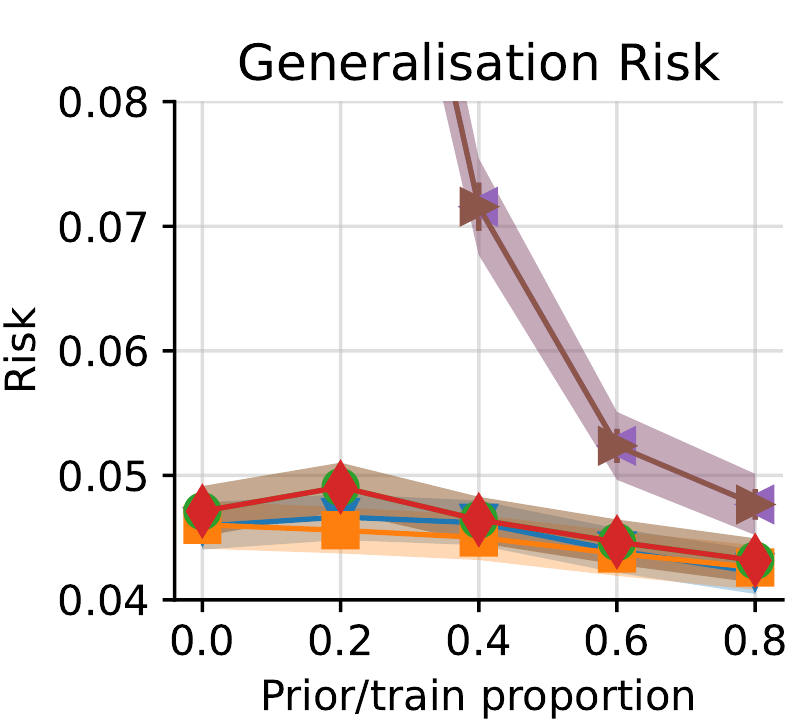}
  \caption{$N=30$ datapoints.}
\end{subfigure}%
\begin{subfigure}{.49\textwidth}
  \centering
  \includegraphics[width=.49\linewidth]{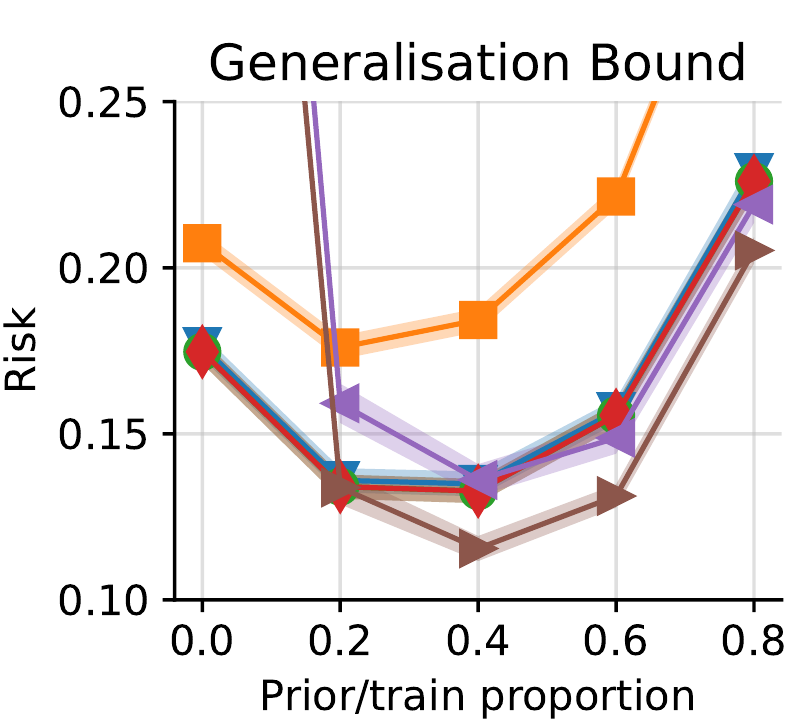}
  \includegraphics[width=.49\linewidth]{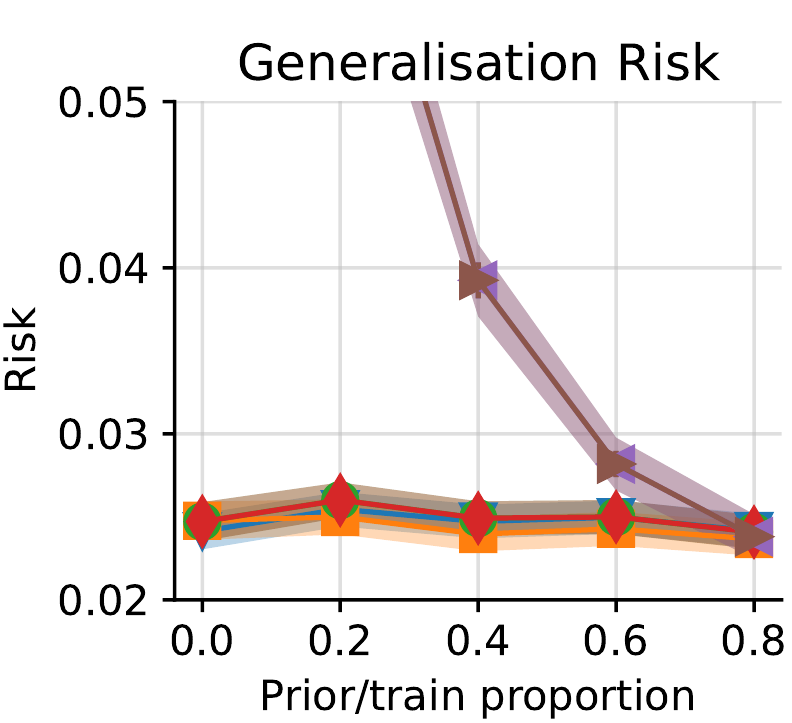}
  \caption{$N=60$ datapoints.}
\end{subfigure} 
\caption{Average generalisation bound and actual generalisation risk \textbf{for CNN-NP without post-hoc optimisation} ($\pm$ two standard errors) for Catoni ($\textcolor{plotblue}{\blacktriangledown}$), PAC-Bayes-kl ($\textcolor{plotorange}{\blacksquare}$), optimisitc PAC-Bayes-kl
($\textcolor{plotgreen}{\sbullet[1.5]}$), learned convex
($\textcolor{plotred}{\blacklozenge}$), Chernoff test set bound
($\textcolor{plotpurple}{\blacktriangleleft}$), 
and binomial tail test set bound 
($\textcolor{plotbrown}{\blacktriangleright}$). All bounds hold with failure probability $\delta = 0.1$ except for conjectured PAC-Bayes-kl which is not proven to be a valid generalisation bound.} \label{fig:gen_bounds_1d_GNP_no_post_opt}
\end{figure}

\subsection{The Multilayer Perceptron Neural Process} \label{app:MLP-NP}

We now describe the multilayer perceptron (MLP)-NP model, which is closely related to (but not identical with) the \emph{conditional neural process} model first described in \citet{garnelo2018conditional}.\footnote{The original conditional neural process outputs a Gaussian distribution directly in function space. This leads to complications when considering the KL term in the PAC-Bayes bounds, hence we modify it to output a Gaussian distribution over the parameters of a linear model.} When using the MLP-NP, we use an MLP to implement the feature map $\phi_{\theta}$. Additionally, each of the maps $Q_{\theta}$/$P_{\theta}$ consists of two MLPs: the \emph{encoder} and \emph{decoder}. The encoder $e_{\theta}$ maps $\mathbb{R} \times \{-1 , +1\} \to \mathbb{R}^M$, where $\mathbb{R}^M$, $M \in \mathbb{N}$ is the \emph{representation space}. The decoder $d_{\theta}$ maps $\mathbb{R}^M \to \mathcal{G}(\mathbb{R}^K)$, where $\mathcal{G}(\mathbb{R}^K)$ is the set of all Gaussian distributions over $\mathbb{R}^K$ (in practice, the decoder outputs a vector in $\mathbb{R}^{K + K(K+1)/2}$, which is converted into the mean of the Gaussian, and also the lower-triangular part of the Cholesky decomposition of the covariance matrix). When given a dataset $S$, the encoder computes a permutation-invariant representation of the dataset as $r_{\theta}(S) \coloneqq \frac{1}{N} \sum_{(x, y) \in S} e_{\theta}((x, y))$. The decoder $d_{\theta}$ then computes a Gaussian posterior distribution over the hypothesis space as $d_{\theta}(r_{\theta}(S))$.

\subsection{The CNN-Based Gaussian Neural Process} \label{app:GNP}

In contrast to the MLP-NP, which uses MLPs to implement the feature map $\phi_\theta$, the CNN-Based Gaussian Neural Process \citep{bruinsma2021} (CNN-NP) lets the $k$\textsuperscript{th} component of the feature map be $\phi_{\theta,k}(x) = \exp(-\frac1{2\ell^2}(x-x_k)^2)$, a Gaussian basis function centred at some fixed input $x_k$, with a learnable lengthscale $\ell$.
The centres of the Gaussian basis functions $\smash{(x_k)_{k=1}^K}$ are evenly spread out through the interval $[-2, 2]$.
The CNN-NP lets $Q_\theta$ and $P_\theta$ be parametrisations of maps from datasets to full-covariance Gaussian posteriors over the weights of these basis functions where the maps incorporate a symmetry called \emph{translation equivariance}:
if all inputs of the observed data are shifted by some amount, then the posterior over the weights for the basis functions should be shifted accordingly.
Translation equivariance enables the CNN-NP to use CNNs for $Q_\theta$ and $P_\theta$ instead of MLPs.

We now give a brief high-level description how the CNN architecture for the posterior mean of the Gaussian works. This follows the way that the mean of the \emph{Convolutional Conditional Neural Process} (ConvCNP) is computed,\footnote{The predictive \emph{mean} of the ConvCNP \citep{gordon2019convolutional} and that of the later, full-covariance Gaussian Neural Process \citep{bruinsma2021} are computed in the same way.} and we refer the reader to Sec 4 and especially Fig 1 of \citet{gordon2019convolutional} for a full description. First, the dataset is embedded into a 1D function with two channels, known as the \emph{data channel} and the \emph{density channel}. This 1D function is then evaluated on a discretised grid, and then fed into a CNN. The CNN output then defines mean of the Gaussian predictive distribution over functions. However, unlike in \citet{gordon2019convolutional} and \citet{bruinsma2021}, we modify this setup slightly, so that, instead of interpreting the CNN output as the mean of the Gaussian predictive over functions, it is interpreted as the mean of the Gaussian posterior over \emph{weights} of the basis functions in $\phi_{\theta}$. Defining the posteriors in weight space instead of function space makes it much easier to compute the KL-divergence.

We also give a brief description of how the CNN architecture for computing the posterior \emph{covariance} works. As this computation is more involved than the computation for the mean, we refer
the reader to Sec 3 and App E.2 of \citet{bruinsma2021} for a detailed description, \url{https://github.com/wesselb/NeuralProcesses.jl} for a full implementation, and \citet{bruinsmatalk} for a useful visual description of the Gaussian neural process architecture, on which we base our CNN-NP architecture used in \cref{sec:classification}.
To compute the covariance matrix of the weights of the basis functions, the dataset $S$ is first embedded into three images on $[-2, 2] \times [-2, 2]$.
The embedding is performed by placing a Gaussian basis function\footnote{These basis functions are distinct from the basis functions used to define the feature map $\phi_{\theta}$.} corresponding to each datapoint along the \emph{diagonal} of the $[-2, 2] \times [-2, 2]$ square.
These three images are known as the \emph{data channel}, \emph{density channel} and \emph{source channel} respectively.
As explained in \citet{bruinsma2021}, the data channel incorporates information about the $y$-values of the observations in $S$, the density channel records information about how many points in $S$ are observed at any particular $x$-location, and the source channel is simply in the shape of an identity matrix which, intuitively speaking, allows CNN-NP to begin with a ``white noise'' covariance matrix that afterwards is modulated to include correlations.
These continuous images, after being appropriately discretised on a regular 2D grid\footnote{This discretisation need not be the same as the spacing used for the Gaussian basis functions which make up the feature map $\phi_{\theta}$.} are passed through a 2D CNN, which outputs an image which is interpreted as a covariance matrix over the interval $[-2, 2]$. In order to ensure that the covariance matrix output is positive semi-definite, we multiply the output by itself: $\Sigma \coloneqq M \smash{M^{\mathsf{T}}}$, where $\Sigma$ is the covariance matrix and $M$ is the $K \times K$ matrix output by the CNN. 
This covariance matrix is finally interpolated onto the grid defined by the locations of the basis functions in $\phi_{\theta}$, which then defines the covariance of the weights of the basis functions.

\subsection{Hyperparameters} \label{app:hyperparameters}

\shortsubsection{General meta-learner training details.}
We fix the failure probability at $\delta=0.1$ for all of the meta-learning experiments. During meta-training, for the PAC-Bayes models we found it was more numerically stable to optimise the logarithm of the objective described in \cref{eqn:pac-bayes-batch-obj}. In particular, for the Catoni bound model, we used the numerically stable implementation of $\log(1 - e^{-x})$ referenced in \url{https://github.com/pytorch/pytorch/issues/39242}. For all meta-learners, we use a mini-batch estimate of the objective in \cref{eqn:pac-bayes-batch-obj}, with a batch size of $16$ datasets. We use a weight decay of $1\times 10^{-5}$ for all meta-learners. 

\shortsubsection{MLP-NP hyperparameters.}
We use a relatively large architecture for the MLP-NP, as we found during preliminary experiments that larger architectures performed better. The feature dimension of the linear model (see \cref{app:MLP-NP}) was set at $K=256$. The MLPs implementing the feature map $\phi_{\theta}$, encoder $e_{\theta}$ and decoder $d_{\theta}$ all had two hidden layers, each with a width of $512$. 
The MLP-NP was trained for $100$ epochs on the meta-train set, with a learning rate of $2\times 10^{-5}$ (we found that higher learning rates could lead to instabilities during training) with the ADAM optimiser \citep{kingma2014adam}.
We did some manual hyperparameter tuning to choose these hyperparameters, but they were not selected exhaustively. To avoid overfitting to the meta-train set when doing manual hyperparameter tuning, we also sampled a meta-validation set of datasets, which we used when tuning hyperparameters.

\shortsubsection{CNN-NP hyperparameters.}
For the CNN in the architecture, we use a U-Net \citep{ronneberger2015u}. The U-Net, we use has $12$ layers, with the number of channels in each layer being $8, 16, 16, 32, 32, 64, 64, 64, 64, 32, 32, 16$ respectively. This architecture matches that used by \citet{gordon2019convolutional}. The number of Gaussian basis functions per unit of input space (which determines the number of features in $\phi_{\theta}$) was set at $16$. The discretisation of the Gaussian Neural Process (\textit{i.e.}, the spacing at which the continuous representation is evaluated before passing it through the CNN) is set at $32$ points per unit. Then CNN-NP was trained for $10$ epochs on the meta-train set, with a learning rate of $1\times 10^{-3}$ with the ADAM optimiser. We did very little manual hyperparameter tuning for the CNN-NP, because we found that it was fairly robust to the choice of learning rate and basis function spacing. In all cases, the CNN-NP optimised much more quickly than the MLP-NP.

\shortsubsection{Post-hoc optimisation.}
We perform post-hoc optimisation at meta-test time, as discussed in \cref{app:post_optimisation}. Given a dataset, we initialise the posterior at the Gaussian distribution which is output by the NP. We then use the ADAM optimiser \citep{kingma2014adam} with a learning rate of $3 \times 10^{-4}$ for a maximum of $3\,000$ optimisation steps to target either the PAC-Bayes bound (for PAC-Bayes meta-learners), or the train risk (for test set meta-learners). If, after $100$ optimisation steps, the generalisation bound has not decreased by at least $0.0001$, then the optimisation is ended early.

\shortsubsection{Compute.}
We used roughly 500--1000 GPU-hours divided NVIDIA Tesla V100 and GeForce RTX 2080 graphics cards using both an internal cluster and Amazon Web Services. Most of the computational budget was spent on the meta-learning experiments. Of these, the MLP-NP was more costly to run than then CNN-NP, since it took longer to train.

\section{Additional Plots for Synthetic Classification}

\subsection{Predictive Distributions} \label{app:additional_plots_predictive}
In this appendix we include extra plots of 1D classification tasks from the meta-test set, similar to \cref{fig:example_classification_kl-val_catoni}.

\newpage

\begin{figure}[h]
\centering
\begin{subfigure}{.49\textwidth}
  \centering
  \includegraphics[width=.99\linewidth]{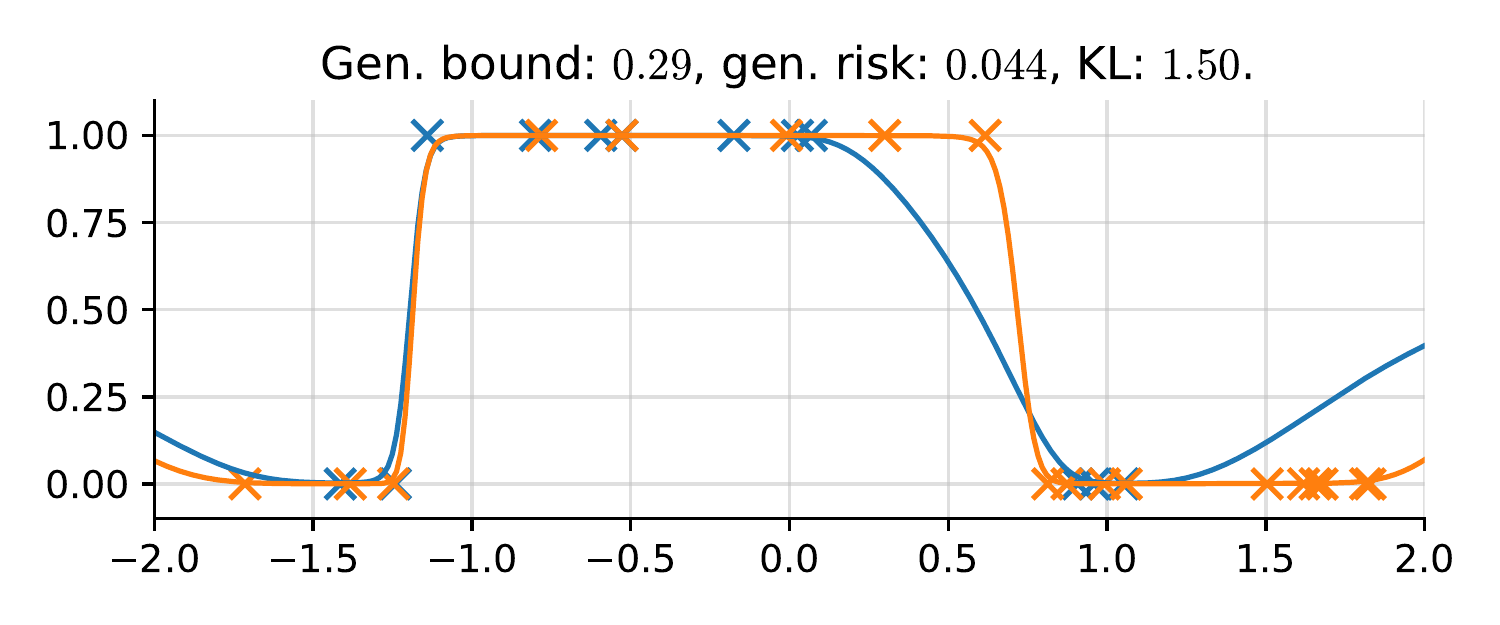}
  \caption{\textbf{PAC-Bayes-kl} bound with data-dependent prior, showing the prior (\textcolor{plotblue}{---}) and posterior (\textcolor{plotorange}{---}) predictive, prior set (\textcolor{plotblue}{{\tiny \XSolid}}) of size 12 and risk set (\textcolor{plotorange}{{\tiny \XSolid}}) of size 18.}
\end{subfigure}\hfill%
\begin{subfigure}{.49\textwidth}
  \centering
  \includegraphics[width=.99\linewidth]{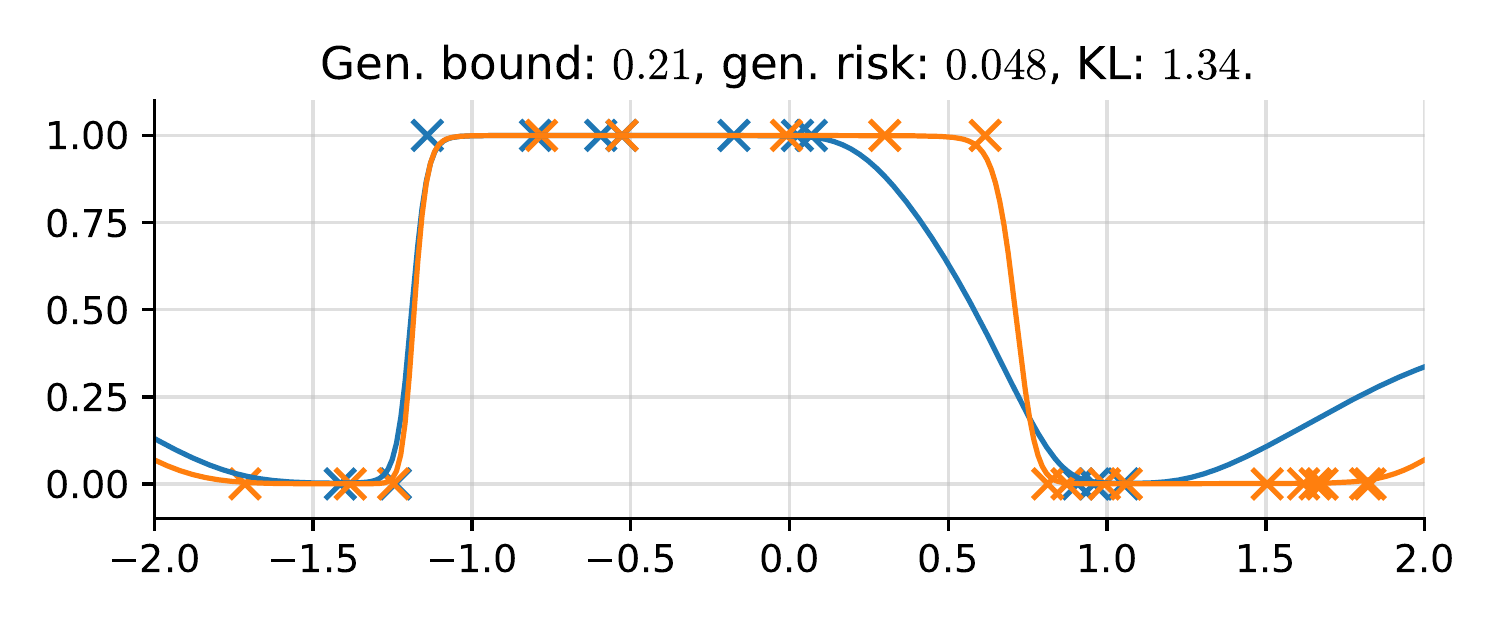}
  \caption{\textbf{Catoni} bound with data-dependent prior, showing the prior (\textcolor{plotblue}{---}) and posterior (\textcolor{plotorange}{---}) predictive, prior set (\textcolor{plotblue}{{\tiny \XSolid}}) of size 12 and risk set (\textcolor{plotorange}{{\tiny \XSolid}}) of size 18.}
\end{subfigure}
\caption{Predictions and bounds on one of the 1D datasets with \textbf{30 datapoints}. The bounds hold with failure probability $\delta = 0.1$.}
\end{figure}

\begin{figure}[h]
\centering
\begin{subfigure}{.49\textwidth}
  \centering
  \includegraphics[width=.99\linewidth]{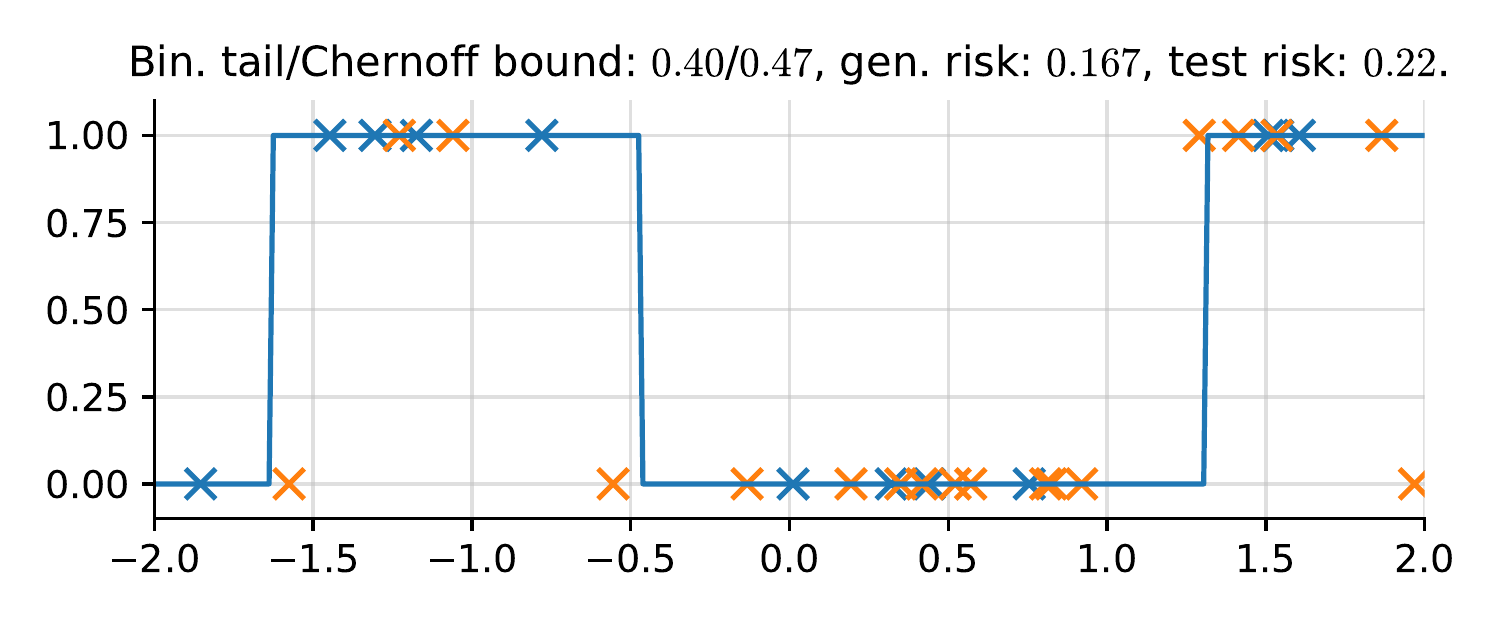}
  \caption{\textbf{Test set} bounds, showing the learned hypothesis, (\textcolor{plotblue}{---}), the train set (\textcolor{plotblue}{{\tiny \XSolid}}) of size 12 and the test set (\textcolor{plotorange}{{\tiny \XSolid}}) of size 18.}
\end{subfigure}\hfill%
\begin{subfigure}{.49\textwidth}
  \centering
  \includegraphics[width=.99\linewidth]{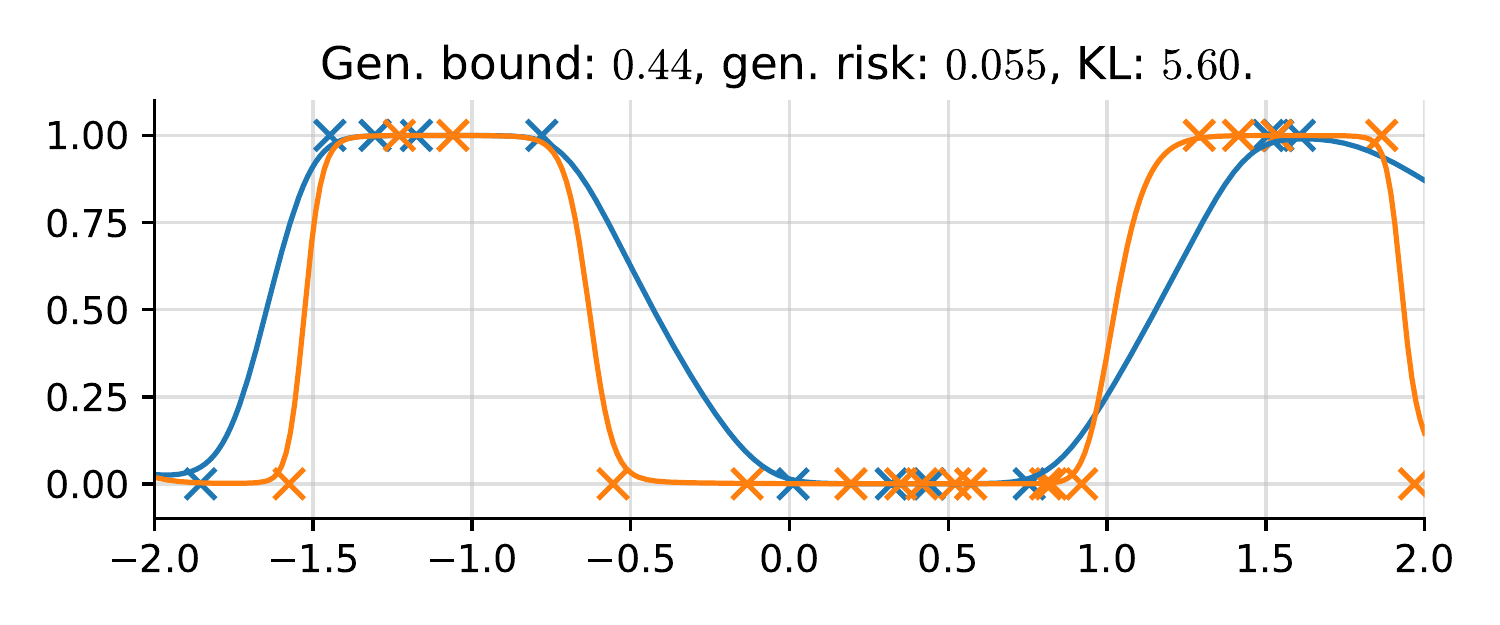}
  \caption{\textbf{Catoni} bound with data-dependent prior, showing the prior (\textcolor{plotblue}{---}) and posterior (\textcolor{plotorange}{---}) predictive, prior set (\textcolor{plotblue}{{\tiny \XSolid}}) of size 12 and risk set (\textcolor{plotorange}{{\tiny \XSolid}}) of size 18.}
\end{subfigure}
\caption{Predictions and bounds on one of the 1D datasets with \textbf{30 datapoints}. The bounds hold with failure probability $\delta = 0.1$.}
\end{figure}

\begin{figure}[h]
\centering
\begin{subfigure}{.49\textwidth}
  \centering
  \includegraphics[width=.99\linewidth]{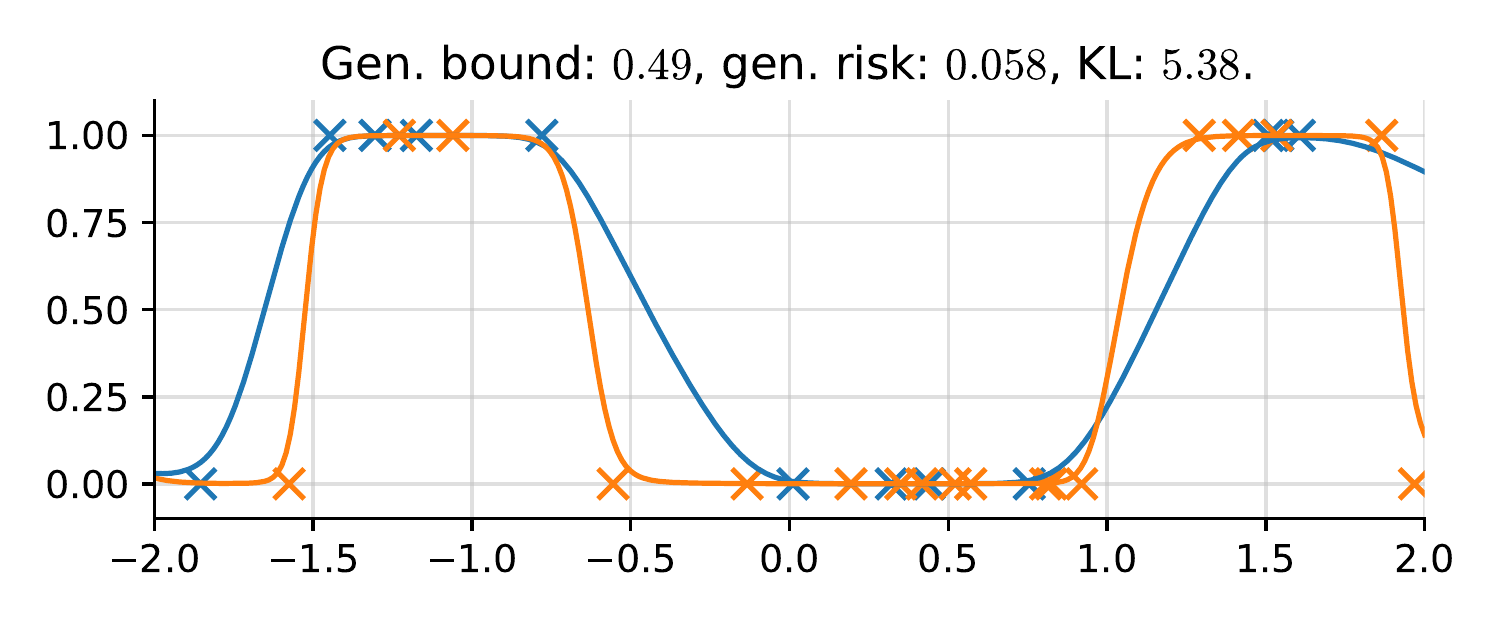}
  \caption{\textbf{PAC-Bayes-kl} bound with data-dependent prior, showing the prior (\textcolor{plotblue}{---}) and posterior (\textcolor{plotorange}{---}) predictive, prior set (\textcolor{plotblue}{{\tiny \XSolid}}) of size 12 and risk set (\textcolor{plotorange}{{\tiny \XSolid}}) of size 18.}
\end{subfigure}\hfill%
\begin{subfigure}{.49\textwidth}
  \centering
  \includegraphics[width=.99\linewidth]{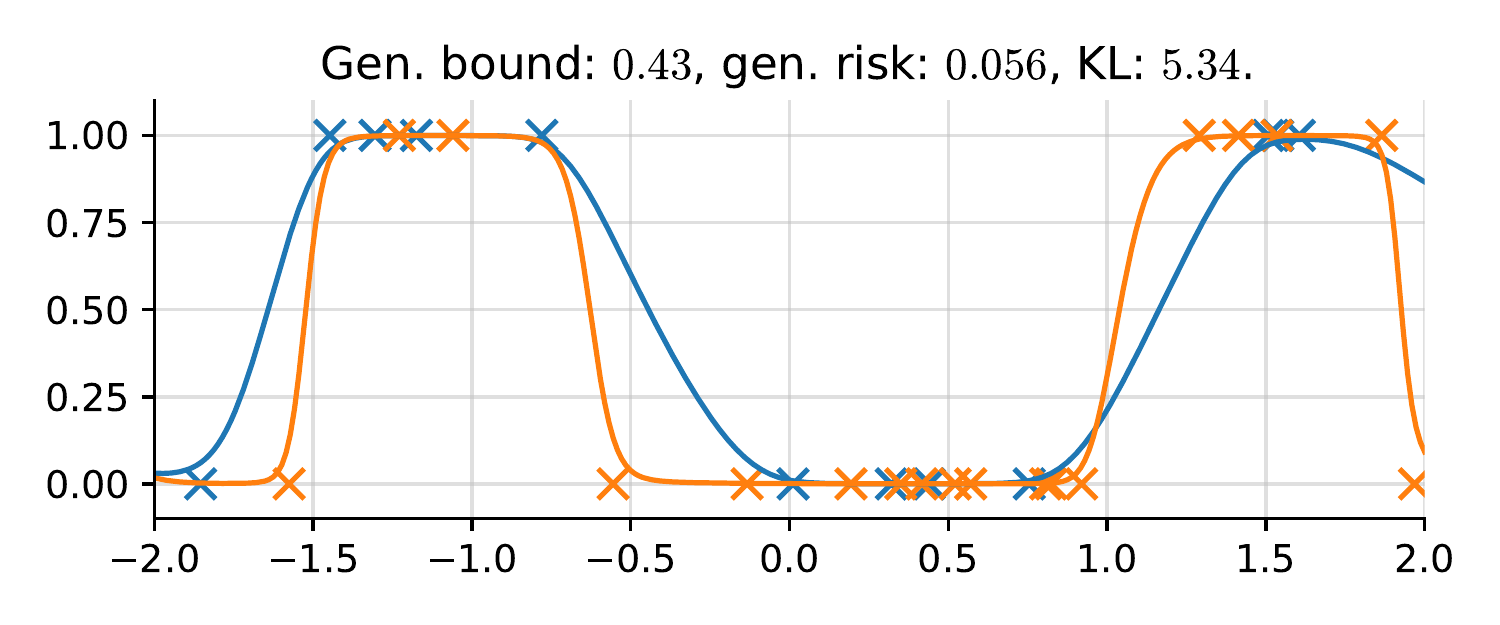}
  \caption{\textbf{Learned convex} bound with data-dependent prior, showing the prior (\textcolor{plotblue}{---}) and posterior (\textcolor{plotorange}{---}) predictive, prior set (\textcolor{plotblue}{{\tiny \XSolid}}) of size 12 and risk set (\textcolor{plotorange}{{\tiny \XSolid}}) of size 18.}
\end{subfigure}
\caption{Predictions and bounds on one of the 1D datasets with \textbf{30 datapoints}. The bounds hold with failure probability $\delta = 0.1$.}
\end{figure}

\begin{figure}[h]
\centering
\begin{subfigure}{.49\textwidth}
  \centering
  \includegraphics[width=.99\linewidth]{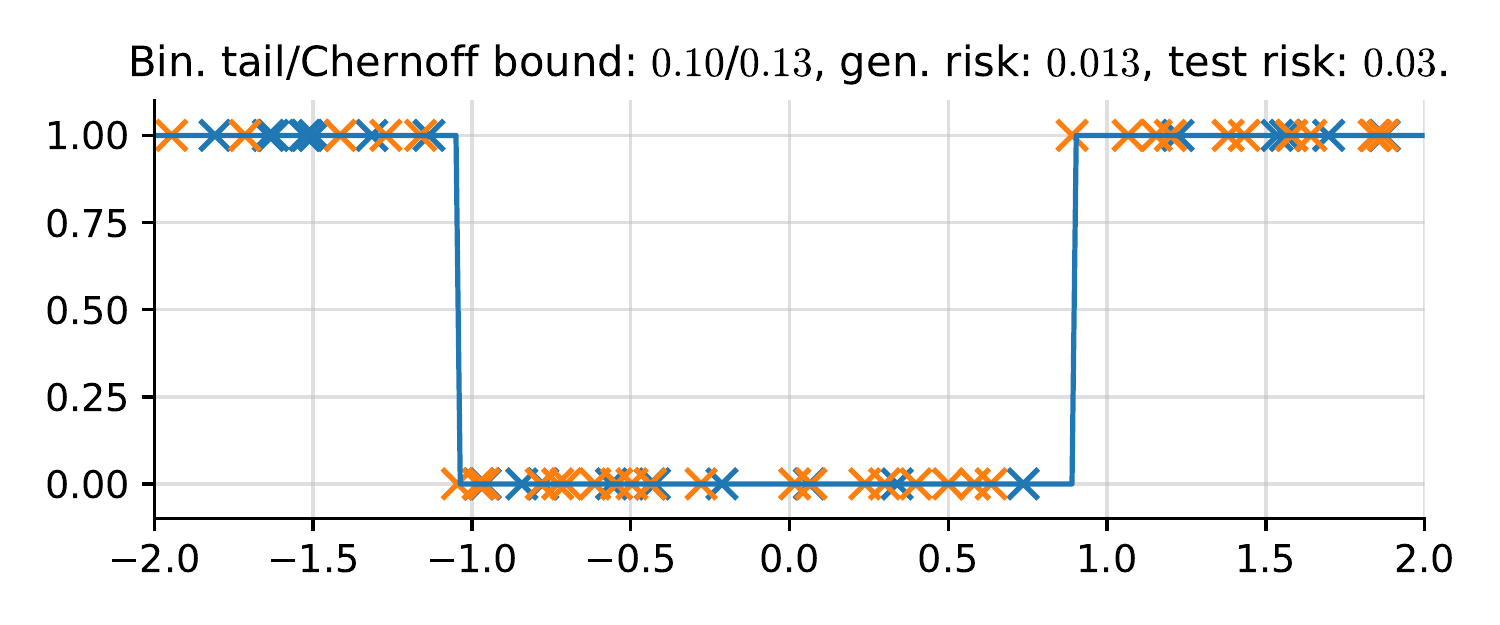}
  \caption{\textbf{Test set bounds}, showing the learned hypothesis, (\textcolor{plotblue}{---}), the train set (\textcolor{plotblue}{{\tiny \XSolid}}) of size 24 and the test set (\textcolor{plotorange}{{\tiny \XSolid}}) of size 36.}
\end{subfigure}\hfill%
\begin{subfigure}{.49\textwidth}
  \centering
  \includegraphics[width=.99\linewidth]{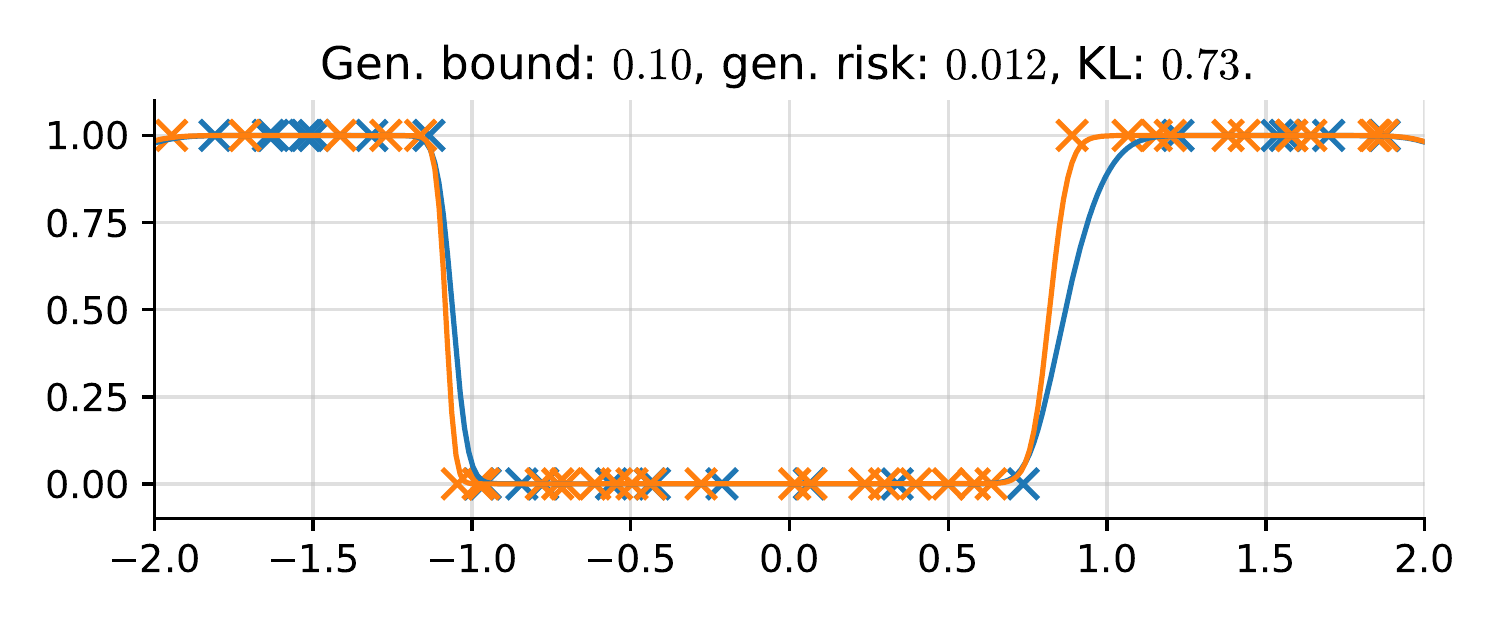}
  \caption{\textbf{Catoni} bound with data-dependent prior, showing the prior (\textcolor{plotorange}{---}) and posterior (\textcolor{plotblue}{---}) predictive, prior set (\textcolor{plotblue}{{\tiny \XSolid}}) of size 24 and risk set (\textcolor{plotorange}{{\tiny \XSolid}}) of size 36.}
\end{subfigure}
\caption{Predictions and bounds on one of the 1D datasets with \textbf{60 datapoints}. The bounds hold with failure probability $\delta = 0.1$.}
\end{figure}

\begin{figure}[h]
\centering
\begin{subfigure}{.49\textwidth}
  \centering
  \includegraphics[width=.99\linewidth]{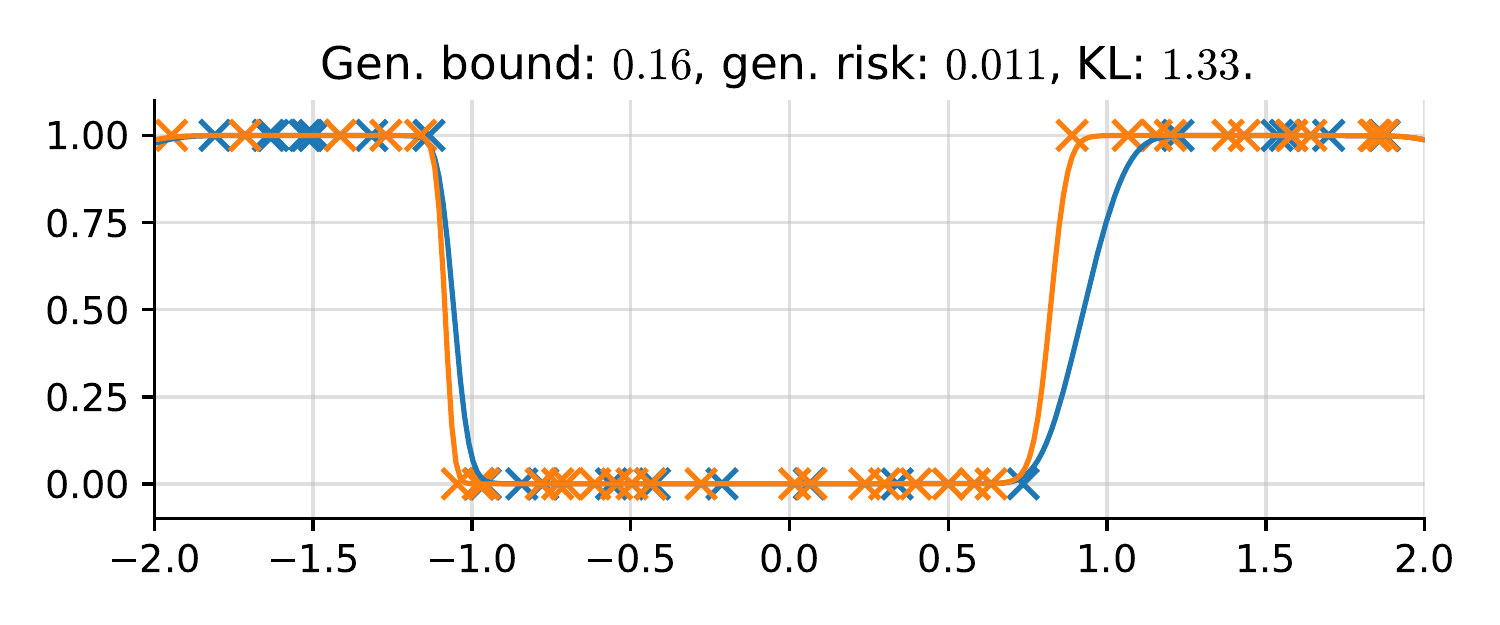}
  \caption{\textbf{PAC-Bayes-kl} bound with data-dependent prior, showing the prior (\textcolor{plotblue}{---}) and posterior (\textcolor{plotorange}{---}) predictive, prior set (\textcolor{plotblue}{{\tiny \XSolid}}) of size 24 and risk set (\textcolor{plotorange}{{\tiny \XSolid}}) of size 36.}
\end{subfigure}\hfill%
\begin{subfigure}{.49\textwidth}
  \centering
  \includegraphics[width=.99\linewidth]{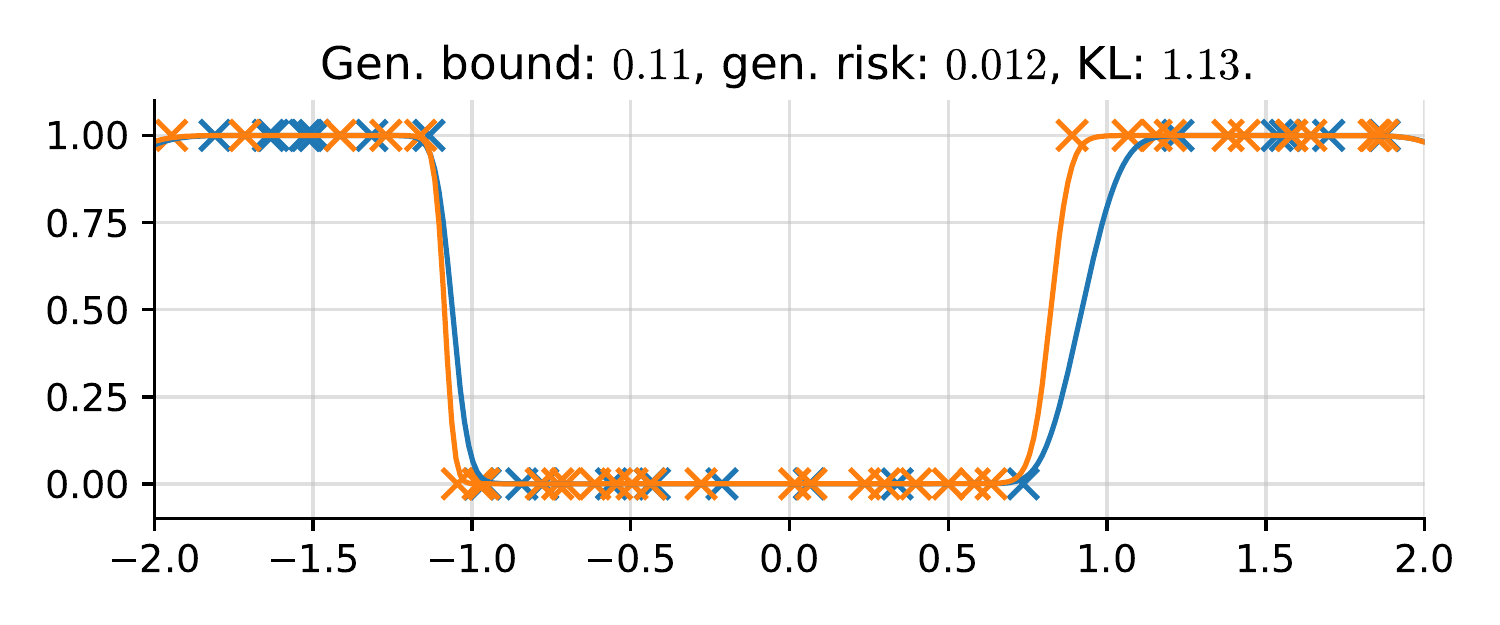}
  \caption{\textbf{Learned convex} bound with data-dependent prior, showing the prior (\textcolor{plotblue}{---}) and posterior (\textcolor{plotorange}{---}) predictive, prior set (\textcolor{plotblue}{{\tiny \XSolid}}) of size 24 and risk set (\textcolor{plotorange}{{\tiny \XSolid}}) of size 36.}
\end{subfigure}
\caption{Predictions and bounds on one of the 1D datasets with \textbf{60 datapoints}. The bounds hold with failure probability $\delta = 0.1$.}
\end{figure}

\subsection{Performance of MLP-NP} \label{app:additional_plots_mlp}

In the main body, we considered the CNN-NP model, since it performed better while training much faster and requiring fewer parameters then the MLP-NP. In \cref{fig:gen_bounds_1d_MLP_post_opt,fig:gen_bounds_MLP_1d_no_post_opt} we also show the performance of the MLP-NP for the test set meta-learners and also the Catoni bound meta-learner, both with and without post-hoc optimisation (see \cref{app:post_optimisation}). We see that the MLP-NP test set meta-learner performs very similarly to the CNN-NP one when $\setsize = 30$, but performs slightly worse when $\setsize = 60$. The MLP-NP Catoni meta-learner is either as tight as the CNN-NP Catoni meta-learner, or slightly looser, except when $\setsize=30$ and the prior proportion is $0$ or $0.8$, in which case the MLP-NP seems to have encountered learning difficulties. Also note that generalisation risk is generally higher for the MLP-NP than the CNN-NP. 

\begin{figure}[h]
\centering
\begin{subfigure}{.49\textwidth}
  \centering
  \includegraphics[width=.49\linewidth]{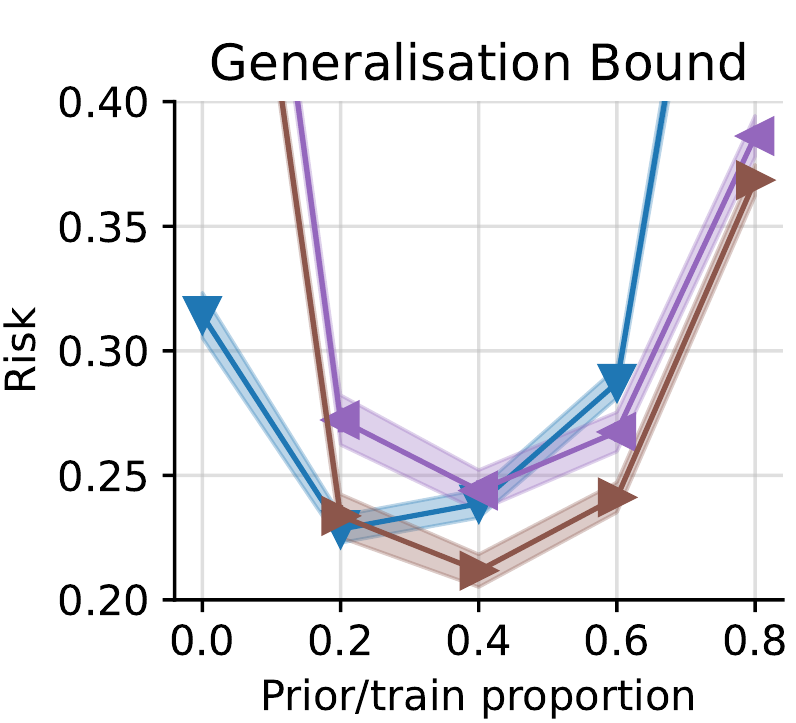}
  \includegraphics[width=.49\linewidth]{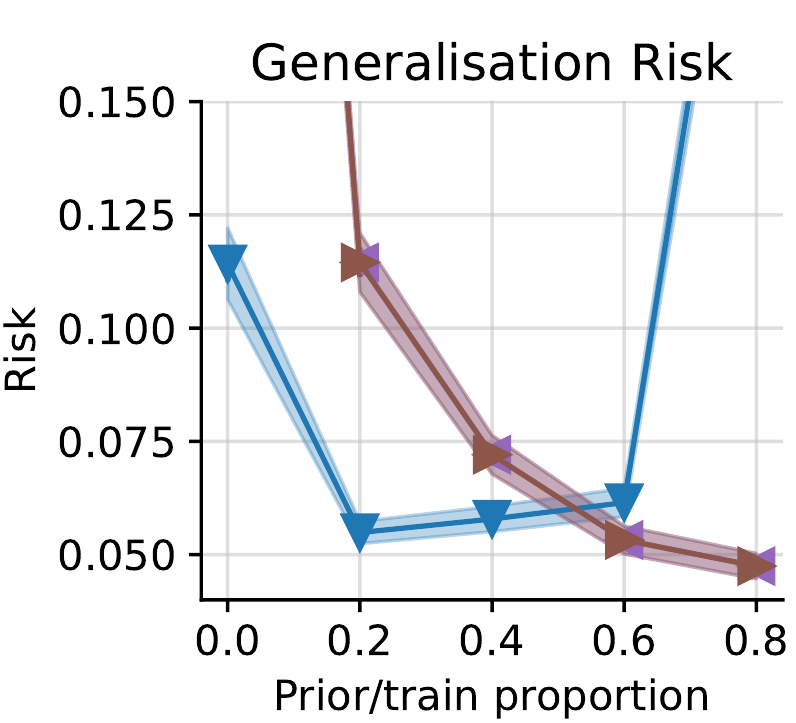}
  \caption{$N=30$ datapoints.}
\end{subfigure}%
\begin{subfigure}{.49\textwidth}
  \centering
  \includegraphics[width=.49\linewidth]{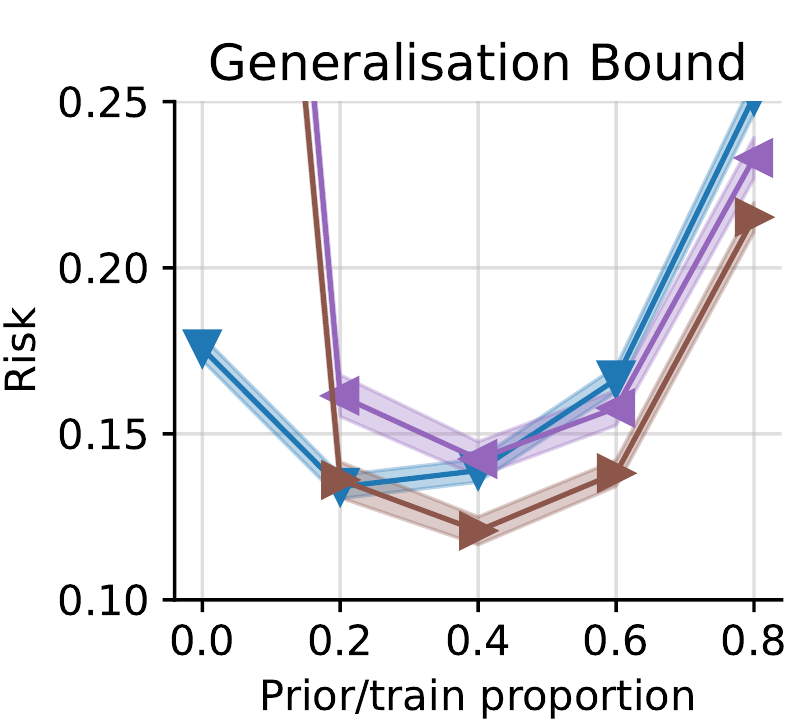}
  \includegraphics[width=.49\linewidth]{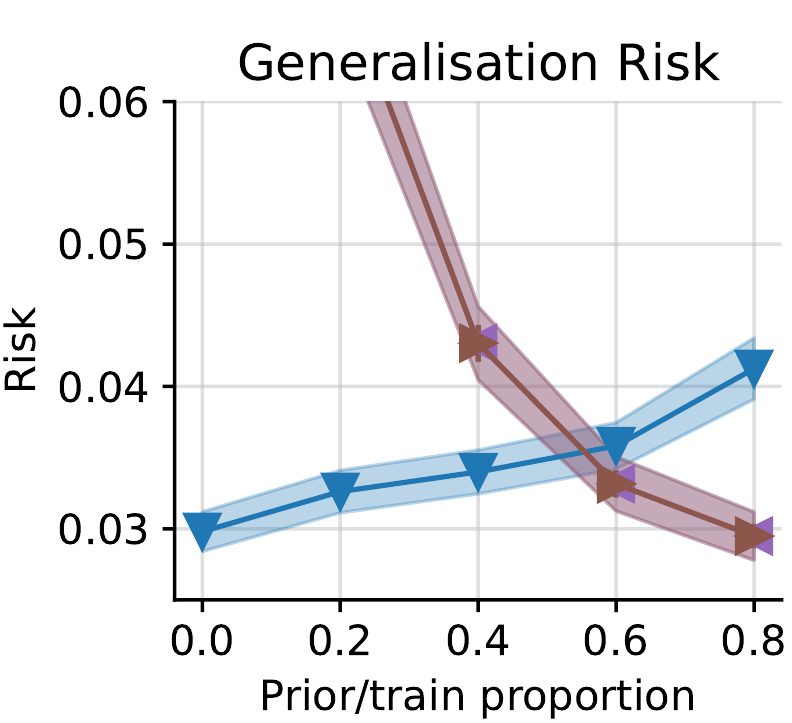}
  \caption{$N=60$ datapoints.}
\end{subfigure} 
\caption{Average generalisation bound and actual generalisation risk \textbf{for MLP-NP with post-hoc optimisation} ($\pm$ two standard errors) for Catoni ($\textcolor{plotblue}{\blacktriangledown}$),
Chernoff test set bound
($\textcolor{plotpurple}{\blacktriangleleft}$), 
and binomial tail test set bound 
($\textcolor{plotbrown}{\blacktriangleright}$). All bounds hold with failure probability $\delta = 0.1$.
} \label{fig:gen_bounds_1d_MLP_post_opt}
\end{figure}

\begin{figure}[h]
\centering
\begin{subfigure}{.49\textwidth}
  \centering
  \includegraphics[width=.49\linewidth]{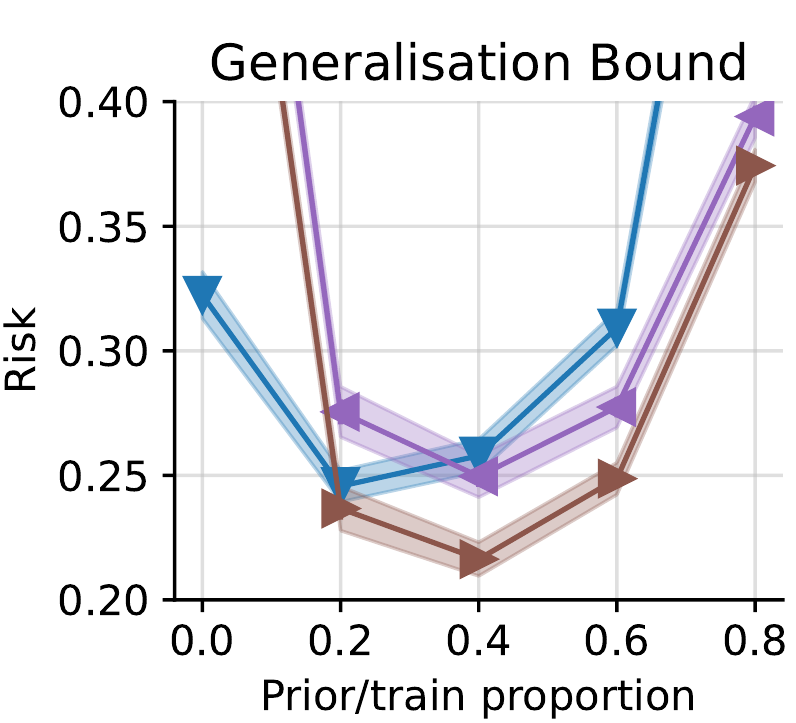}
  \includegraphics[width=.49\linewidth]{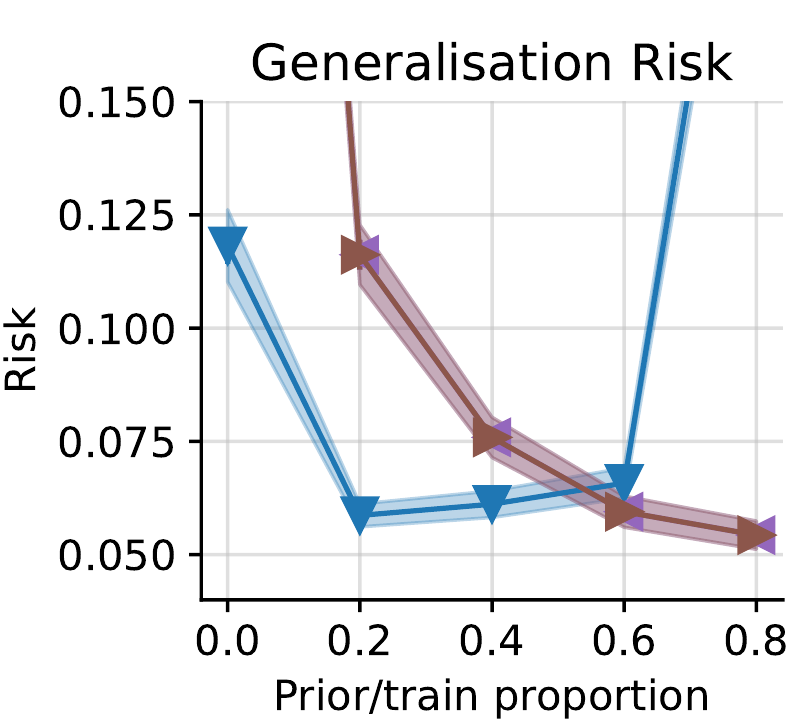}
  \caption{$N=30$ datapoints.}
\end{subfigure}%
\begin{subfigure}{.49\textwidth}
  \centering
  \includegraphics[width=.49\linewidth]{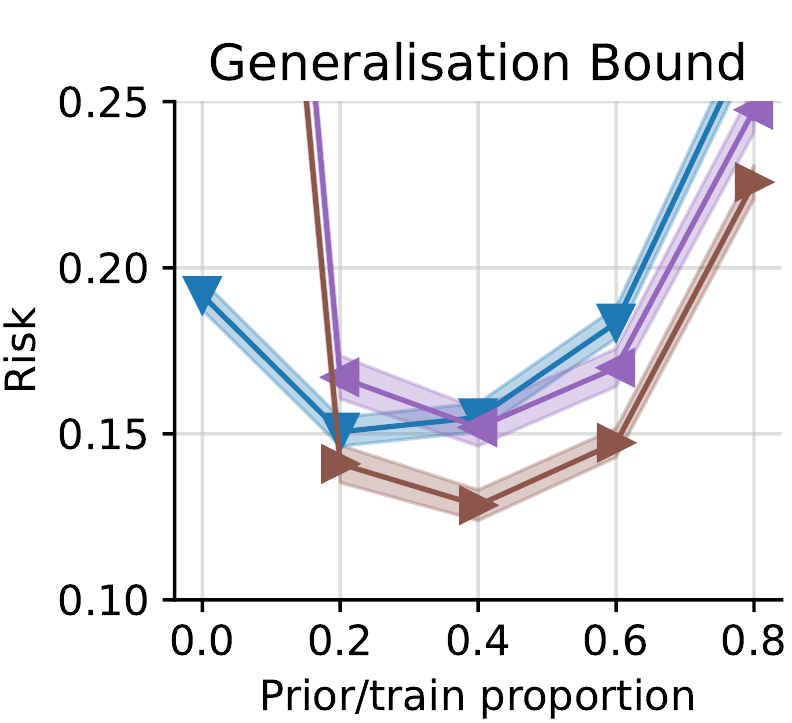}
  \includegraphics[width=.49\linewidth]{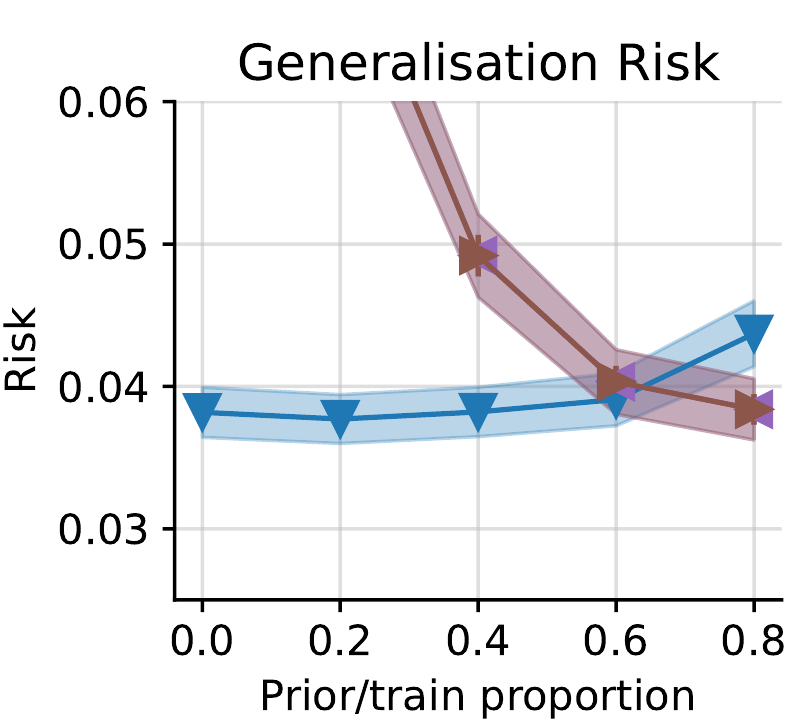}
  \caption{$N=60$ datapoints.}
\end{subfigure} 
\caption{Average generalisation bound and actual generalisation risk \textbf{for MLP-NP without post-hoc optimisation} ($\pm$ two standard errors) for Catoni ($\textcolor{plotblue}{\blacktriangledown}$), 
Chernoff test set bound
($\textcolor{plotpurple}{\blacktriangleleft}$), 
and binomial tail test set bound 
($\textcolor{plotbrown}{\blacktriangleright}$). All bounds hold with failure probability $\delta = 0.1$.} \label{fig:gen_bounds_MLP_1d_no_post_opt}
\end{figure}

\clearpage
\printbibliography



\end{document}